\documentclass[]{article}

\PassOptionsToPackage{numbers,sort&compress}{natbib}
\usepackage[preprint]{neurips_2026} %, final

\usepackage[T1]{fontenc}
\usepackage{microtype}
\usepackage[utf8]{inputenc} % allow utf-8 input

\usepackage{amsmath,amsthm,amsfonts,amssymb}

\DeclareMathAlphabet{\mathmybb}{U}{bbold}{m}{n}

\usepackage{hyperref}
\usepackage[capitalise]{cleveref}
\usepackage{enumitem}
\usepackage{url}

\usepackage{xcolor}
\usepackage{subcaption}
\usepackage{booktabs}

\usepackage{tikz}
\usetikzlibrary{
    calc,
    angles,
    plotmarks,
    shapes,
    fit,
    decorations.markings
}

\usepackage{tikz-cd}

\usepackage{pgfplots}
\usepackage{pgfplotstable}
\pgfplotsset{compat=newest}
\usepgfplotslibrary{statistics,groupplots}
\pgfplotsset{
    discard if not/.style 2 args={
        x filter/.code={
            \edef\tempa{\thisrow{#1}}
            \edef\tempb{#2}
            \ifx\tempa\tempb
            \else
                
            \fi
        }
    }
}

\pgfdeclarelayer{back}
\pgfdeclarelayer{front}
\pgfsetlayers{back,main,front}

\usepackage{algorithm}
\usepackage{algpseudocode}

\usepackage{mdframed}

\newcommand{\bb}{\boldsymbol{b}}
\newcommand{\bc}{\boldsymbol{c}}
\newcommand{\be}{\boldsymbol{e}}
\newcommand{\bm}{\boldsymbol{m}}
\newcommand{\bu}{\boldsymbol{u}}
\newcommand{\bv}{\boldsymbol{v}}
\newcommand{\bx}{\boldsymbol{x}}
\newcommand{\by}{\boldsymbol{y}}
\newcommand{\bz}{\boldsymbol{z}}

\newcommand{\bA}{\boldsymbol{A}}
\newcommand{\bI}{\boldsymbol{I}}
\newcommand{\bJ}{\boldsymbol{J}}
\newcommand{\bK}{\boldsymbol{K}}
\newcommand{\bM}{\boldsymbol{M}}
\newcommand{\bP}{\boldsymbol{P}}
\newcommand{\bQ}{\boldsymbol{Q}}
\newcommand{\bW}{\boldsymbol{W}}
\newcommand{\bX}{\boldsymbol{X}}

\newcommand{\bmu}{\boldsymbol{\mu}}
\newcommand{\balpha}{\boldsymbol{\alpha}}
\newcommand{\beps}{\boldsymbol{\epsilon}}

\newcommand{\R}{\mathbb{R}}

\makeatletter
\newsavebox{\@brx}
\newcommand{\llangle}[1][]{%
  \savebox{\@brx}{\(\m@th{#1\langle}\)}%
  \mathopen{\copy\@brx\kern-0.5\wd\@brx\usebox{\@brx}}}
\newcommand{\rrangle}[1][]{%
  \savebox{\@brx}{\(\m@th{#1\rangle}\)}%
  \mathclose{\copy\@brx\kern-0.5\wd\@brx\usebox{\@brx}}}
\makeatother

\newtheorem{theorem}{Theorem}
\newtheorem{lemma}{Lemma}
\newtheorem{corollary}{Corollary}
\newtheorem{proposition}{Proposition}

\crefname{conjecture}{Conjecture}{Conjectures}
\crefname{claim}{Claim}{Claims}

\theoremstyle{definition}
\newtheorem{definition}{Definition}

\crefname{assumption}{Assumption}{Assumptions}
\definecolor{c0}{HTML}{4477AA}
\definecolor{c1}{HTML}{EE6677}
\definecolor{c10}{HTML}{228833}
\definecolor{c100}{HTML}{CCBB44}
\definecolor{c1000}{HTML}{AA3377}
 
\pgfplotstableread[col sep=comma]{data/summary_metrics.csv}\summarydata
\pgfplotstableread[col sep=comma]{data/history_gamma_0.csv}\histzero
\pgfplotstableread[col sep=comma]{data/history_gamma_1.csv}\histone
\pgfplotstableread[col sep=comma]{data/history_gamma_10.csv}\histten
\pgfplotstableread[col sep=comma]{data/history_gamma_100.csv}\hstone
\pgfplotstableread[col sep=comma]{data/history_gamma_1000.csv}\histthousand

\definecolor{vir1}{HTML}{440154}
\definecolor{vir2}{HTML}{414487}
\definecolor{vir3}{HTML}{2a788e}
\definecolor{vir4}{HTML}{22a884}
\definecolor{vir5}{HTML}{7ad151}
\definecolor{vir6}{HTML}{fde725}

\title{The Geometric Structure of Models\\Learning Sparse Data}
\author{
  Thomas Walker\thanks{Correspondence: \texttt{thomas.walker@rice.edu}} \\
  Rice University\\
  \And
  T. Mitchell Roddenberry \\
  Rice University\\
  \AND
  Ahmed Imtiaz Humayun \\
  Rice University\thanks{Now at Google Research} \\
  \And
  Randall Balestriero \\
  Brown University \\
  \And
  Richard Baraniuk \\
  Rice University
}

\begin{document}

\maketitle

\begin{abstract}
    The manifold hypothesis (MH) is often used to explain how machine learning can overcome the {\em curse of dimensionality}.
    However, the MH is only applicable in regimes where the training data provides a sufficiently dense sample of the underlying low-dimensional data manifold, or where such a low-dimensional manifold is conceivably present.
    We describe the regimes where the MH is not applicable as \textbf{sparse}.
    In this paper, we demonstrate that models succeed in the sparse regime by exploiting a highly structured local geometry, a property we formalize as \textbf{normal alignment}. 
    We prove that normal-aligned classifiers—whose input-output Jacobians are rank-one and align perfectly with the training data—minimize the training objective under norm constraints and achieve maximal local robustness under a non-zero Jacobian constraint. 
    For continuous piecewise-affine deep networks, normal alignment manifests geometrically as \textit{centroid alignment} within the network's induced power diagram partition and results from the feature-learning regime. 
    Motivated by these theoretical insights, we introduce \textbf{GrokAlign}, a regularization strategy that actively induces normal alignment.
    We demonstrate that GrokAlign significantly accelerates the training dynamics of deep networks relevant to the grokking phenomenon.
    Furthermore, we apply the principle of normal alignment to Recursive Feature Machines (RFMs) to introduce \textbf{Recursive Feature Alignment Machines (RFAMs)}.
    We show that RFAMs exhibit greater adversarial robustness compared to RFMs when trained on tabular data.
\end{abstract}

\section{Introduction}

The manifold hypothesis (MH) states that high-dimensional natural data has the majority of its structure in a low-dimensional subspace~\cite{tenenbaumGlobalGeometricFramework2000,roweisNonlinearDimensionalityReduction2000}.
Machine learning practitioners use this as a post-hoc explanation of how models can overcome the {\em curse of dimensionality} and extract patterns from high-dimensional datasets.

The MH assumes a continuous low-dimensional structure; thus, to be applicable in practice, it is necessary that the training data is a sufficiently dense sample.
Moreover, the assumed low-dimensional structure ought to be sufficiently regular with respect to the natural inductive biases of machine learning models.
Indeed, many common machine learning practices can be viewed as ensuring these conditions hold.
For example, data augmentation ensures a dense sampling of the manifold~\cite{zhangMixupEmpiricalRisk2018,yunCutMixRegularizationStrategy2019,takahashiDataAugmentationUsing2020,zhongRandomErasingData2020,cubukAutoaugmentLearningAugmentation2019a,hendrycksAugMixSimpleMethod2020}, and model architectures are designed to complement the known structures of the data~\cite{lecunBackpropagationAppliedHandwritten1989,vaswaniAttentionAllYou2017}. 

Yet, machine learning models routinely succeed in regimes where the manifold hypothesis breaks down. 
This occurs primarily under two conditions: first, in data-scarce settings where standard data augmentation cannot be readily applied, making a dense sample of the input space impossible to obtain. 
Second, in inherently discrete domains—such as the algorithmic task of modular addition—the concept of a continuous underlying data manifold remains entirely inapplicable, even under infinite data assumptions.

In this paper, we show that the success of machine learning models in these {\em sparse} settings can be similarly attributed to the exploitation of low-dimensional structures.

Since the phenomenon of grokking -- performance on the train set saturating well before performance on a test set saturates~\cite{powerGrokkingGeneralizationOverfitting2022} -- is a canonical example of a sparse setting, we use these insights to introduce the \textbf{GrokAlign} strategy for accelerating grokking training dynamics.
Similarly, as tabular data can be described as sparse, we introduce \textbf{Recursive Feature Alignment Machines (RFAMs)} to improve the robustness of Recursive Feature Machines (RFMs)~\cite{radhakrishnanMechanismFeatureLearning2024} when trained on tabular data.

\section{Sparse Datasets and Normal Aligned Classifiers}\label{sec:sparsity_and_alignment}

Let $f:\R^d\to\R^C$ be a classifier that predicts the class of an input $\bx\in\R^d$ as $\mathrm{argmax}(f(\bx))$.
Let $\bJ_{\bx}\in\R^{C\times d}$ denote the input-output Jacobian of $f$ at $\bx$, with the $c^\text{th}$ row denoted as $\bJ^{(c)}_{\bx}$, and $\boldsymbol{\nu}_{\bx}^{(k)}\in\R^d$ denoting its $k^{\text{th}}$ top right singular vector with corresponding singular value $\sigma_{\bx}^{(k)}\in\R$.
The output of the classifier can be decomposed as $f(\bx)=\bJ_{\bx}\bx+\bb_{\bx}$, where $\bb_{\bx}\in\R^C$ is the offset of $f$ at $\bx$.
Classifiers are trained on a data set $\mathcal{D}=\left\{\left(\bx_i,y_i\right)\right\}_{i=1}^n$ -- where $\bx_i\in\R^d$ and $y_i\in\R$ is its corresponding class -- under some loss function $\mathcal{L}=\frac{1}{n}\sum_{i=1}^n\ell\left(f\left(\bx_i\right),y_i\right)=:\frac{1}{n}\sum_{i=1}^n\ell_i$.

\begin{definition}\label{def:jacobian_aligned}
    A classifier $f$ is \textbf{normal-aligned} to a dataset $\mathcal{D}$ if for every $i\in\{1,\dots,n\}$ there exists $\bc_{i}\in\R^C$ such that $\bJ_{\bx_i}=\bc_i\bx_i^\top$.
\end{definition}

That is, a model $f$ is normal-aligned to $\mathcal{D}$ if, at each training point $\bx_i$ in $\mathcal{D}$, it only varies along the direction of $\bx_i$.
In particular, the Jacobian of a model at $\bx_i$ is {\em rank-one} with rows that are scalar multiples of $\bx_i$.

Recalling how the output of the classifier is reconstructed as $f(\bx_i)=\bJ_{\bx_i}\bx_i+\bb_{\bx_i}$, normal alignment resonates strongly with the concept of a {\em matched filter bank} from classical detection theory (radar and sonar, particularly)~\cite{balestrieroSplineTheoryDeep2018,balestrieroMadMaxAffine2020}.
A matched filter bank classifies a signal by selecting the template that maximizes the inner product between the signal and the template.
When the input signal $\bx_i$, this is done optimally by setting the template equal to $\bx_i$ (by the Cauchy-Schwarz inequality).
Thus, a normal-aligned classifier precisely implements the program of an optimal match filter on the training data.

\subsection{Sparse Datasets}

The structure of a trained model can reveal information about the data it was trained on.
The geometry of a normal-aligned classifier is locally one-dimensional, since variations only occur along the directions of the training data.
Meaning, from a normal-aligned model's perspective, there is no underlying structure connecting the dataset.
In other words, the dataset is {\em sparse}.

\begin{definition}
    A dataset $\mathcal{D}$ is \textbf{sparse} with respect to a class of classifiers $\mathcal{H}$ if there exists a subclass of classifiers $\mathcal{H}^{\prime}$ normal aligned to $\mathcal{D}$ such that for every $i=\left\{1,\dots,n\right\}$ the value of $\ell_i$ can be changed independently of $\ell_j$ for $j\neq i$.
\end{definition}

Normal-aligned classifiers are realizable for parameterized models such as deep networks.
In \Cref{sec:constructing_normal_aligned_deep_networks}, we demonstrate how a single hidden-layer deep network can be constructed to be normal aligned to a data set $\mathcal{D}$.
The construction demonstrates that as the size of $\mathcal{D}$ increases (i.e., it becomes more dense), a normal-aligned classifier becomes increasingly irregular from the perspective of weight norm.

\subsection{Properties of Normal Aligned Classifiers}\label{sec:properties}

\paragraph{Optimizing the training objective.}

Although normal alignment appears to be a restrictive property, it turns out to be optimal in the sparse regime.

\begin{theorem}\label{thm:optimal_jacobian}
    Let $\ell$ be a convex, non-negative, and differentiable and suppose $\mathcal{D}$ is sparse.
    Then under the constraint that $\left\Vert\bJ_{\bx_i}\right\Vert_F^2+\left\Vert\bb_{\bx_i}\right\Vert_2^2\leq\alpha$ for $i\in\{1,\dots,n\}$, the classifier which minimizes $\mathcal{L}$ is such that $\bJ_{\bx_i}=\bc_i\bx_i^{\top}$ for $i\in\{1,\dots,n\}$, and where $\bc_i$ only depends on $\bx_i$ through its norm and $y_i$.
\end{theorem}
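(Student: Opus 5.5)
The plan is to use sparsity to reduce the global problem to $n$ independent pointwise problems, and then solve each one by projecting the Jacobian onto the direction of $\bx_i$. Sparsity of $\mathcal{D}$ means each $\ell_i$ can be changed without affecting the others. Since $\mathcal{L}=\frac1n\sum_i\ell_i$, minimizing $\mathcal{L}$ under the per-point constraints then splits into $n$ separate problems. Each one reads
\[
\min_{\bJ\in\R^{C\times d},\,\bb\in\R^C}\ \ell\left(\bJ\bx_i+\bb,\,y_i\right)\quad\text{s.t.}\quad \|\bJ\|_F^2+\|\bb\|_2^2\le\alpha,
\]
where we use the decomposition $f(\bx_i)=\bJ_{\bx_i}\bx_i+\bb_{\bx_i}$. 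The substantive point, which I expect to be the main obstacle, is to justify that the pair $(\bJ_{\bx_i},\bb_{\bx_i})$ can be treated as a free variable for each $i$. This should follow from the definition of sparsity: the subclass $\mathcal{H}'$ is normal aligned and has independently adjustable losses. I would argue that the optimum of the relaxed pointwise problem is attained inside $\mathcal{H}'$, and hence by an admissible classifier.

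\textbf{Step 1 (projection argument).} Fix $i$ and write any feasible $\bJ$ as $\bJ=\bJ\bP+\bJ(\bI-\bP)$, with $\bP=\bx_i\bx_i^\top/\|\bx_i\|_2^2$ (assuming $\bx_i\neq 0$). The output depends on $\bJ$ only through $\bJ\bx_i=\bJ\bP\bx_i$, so replacing $\bJ$ by $\bJ\bP$ leaves $\ell_i$ unchanged. The two pieces have orthogonal rows, so $\|\bJ\|_F^2=\|\bJ\bP\|_F^2+\|\bJ(\bI-\bP)\|_F^2$, and the replacement can only shrink the norm. If $\bJ(\bI-\bP)\neq0$, the freed budget can be spent scaling $\bJ\bP$ and $\bb$ up along a descent direction of the convex $\ell$, which weakly improves the loss. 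In either case some minimizer has $\bJ=\bJ\bP=\bc_i\bx_i^\top$ with $\bc_i=\bJ\bx_i/\|\bx_i\|_2^2$. To handle ties, I would show that whenever the constraint is active at a nonstationary point, any nonzero orthogonal component strictly wastes budget, so every minimizer is rank one. If instead an unconstrained minimizer lies in the interior, I would select the minimum-norm minimizer, which is again rank one by the same decomposition.

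\textbf{Step 2 (dependence of $\bc_i$).} Substitute $\bJ=\bc\bx_i^\top$. The objective becomes $\ell(\|\bx_i\|_2^2\bc+\bb,y_i)$ and the constraint becomes $\|\bx_i\|_2^2\|\bc\|_2^2+\|\bb\|_2^2\le\alpha$. Both depend on $\bx_i$ only through $\|\bx_i\|_2$, together with $y_i$. The resulting problem is convex, since $\ell$ is convex, the map is linear, and the feasible set is a ball. Its optimal $(\bc_i,\bb_i)$ can be characterized by KKT conditions: with $\bz=\|\bx_i\|^2\bc+\bb$ we get $\nabla\ell(\bz,y_i)\|\bx_i\|^2=-2\lambda\|\bx_i\|^2\bc$ and $\nabla\ell=-2\lambda\bb$, so $\bc=\bb$. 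This gives $\bc_i$ as a function of $(\|\bx_i\|,y_i)$ alone, choosing canonically among minimizers if they are not unique. Non-negativity of $\ell$ guarantees the infimum is finite, and differentiability justifies the KKT conditions.

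Finally, I would assemble the per-point minimizers through sparsity into a single classifier in $\mathcal{H}'$ that minimizes $\mathcal{L}$ and satisfies $\bJ_{\bx_i}=\bc_i\bx_i^\top$ for every $i$.
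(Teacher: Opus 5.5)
Your proposal is correct, granting what the paper's proof also takes for granted implicitly: that sparsity lets you treat $(\bJ_{\bx_i},\bb_{\bx_i})$ as free per-point variables. It reaches alignment by a different route from the paper's.

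The paper works as follows:
\begin{itemize}
\item It folds the offset into an augmented input $(\bx_i,1)$.
\item It writes the KKT stationarity condition $\partial\ell_i/\partial\bJ^{(c)}_{\bx_i}+2\lambda\bJ^{(c)}_{\bx_i}=\boldsymbol{0}$, with the norm constraint imposed as an equality.
\item For simplicity, it takes a loss that treats all wrong classes symmetrically.
\item It checks that the ansatz $\bJ^{(c)}_{\bx_i}\in\{a_1\bx_i,a_2\bx_i\}$ solves the resulting scalar system. Convexity then makes this KKT point a minimizer, and $\bb_{\bx_i}=\bc_i$ is read off from the augmentation.
\end{itemize}

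Your projection argument is more direct: the component of $\bJ$ orthogonal to $\bx_i$ is invisible to $\ell_i$ but consumes budget. This route needs neither the symmetry assumption nor a guessed ansatz, and it proves more than the paper does.
\begin{itemize}
\item It gives necessity: every minimizer is aligned whenever $\nabla_{\bz}\ell\neq\boldsymbol{0}$ at the optimum. The paper only exhibits one aligned KKT point.
\item It covers the non-binding case through your minimum-norm selection. The paper's equality constraint silently excludes that case.
\end{itemize}
In exchange, the paper's route yields explicit formulas for $\bc_i$, which it uses in \Cref{sec:alignment_loss_functions}. Your Step~2 recovers the same content, since stationarity gives $\bc_i=\bb_{\bx_i}=-\nabla_{\bz}\ell/(2\lambda)$.

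Two remarks.
\begin{itemize}
\item If you run your projection in the augmented space $(\bx_i,1)$, as the paper does, you get $\bb_{\bx_i}=\bc_i$ for free. Row-wise Cauchy--Schwarz then shows the attainable outputs are exactly the ball $\Vert\bz\Vert_2\le\sqrt{\alpha(\Vert\bx_i\Vert_2^2+1)}$, with the boundary reached only by aligned Jacobians. So the dependence of $\bc_i$ on $\Vert\bx_i\Vert_2$ and $y_i$ alone is immediate, without KKT.
\item Do not expect to \emph{derive} realizability of the pointwise optimum from the stated definition of sparsity. That definition only asserts that the losses can be adjusted independently, not that arbitrary norm-bounded aligned pairs $(\bc_i,\bb_{\bx_i})$ are attainable. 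State it explicitly as the operative assumption, which is exactly what the paper's proof uses without saying so.
\end{itemize}
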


\textit{Proof.} See \Cref{proof:optimal_jacobian}. \qed

\Cref{thm:optimal_jacobian} says that at the optimum of the training objective -- under a constraint on the norm of its Jacobian and offset terms at the training data -- the classifier is normal-aligned.
The norm constraint of \Cref{thm:optimal_jacobian} is analogous to the one imposed by weight-decay; it is also known to prevent ``gradient explosion'' \cite{bengioLearningLongtermDependencies1994,pascanuDifficultyTrainingRecurrent2013}.

We explore the nature of the alignment for different loss functions in \Cref{sec:alignment_loss_functions}.

\paragraph{Exhibiting robustness.}

In practice, it is desirable to have a robust classifier that correctly classifies the training data.
That is, a classifier that is invariant to small perturbations of the input.
To study this formally, fix an input point $\bx$ with true class $y$.
Let $\mathcal{G}_{\bx,\lambda}$ be the set of linear models $g(\bz)=\bJ\bz+\bb$ such that $\arg\max\left(g(\bx)\right)=y$ and $\frac{\left\vert[\bb]_y-[\bb]_c\right\vert}{\left\Vert\bJ^{(y)}-\bJ^{(c)}\right\Vert_2}\leq\lambda$ for $c\neq y$.

While this second condition prevents the trivial solution of a constant function with zero Jacobian, it naturally arises from standard regularized training. 
Specifically, weight decay strictly bounds the magnitude of the local offset $\bb_x$. 
Consequently, to obtain a suitable classification margin, the network is forced to maintain non-zero Jacobians and satisfy this condition. 
Forcing Jacobians to zero geometrically collapses the model to a bounded constant function within each activation polytope, destroying its expressivity and leading to severe performance degradation. 
We outline this argument in more detail in \Cref{sec:zero_jacobians} as well as verify it empirically.

Let the local robustness of $g\in\mathcal{G}_{\bx,\lambda}$ be given by
\begin{equation*}
    \rho\left(\bx;g\right)=\inf\left(\left\{\left\Vert\boldsymbol{\epsilon}\right\Vert_2:\arg\max(g(\bx+\boldsymbol{\epsilon}))\neq y\right\}\right),
\end{equation*}
such that the maximum local robustness is given by $\rho(\bx)=\sup_{g\in\mathcal{G}_{\bx,\lambda}}\left(\rho\left(\bx;g\right)\right)$.
For an arbitrary classifier $f$, we can speak of its local robustness at $\bx$ as $\rho\left(\bx;\tilde{f}\right)$ where $\tilde{f}\in\mathcal{G}_{\bx,\lambda}$ is given by $\tilde{f}(\bz)=\bJ_{\bx}\bz+\bb_{\bx}$.

\begin{theorem}\label{thm:optimally_robust}
    The normal aligned classifier $f$ of \cref{thm:optimal_jacobian} achieves maximal local robustness on $\mathcal{D}$.
\end{theorem}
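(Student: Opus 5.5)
The plan is to compute $\rho(\bx;g)$ in closed form for an arbitrary affine $g\in\mathcal{G}_{\bx,\lambda}$, bound it above uniformly over the class, and then check that the linearisation of the normal-aligned classifier of \cref{thm:optimal_jacobian} at each $\bx_i$ attains this bound. For $g(\bz)=\bJ\bz+\bb$ with $\arg\max g(\bx)=y$, the decision boundary between $y$ and $c$ is the hyperplane $\{\bz:(\bJ^{(y)}-\bJ^{(c)})\bz+[\bb]_y-[\bb]_c=0\}$. The minimal perturbation is therefore the distance from $\bx$ to the nearest such hyperplane:
\begin{equation*}
\rho(\bx;g)=\min_{c\neq y}\frac{(\bJ^{(y)}-\bJ^{(c)})\bx+[\bb]_y-[\bb]_c}{\left\Vert\bJ^{(y)}-\bJ^{(c)}\right\Vert_2}.
\end{equation*}
The numerator is positive because $y$ is the argmax, and a standard argument rules out other classes overtaking $y$ earlier along the segment.

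For the upper bound, write $\bu_c=\bJ^{(y)}-\bJ^{(c)}$ and apply Cauchy--Schwarz:
\begin{equation*}
\frac{\bu_c^\top\bx}{\Vert\bu_c\Vert_2}\leq\Vert\bx\Vert_2,
\end{equation*}
with equality if and only if $\bu_c$ is a positive multiple of $\bx$. The offset constraint defining $\mathcal{G}_{\bx,\lambda}$ gives $\frac{[\bb]_y-[\bb]_c}{\Vert\bu_c\Vert_2}\leq\lambda$. Together these yield $\rho(\bx;g)\leq\Vert\bx\Vert_2+\lambda$ for every $g$, so $\rho(\bx)\leq\Vert\bx\Vert_2+\lambda$. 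The bound is attained, for example by $\bJ=\bc\bx^\top$ with $[\bc]_y>[\bc]_c$ and offsets saturating the $\lambda$ constraint. Hence $\rho(\bx)=\Vert\bx\Vert_2+\lambda$.

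Next I would verify that the classifier $f$ of \cref{thm:optimal_jacobian} attains this value at each training point. There $\bJ_{\bx_i}=\bc_i\bx_i^\top$, so $\bu_c=([\bc_i]_{y_i}-[\bc_i]_c)\bx_i$ is parallel to $\bx_i$, and the Cauchy--Schwarz step holds with equality provided $[\bc_i]_{y_i}>[\bc_i]_c$ for all $c\neq y_i$. I would extract this sign condition from the optimality in \cref{thm:optimal_jacobian}. For a convex loss that rewards the correct logit (e.g.\ cross-entropy), the optimal $\bc_i$ should up-weight $y_i$ and symmetrically down-weight the other classes, since $\bc_i$ depends only on $\Vert\bx_i\Vert$ and $y_i$. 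I would confirm this from the first-order (KKT) conditions of the constrained problem, where $\bc_i$ is proportional to $-\nabla\ell$.

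The main obstacle is the offset term: matching $\lambda$ also requires that the optimal $\bb_{\bx_i}$ saturate the ratio constraint. \Cref{thm:optimal_jacobian} does not obviously deliver this, since it uses a Frobenius norm budget rather than the $\lambda$ constraint. I would handle this by interpreting ``maximal'' as maximal over the Jacobian part for any admissible offset. That is, I would show that for fixed offsets, normal alignment maximizes every term $\frac{\bu_c^\top\bx}{\Vert\bu_c\Vert}$ simultaneously, and hence maximizes the minimum over $c$. I would then note that the KKT conditions in the proof of \cref{thm:optimal_jacobian} make $\bb_{\bx_i}$ a multiple of the same vector as $\bc_i$, so the offset margin ratio is determined jointly with the Jacobian. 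Within the class $\mathcal{G}_{\bx_i,\lambda}$ for the corresponding $\lambda$, this attains the bound.
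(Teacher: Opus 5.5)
Your upper bound follows the paper's own route. The paper splits $\bJ^{(y)}-\bJ^{(c)}$ into a component along $\bx$ and one orthogonal to it, which is your Cauchy--Schwarz step. Combined with the ratio constraint, this gives $\rho(\bx)\le\Vert\bx\Vert_2+\lambda$. Equality holds iff, for every $c\neq y$, $\bJ^{(y)}-\bJ^{(c)}$ is a positive multiple of $\bx$ and $([\bb]_y-[\bb]_c)/\Vert\bJ^{(y)}-\bJ^{(c)}\Vert_2=\lambda$. Your remark that the bound is attained, so $\rho(\bx)=\Vert\bx\Vert_2+\lambda$, is a point the paper leaves implicit.

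The step you flag as the main obstacle is a real gap. The paper glosses over it with ``Clearly''. Reinterpreting ``maximal'' as maximal over the Jacobian part for fixed offsets proves a weaker statement than the theorem, so drop that route and close the gap by computation.

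\begin{itemize}
\item In the augmented formulation of \cref{thm:optimal_jacobian}, the optimum is $(\bJ_{\bx_i},\bb_{\bx_i})=\bc_i(\bx_i^\top,1)$. So $\bb_{\bx_i}=\bc_i$ exactly, not just some multiple, and $\tilde f(\bz)=\bc_i(\bx_i^\top\bz+1)$.
\item Every pairwise ratio is then $\vert[\bc_i]_{y_i}-[\bc_i]_c\vert/(\vert[\bc_i]_{y_i}-[\bc_i]_c\vert\,\Vert\bx_i\Vert_2)=1/\Vert\bx_i\Vert_2$.
\item The decision boundary is the single hyperplane $\{\bz:\bx_i^\top\bz=-1\}$, so $\rho(\bx_i;\tilde f)=\Vert\bx_i\Vert_2+1/\Vert\bx_i\Vert_2$.
\end{itemize}

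This equals $\rho(\bx_i)=\Vert\bx_i\Vert_2+\lambda$ precisely when $\lambda=1/\Vert\bx_i\Vert_2$. For larger $\lambda$ the aligned classifier is strictly suboptimal. For smaller $\lambda$ it is not even in $\mathcal{G}_{\bx_i,\lambda}$.

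So the theorem holds only with this point-dependent $\lambda$. A single $\lambda$ works on all of $\mathcal{D}$ only if the $\bx_i$ have equal norm. Your ``corresponding $\lambda$'' should say this explicitly.

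The sign condition $[\bc_i]_{y_i}>[\bc_i]_c$ also needs care. You need it both for membership in $\mathcal{G}_{\bx_i,\lambda}$ and for equality in Cauchy--Schwarz with a \emph{positive} multiple. It does not follow from convexity, non-negativity and differentiability alone. It does follow from $\bc_i\propto-\nabla_{\bz}\ell$ with a positive multiplier (active constraint) for losses such as cross-entropy or squared error. That is an extra assumption, and the paper also uses it tacitly.
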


\textit{Proof.} See \Cref{proof:optimally_robust}. \qed

To understand whether a normal-aligned classifier is achieved in practice, we consider the normal alignment and the effective rank of a model to the set $\mathcal{D}$ to be
\begin{equation*}
    \mathrm{na}(f;\mathcal{D})=\frac{1}{\vert\mathcal{D}\vert}\sum_{\bx\in\mathcal{D}}\left\vert\frac{\left\langle\boldsymbol{\nu}_{\bx}^{(1)},\bx\right\rangle}{\left\Vert\boldsymbol{\nu}_{\bx}^{(1)}\right\Vert_2\left\Vert\bx\right\Vert_2}\right\vert,\quad\text{and}\quad\mathrm{er}(f;\mathcal{D}):=\frac{1}{\vert\mathcal{D}\vert}\sum_{\bx\in\mathcal{D}}\frac{\sum_{r=1}^C\left(\sigma_{\bx}^{(r)}\right)^2}{\left(\sigma^{(1)}_{\bx}\right)^2},
\end{equation*}
respectively.

\subsection{The Emergence of Sparsity}

Sparsity depends on both the dataset and the model.
On the one hand, if a dataset has a low sample density, then it is more likely to be sparse.
In the image domain, techniques such as data augmentation~\cite{zhangMixupEmpiricalRisk2018,yunCutMixRegularizationStrategy2019,takahashiDataAugmentationUsing2020,zhongRandomErasingData2020,cubukAutoaugmentLearningAugmentation2019a,hendrycksAugMixSimpleMethod2020} can increase the sample density of datasets, moving them away from the sparse regime.
In the left panel of \Cref{fig:data_sparsity}, we train a fully connected deep network on a subset of MNIST~\cite{lecunGradientBasedLearningApplied1998} with different amounts of data augmentation.
As the amount of data augmentation is increased, the amount of normal alignment decreases.

On the other hand, if a model's capacity is increased, its ability to normal-align to a given dataset increases.
Similarly, different inductive biases of a model architecture may make it more or less capable of exhibiting normal alignment.
With the right panel of \Cref{fig:data_sparsity}, we observe that as model capacity increases, a greater amount of normal alignment is exhibited for a fixed amount of CIFAR10~\cite{krizhevskyLearningMultipleLayers2009} training data.

\begin{figure}[ht]
    \centering
    \begin{tikzpicture}
        \begin{axis}[
            xlabel={Training Step},
            ylabel={Normal Alignment},
            width=0.48\textwidth,
            height=4.5cm,
            grid=major,
            grid style={dashed, gray!30},
            xmode=log,
            legend pos=north west,
            legend columns=2,
            legend style={font=\scriptsize, cells={anchor=west}},
        ]
        \addplot[
            color=vir1,
            thick
        ] table [
            col sep=comma, 
            x=step, 
            y=intensity_0.1
        ] {data/mnist_alignment.csv};
        \addlegendentry{0.1}
        \addplot[
            color=vir2,
            thick
        ] table [
            col sep=comma, 
            x=step, 
            y=intensity_0.2
        ] {data/mnist_alignment.csv};
        \addlegendentry{0.2}
        \addplot[
            color=vir3,
            thick
        ] table [
            col sep=comma, 
            x=step, 
            y=intensity_0.3
        ] {data/mnist_alignment.csv};
        \addlegendentry{0.3}
        \addplot[
            color=vir4,
            thick
        ] table [
            col sep=comma, 
            x=step, 
            y=intensity_0.5
        ] {data/mnist_alignment.csv};
        \addlegendentry{0.5}
        \addplot[
            color=vir5,
            thick
        ] table [
            col sep=comma, 
            x=step, 
            y=intensity_0.75
        ] {data/mnist_alignment.csv};
        \addlegendentry{0.75}
        \addplot[
            color=vir6,
            thick
        ] table [
            col sep=comma, 
            x=step, 
            y=intensity_1
        ] {data/mnist_alignment.csv};
        \addlegendentry{1.0}
        \end{axis}
    \end{tikzpicture}
    \hfill
    \begin{tikzpicture}
        \begin{axis}[
            xlabel={Subset Size},
            ylabel={Normal Alignment},
            width=0.48\textwidth,
            height=4.5cm,
            grid=major,
            grid style={dashed, gray!30},
            xmode=log,
            legend pos=north east,
            legend style={font=\scriptsize, cells={anchor=west}},
        ]
        \addplot[
            color=blue,
            mark=*,
            mark options={fill=blue},
            thick
        ] table [
            col sep=comma, 
            x=subset_size, 
            y=wrn28_2
        ] {data/c10_alignment.csv};
        \addlegendentry{WideResNet-28-2}
        \addplot[
            color=red,
            mark=square*,
            mark options={fill=red},
            thick
        ] table [
            col sep=comma, 
            x=subset_size, 
            y=wrn28_10
        ] {data/c10_alignment.csv};
        \addlegendentry{WideResNet-28-10}
        
        \end{axis}
    \end{tikzpicture}
    \caption{
    \textbf{Dataset sparsity is a property of the dataset and model.}
    In the left panel, we monitor the normal alignment during deep network training on a subset of MNIST across varying intensities of data augmentation.
    In the right panel, we train wide residual deep network architectures~\cite{zagoruyko2017wide} robustly on different subset sizes of CIFAR10.
    At the end of training, we monitor the models' normal alignment.
    For more experimental details, see \Cref{exp:data_sparsity}.
    }
    \label{fig:data_sparsity}
\end{figure}

\section{Normal Alignment for Different Model Architectures}\label{sec:alignment_model}

Thus far, the theory of normal alignment has been agnostic of the model's architecture.
Understanding the interplay between normal alignment and a model architecture can elucidate the {\em intrinsic} utility of different architectures, and inform strategies to improve their implementation.

In this section, we consider normal alignment in deep networks and recursive feature machines~\cite{radhakrishnanMechanismFeatureLearning2024}.
We also consider the simpler example of a Gaussian kernel logistic regression model in \Cref{sec:gaussian_kernel_logistic_regression_models}.

\subsection{Deep Networks}

To develop the theory for deep networks, we focus on continuous piecewise-affine (CPA) deep networks.
Neoclassical classifiers like ReLU DNs are exactly CPA, while modern classifiers like transformers are approximately CPA~\cite{balestrieroSplineTheoryDeep2018,balestrieroMadMaxAffine2020}.
Since Jacobians are computable for arbitrary differentiable functions, conclusions in the CPA setting will transfer to the general setting.

\subsubsection{Continuous Piecewise Affine Deep Networks}

Any deep network built using affine transformations (e.g., convolution, matrix multiplication) and piecewise linear operations (e.g., ReLU activation, max pooling) is a CPA spline~\cite{balestrieroSplineTheoryDeep2018,balestrieroMadMaxAffine2020}.
CPA deep networks have two tightly interconnected features: 
(i)~An irregular tessellation (aka tiling or partition) $\Omega$ of the $d$-dimensional input space into convex polytopes~\cite{balestrieroGeometryDeepNetworks2019}.
(ii)~A collection of affine mappings (one for each polytope) arranged such that the overall input-to-output mapping is continuous. 
These polytopes and affine mappings combine together into the representation
\begin{equation}
f(\bx)=\sum_{\omega\in\Omega}(\bA_{\omega}\bx+\bb_{\omega})\mathbb{I}_{\{\bx\in\omega\}},
\label{eq:spline1}
\end{equation}
where the ``slopes'' matrix $\bA_{\omega}$ and ``intercept'' vector $\bb_{\omega}$ define the affine transformation mapping all inputs from tile $\omega\in\Omega$ to the output.
While it is not explicit in \eqref{eq:spline1}, $\Omega$, $\bA_{\omega}$, and $\bb_{\omega}$ conspire such that the overall mapping $f$ is continuous.
% The affine mapping parameters $\bA_{\omega},\bb_{\omega}$ can be computed directly from the deep network parameters (weights and biases)~\cite{balestrieroSplineTheoryDeep2018,balestrieroMadMaxAffine2020}.
Clearly, we see that the Jacobian and offset of a CPA classifier at input point $\bx$ are given simply by $\bJ_{\bx}=\bA_{\omega}$ and $\bb_{\bx}=\bb_{\omega}$,
where $\bx\in\omega$.
Of course, the Jacobian does not exist at the tile boundaries, but this set has measure zero in the input space.

The input-space tiling $\Omega$ is implicitly defined in terms of the weights and biases of a CPA deep network.
In brief, for a ReLU DN (see~\cite{balestrieroSplineTheoryDeep2018,balestrieroMadMaxAffine2020} for more details), the inference computation at each neuron in a deep network layer involves the inner product of the layer's input with the corresponding row of the layer's weight matrix.
Combined with the additive bias term, this computation defines a hyperplane that divides the layer's input space into two half-spaces.
The tiles are formed by the combinatorial intersections of these half-spaces.
Chaining layers together leads to a subdivision process that creates an increasingly fine tiling~\cite{humayunSplineCamExactVisualization2023}.

To consider this more precisely, we introduce the following notation.
For a CPA deep network $f=\left(f^{(L)}\circ\dots\circ f^{(1)}\right)$, each layer $f^{d^{(l)}}:\R^{(l-1)}\to\R^{d^{(l)}}$ and sub-component $f^{(1\leftarrow l)}=\left(f^{(l)}\circ\dots\circ f^{(1)}\right):\R^d\to\R^{d^{(l)}}$ is also a CPA deep network. 
Let $\bA^{(l)}_{\omega^{(l)}}$, $\bb^{(l)}_{\omega^{(l)}}$, $\omega^{(l)}$, $\Omega^{(l)}$ and $\bA^{(1\leftarrow l)}_{\omega^{(1\leftarrow l)}}$, $\bb^{(1\leftarrow l)}_{\omega^{(1\leftarrow l)}}$, $\omega^{(1\leftarrow l)}$, $\Omega^{(1\leftarrow l)}$ be analogous notation as introduced for CPA deep networks.

\subsubsection{The Theory of Centroid Alignment}

The $l^{\text{th}}$ layer of a CPA deep network partitions its input space as a power diagram $\Omega^{(l)}\subseteq\R^{(l-1)}$, making the partition of the input space of a deep network a power diagram subdivision~\cite{balestrieroGeometryDeepNetworks2019}.
Power diagrams are closely related to Voronoi diagrams but employ a different defining distance~\cite{rogersPackingCovering1964}.

\begin{definition}
Given a collection of $Q$ {\bf centroid-radius} pairs 
$\left\{\left(\bmu_q,r_q\right)\right\}_{q=1}^{Q}\subseteq\R^d\times\R$, 
a {\bf power diagram} partitions $\R^d$ into $Q$ disjoint tiles $\Omega=\{\omega_1,\dots,\omega_Q\}$ such that 
$\cup_{q=1}^Q\omega_q=\R^d$,
with each tile given by
\begin{equation}\label{eq:power_diagram}
    \omega_{q}=\left\{\bx\in\R^d:q=\arg\min_{q'\in\{1,\dots,Q\}}\left(\left\Vert\bx-\bmu_{q'}\right\Vert_2^2-r_{q'}\right)\right\}.
\end{equation}
\end{definition}

The distance minimized in \eqref{eq:power_diagram} is called the Laguerre distance~\cite{imaiVoronoiDiagramLaguerre1985}.

The power diagram subdivision induced by the deep network is then constructed recursively.
The first layer of the DN partitions the input space $\R^d$ as $\Omega^{(1)}$.
The second layer then partitions the projections of the tiles $\omega^{(1)}\subseteq\R^d$ in $\R^{d^{(1)}}$ induced by $f^{(1)}$.
These are then pulled back to $\R^d$ to yield a finer partition of the input space $\Omega^{(1\leftarrow 2)}$ which is a power diagram subdivision.
Eventually, the power diagram subdivision $\Omega=\Omega^{(1\leftarrow L)}$ of the DN is constructed.
This process is analogous to hierarchical $k$-means~\cite{nister2006scalable}.
For more details, consult~\cite{balestrieroGeometryDeepNetworks2019}, which shows that one can similarly obtain descriptors for a region $\omega_q^{(1\leftarrow l)}$ of a power diagram division $\Omega^{(1\leftarrow l)}$, namely $\bmu_q^{(1\leftarrow l)}$ and $r_q^{(1\leftarrow l)}$.
For simplicity, we will still refer to $\bmu_q^{(1\leftarrow l)}$ and $r_q^{(1\leftarrow l)}$ as centroids and radii; however, it is important to note that they do not reconstruct the partition $\Omega^{(1\leftarrow l)}$ through \eqref{eq:power_diagram}.

While each tile $\omega$ in the partition is defined implicitly through a combinatorial intersection of half-spaces, its centroid and radius are defined explicitly.
In a Voronoi diagram, centroids can be interpreted as elements of the input space $\R^d$. 
In contrast to Voronoi diagrams, in a power diagram (subdivision), a centroid is likely to lie outside its polytope.

Let $\varphi(\bx)=q$ where $\bx\in\omega_q$, and define the all-ones vector by $\mathbf{1}$.

\begin{proposition}\label{prop:pd_parameters}
    For a CPA deep network $f$, we have $\bmu_{\varphi(\bx)}=\bJ_{\bx}^\top\mathbf{1}$ and $r_{\varphi(\bx)}=\left\Vert\bmu_{\varphi(\bx)}\right\Vert_2^2+2\bb_{\bx}^{\top}\mathbf{1}$. 
\end{proposition}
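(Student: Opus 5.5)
The plan is to expand the Laguerre distance in \eqref{eq:power_diagram} and match it with the max-affine spline (MASO) description of a CPA network from~\cite{balestrieroSplineTheoryDeep2018,balestrieroGeometryDeepNetworks2019}. I will first reduce to the scalar function $s(\bx)=\mathbf{1}^\top f(\bx)=\sum_c [f(\bx)]_c$. For a CPA network built from convex piecewise-affine operations (ReLU, max-pooling) with the output treated as in~\cite{balestrieroGeometryDeepNetworks2019}, the partition $\Omega$ is the region structure of a max-affine representation,
\begin{equation*}
s(\bx)=\max_{q}\left(\left\langle \bA_{\omega_q}^\top\mathbf{1},\bx\right\rangle+\bb_{\omega_q}^\top\mathbf{1}\right),
\end{equation*}
where the active index at $\bx$ is $\varphi(\bx)$. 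This is the step where I would invoke the convexity/MASO result of~\cite{balestrieroMadMaxAffine2020,balestrieroGeometryDeepNetworks2019} rather than re-deriving it, and at the very least I will use it region by region, which is all the statement needs.

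The core computation is then a completion of the square. With $\bmu_q:=\bA_{\omega_q}^\top\mathbf{1}$ and $\beta_q:=\bb_{\omega_q}^\top\mathbf{1}$ we have
\begin{equation*}
\left\Vert\bx-\bmu_q\right\Vert_2^2-r_q=\left\Vert\bx\right\Vert_2^2-2\left\langle\bmu_q,\bx\right\rangle+\left\Vert\bmu_q\right\Vert_2^2-r_q .
\end{equation*}
If we choose $r_q=\Vert\bmu_q\Vert_2^2+2\beta_q$, the right-hand side becomes $\Vert\bx\Vert_2^2-2\left(\langle\bmu_q,\bx\rangle+\beta_q\right)$. Since $\Vert\bx\Vert_2^2$ does not depend on $q$, the $\arg\min$ over $q$ of the Laguerre distance equals the $\arg\max$ over $q$ of the affine pieces. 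Hence the tile containing $\bx$ is $\omega_{\varphi(\bx)}$, with centroid and radius exactly as defined. Finally, since $\bJ_{\bx}=\bA_{\omega_{\varphi(\bx)}}$ and $\bb_{\bx}=\bb_{\omega_{\varphi(\bx)}}$ for $\bx$ in the interior of the tile, substituting gives $\bmu_{\varphi(\bx)}=\bJ_{\bx}^\top\mathbf{1}$ and $r_{\varphi(\bx)}=\Vert\bmu_{\varphi(\bx)}\Vert_2^2+2\bb_{\bx}^\top\mathbf{1}$.

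The main obstacle is the multilayer case. The paper itself notes that for a subdivision $\Omega^{(1\leftarrow l)}$ the descriptors $\bmu_q^{(1\leftarrow l)},r_q^{(1\leftarrow l)}$ do not reconstruct the partition globally through \eqref{eq:power_diagram}. So I would not try to prove a global power-diagram identity. Instead I would adopt the definition of the centroid and radius from~\cite{balestrieroGeometryDeepNetworks2019}, which assigns to each region the pair obtained by composing the per-layer affine maps: the slope is the product of the per-layer slopes, i.e. the region's $\bA_\omega$, and the offset is the accumulated bias, i.e. $\bb_\omega$, both summed over output units. That definition gives the identities above directly.

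Concretely, I would verify by induction on $l$ that the composed per-region descriptor equals $\left(\bA^{(1\leftarrow l)}_{\omega}\right)^\top\mathbf{1}$ together with the completed-square radius. The base case $l=1$ is the single-layer computation in the previous paragraph. The inductive step uses $\bA^{(1\leftarrow l)}=\bA^{(l)}\bA^{(1\leftarrow l-1)}$ and $\bb^{(1\leftarrow l)}=\bA^{(l)}\bb^{(1\leftarrow l-1)}+\bb^{(l)}$ together with the chain rule. Tile boundaries, where the Jacobian is undefined, form a measure-zero set and are excluded.
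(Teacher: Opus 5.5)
Your plan is correct and is essentially the paper's proof. The paper cites two results of Balestriero et al.\ instead of deriving them. The first is the single-layer power-diagram theorem, which your completion of the square re-derives. The second is the recursive formula for the composed descriptors, $\bmu^{(1\leftarrow l)}=(\bA^{(1\leftarrow l-1)})^\top\bmu^{(l)}$, with the cross term $2(\bmu^{(l)})^\top\bb^{(1\leftarrow l-1)}$ in the radius. It then substitutes $\bA^{(1\leftarrow l)}=\bA^{(l)}\bA^{(1\leftarrow l-1)}$ and $\bb^{(1\leftarrow l)}=\bA^{(l)}\bb^{(1\leftarrow l-1)}+\bb^{(l)}$, exactly as in your inductive step.

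One correction: drop the opening claim that $\mathbf{1}^\top f$ is globally max-affine for the whole network. A multilayer ReLU network with signed weights need not be convex, so that representation and the completed-square conclusion hold only for a single layer, or for layer $l$ restricted to a tile of $\Omega^{(1\leftarrow l-1)}$, where its input is affine in $\bx$. That is why the recursive definition, and not a global power diagram, is what you must work with.
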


\textit{Proof.} See \Cref{proof:pd_parameters}. \qed

In words, {\em the centroid of a polytope is the row-sum of its Jacobian}.
\Cref{prop:pd_parameters} enables us to access the centroid and radius of polytopes through an efficient Jacobian-vector product computation for arbitrary deep networks, including transformers~\cite{vaswaniAttentionAllYou2017}.

The connection established between a deep network's power diagram parameters and its Jacobian enables us to examine the implications of normal alignment in deep networks.
% Again, for simplicity and without loss of generality, we consider bias-free deep networks and focus on the centroids of a deep network.

\begin{definition}
    A deep network $f$ is \textbf{centroid-aligned} to $\mathcal{D}$ if for every $i\in\{1,\dots,n\}$ there exists $c_i\in\R$ such that $\bmu_{\varphi(\bx_i)}=c\bx_i$.
\end{definition}

From Definition~\ref{def:jacobian_aligned} and \Cref{prop:pd_parameters}, it is clear that centroid-alignment is a weaker property than normal-alignment.

\begin{corollary}\label{cor:jacobian_aligned_implies_centroid_aligned}
    A deep network normal-aligned to $\mathcal{D}$ is centroid-aligned to $\mathcal{D}$. 
\end{corollary}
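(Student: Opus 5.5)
The plan is a direct substitution of Definition~\ref{def:jacobian_aligned} into the centroid formula of \Cref{prop:pd_parameters}.

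Fix a training point $\bx_i$ and suppose $f$ is normal-aligned to $\mathcal{D}$. By Definition~\ref{def:jacobian_aligned} there is a vector $\bc_i\in\R^C$ with $\bJ_{\bx_i}=\bc_i\bx_i^\top$. By \Cref{prop:pd_parameters}, the centroid of the tile containing $\bx_i$ is
\begin{equation*}
    \bmu_{\varphi(\bx_i)}=\bJ_{\bx_i}^\top\mathbf{1}=\left(\bc_i\bx_i^\top\right)^\top\mathbf{1}=\bx_i\left(\bc_i^\top\mathbf{1}\right).
\end{equation*}
Set $c_i:=\bc_i^\top\mathbf{1}=\sum_{c=1}^C[\bc_i]_c\in\R$. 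This is a scalar, so $\bmu_{\varphi(\bx_i)}=c_i\bx_i$. Since $i$ was arbitrary, this is exactly the definition of centroid alignment, where the constant in that definition is understood to be indexed by $i$.

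One technical point needs care. The Jacobian $\bJ_{\bx_i}$ must be well defined, meaning $\bx_i$ lies in the interior of a tile $\omega$ rather than on a boundary. The definition of normal alignment already presupposes that $\bJ_{\bx_i}$ exists, and \Cref{prop:pd_parameters} identifies $\bJ_{\bx_i}$ with $\bA_\omega$ for that tile. So no extra assumption is needed: both definitions refer to the same tile and the same slope matrix.

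Finally, the result is degenerate rather than false when $c_i=0$, i.e.\ when the entries of $\bc_i$ sum to zero. The centroid is then $\mathbf{0}$, which is still trivially a scalar multiple of $\bx_i$. The converse fails in general, because $\bJ^\top\mathbf{1}\parallel\bx_i$ does not force $\bJ$ to be rank one. This is consistent with the paper's remark that centroid alignment is the weaker property.
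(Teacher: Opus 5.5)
Your proof is correct and is essentially the paper's own argument: substitute $\bJ_{\bx_i}=\bc_i\bx_i^\top$ into $\bmu_{\varphi(\bx_i)}=\bJ_{\bx_i}^\top\mathbf{1}$ from \Cref{prop:pd_parameters} to obtain $\bmu_{\varphi(\bx_i)}=(\bc_i^\top\mathbf{1})\bx_i$. Your handling of the transpose is cleaner than the paper's one-line computation, which writes $\bc\bx^\top\mathbf{1}$ where $(\bc\bx^\top)^\top\mathbf{1}$ is meant.
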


The proof follows by direct calculation of the centroid of an aligned Jacobian:  $\bmu_{\varphi(\bx)}=\bc\bx^\top\mathbf{1}=\bx\bc^\top\mathbf{1}=c\bx$, where $c=\bc^\top\mathbf{1}$.

In \Cref{fig:ca_modadd}, we verify that a one-layer transformer deep network trained for modular addition becomes centroid-aligned.
In particular, centroid alignment correlates with the generalization of the transformer from the training set to a test set.
We will now support this observation by connecting centroid alignment to the {\em feature learning regime} of training.

\begin{figure}[ht]
    \centering
    \begin{tikzpicture}
        \begin{groupplot}[
            group style={
                group size=2 by 1,
                horizontal sep=1.5cm,
            },
            width=0.48\linewidth,
            height=4.5cm,
            grid=major,
            grid style={solid, gray!25},
            tick label style={font=\small},
            label style={font=\small},
        ]
            \nextgroupplot[
                ylabel={Accuracy},
                ymin=0, ymax=1.05,
                legend pos=north west,
                legend cell align={left},
                xmode=log,
                xlabel={Steps},
                legend style={font=\scriptsize}
            ]
            \addplot [color=blue, thick] table [x=step, y=train_accuracy_mean, col sep=comma] {data/centroid_alignment_modadd_stats.csv};
            \addlegendentry{Train}
            \addplot [color=red, thick] table [x=step, y=test_accuracy_mean, col sep=comma] {data/centroid_alignment_modadd_stats.csv};
            \addlegendentry{Test}
            
            \nextgroupplot[
                ylabel={Centroid Alignment},
                legend pos=south east,
                xmode=log,
                xlabel={Steps},
            ]
            \addplot [color=blue, thick] table [x=step, y=centroid_alignment_mean, col sep=comma] {data/centroid_alignment_modadd_stats.csv};
        \end{groupplot}
    \end{tikzpicture}
    \caption{
    \textbf{A one-hidden-layer transformer training on modular arithmetic exhibits centroid alignment.}
    Here we train a one-layer transformer on a modular arithmetic task.
    On the left, we show the model's accuracy on the training and held-out test sets.
    On the right, we show the centroid alignment between the map from the embedding and the logits of the last token in the context.
    For more experimental details, see \Cref{exp:ca_modadd}.}
    \label{fig:ca_modadd}
\end{figure}

\subsubsection{Centroid Alignment Corresponds to Feature Learning}

Suppose a deep network $f$ has parameters (e.g., weights and biases) $\theta$. 
In particular, consider a two-layer deep network of the form $f_{\theta}(\bx)=\bW_2\left(\sigma\left(\bW_1\bx\right)\right)$, where $\bW_2\in\R^{d^{(2)}\times d^{(1)}}$, $\bW_1\in\R^{d^{(1)}\times d}$, and $\sigma$ is the ReLU nonlinearity.

\begin{lemma}\label{lem:centroid_two_layer_network}
    For $f_{\theta}$, we have $\bmu_{\varphi(\bx)}=\left(\bW_2\bQ_{\bx}\bW_1\right)^\top\mathbf{1}$, where $\bQ_{\bx}:=\mathrm{diag}\left(\sigma^\prime\left(\bW_1\bx\right)\right)$.
\end{lemma}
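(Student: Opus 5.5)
The plan is to reduce the statement to \Cref{prop:pd_parameters}, which gives $\bmu_{\varphi(\bx)}=\bJ_{\bx}^\top\mathbf{1}$ for any CPA deep network. All that remains is to compute the Jacobian $\bJ_{\bx}$ of $f_{\theta}(\bx)=\bW_2\sigma(\bW_1\bx)$ at a point $\bx$ in the interior of a tile. Since $f_\theta$ is built from matrix multiplications and the ReLU, it is CPA, so the proposition applies directly.

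First I fix $\bx$ off the measure-zero set where some coordinate of $\bW_1\bx$ vanishes. On a neighbourhood of such an $\bx$, the sign pattern of $\bW_1\bz$ is constant. Because ReLU acts coordinatewise as $\sigma(t)=\sigma'(t)\,t$ with $\sigma'(t)\in\{0,1\}$, we have $\sigma(\bW_1\bz)=\bQ_{\bx}\bW_1\bz$ for all $\bz$ in that neighbourhood, where $\bQ_{\bx}=\mathrm{diag}(\sigma'(\bW_1\bx))$. Hence $f_\theta(\bz)=\bW_2\bQ_{\bx}\bW_1\bz$ locally. This identifies the affine map on the tile $\omega_{\varphi(\bx)}$ as $\bA_{\omega}=\bW_2\bQ_{\bx}\bW_1$ with $\bb_\omega=\mathbf{0}$, since the network is bias-free. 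Equivalently, the chain rule gives $\bJ_{\bx}=\bW_2\,\bQ_{\bx}\,\bW_1$.

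Substituting into \Cref{prop:pd_parameters} yields $\bmu_{\varphi(\bx)}=(\bW_2\bQ_{\bx}\bW_1)^\top\mathbf{1}$, which is the claim. As a byproduct, the same proposition gives the radius $r_{\varphi(\bx)}=\Vert\bmu_{\varphi(\bx)}\Vert_2^2$, because the offset is zero.

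The only delicate point is the non-differentiability of ReLU at $0$. I handle it by restricting to the interiors of tiles, as the paper already does when it identifies $\bJ_{\bx}$ with $\bA_\omega$. On the boundary set, any convention for $\sigma'(0)$ is consistent with taking the limit from a neighbouring tile, and that set has measure zero, so it does not affect the statement.
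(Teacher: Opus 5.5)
Your argument is correct and follows the paper's route: the paper gives no separate proof of this lemma, and it is exactly \Cref{prop:pd_parameters} combined with the tile-wise Jacobian $\bJ_{\bx}=\bW_2\bQ_{\bx}\bW_1$, which you derive from $\sigma(\bW_1\bz)=\bQ_{\bx}\bW_1\bz$ near $\bx$. Your closing remark on boundary points is slightly overstated: if $\bW_1$ has rows $\bv$ and $-\bv$, the convention $\sigma'(0)=0$ switches both neurons off at points with $\bv^\top\bx=0$, and no neighbouring tile realizes that pattern; this does not matter, since the lemma concerns tile interiors.
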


Suppose $\mathcal{L}=\frac{1}{n}\sum_{i=1}^n \ell\left(f\left(\bx_i\right),y_i\right)$ is an arbitrary differentiable loss function, and the deep network is being trained using full-batch gradient descent with a learning rate of $\eta$. 
We denote the negative gradient of the loss with respect to the network output as $\bm_{\bx_i}:= -\nabla_{\bz}\ell\left(\bz, y_i\right)\big|_{\bz=f_\theta(\bx_i)}$.

\begin{proposition}\label{prop:centroid_dynamics_general}
    In the setting described above, we have 
    \begin{equation*}
        \partial_t\left(\left\langle\bx,\bmu_{\varphi(\bx)}\right\rangle\right)=\frac{\eta}{n}\sum_{i=1}^n\bm_{\bx_i}^\top\left[\left(\bW_2\bQ_{\bx_i}\bQ_{\bx}\left(\bW_2\right)^\top\right)\left\langle\bx,\bx_i\right\rangle + \left(\sigma\left(\bW_1\bx\right)^\top\sigma\left(\bW_1\bx_i\right)\right)\right]\mathbf{1}.
    \end{equation*}
\end{proposition}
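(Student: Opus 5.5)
Using \Cref{lem:centroid_two_layer_network}, write $\langle\bx,\bmu_{\varphi(\bx)}\rangle=\mathbf{1}^\top\bW_2\bQ_{\bx}\bW_1\bx=\mathbf{1}^\top\bW_2\sigma(\bW_1\bx)=\mathbf{1}^\top f_\theta(\bx)$, where the middle equality uses the ReLU identity $\bQ_{\bx}\bW_1\bx=\sigma(\bW_1\bx)$. The quantity whose time derivative we need is therefore just the sum of the network outputs at $\bx$. We differentiate it along the gradient-flow dynamics $\partial_t\bW_k=-\eta\,\nabla_{\bW_k}\mathcal{L}$. Throughout we treat the activation pattern $\bQ_{\bx}$ (and each $\bQ_{\bx_i}$) as locally constant in time. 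This holds almost everywhere, since $\sigma'$ is piecewise constant and the pattern changes only on a measure-zero set of parameter trajectories.

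The first step is computing the parameter gradients. Since $\ell_i$ depends on $\theta$ only through $f_\theta(\bx_i)=\bW_2\bQ_{\bx_i}\bW_1\bx_i$, the chain rule with $\bm_{\bx_i}=-\nabla_{\bz}\ell$ gives
\begin{equation*}
-\nabla_{\bW_2}\mathcal{L}=\frac{1}{n}\sum_{i=1}^n\bm_{\bx_i}\sigma(\bW_1\bx_i)^\top,\qquad -\nabla_{\bW_1}\mathcal{L}=\frac{1}{n}\sum_{i=1}^n\bQ_{\bx_i}\bW_2^\top\bm_{\bx_i}\bx_i^\top .
\end{equation*}

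The second step applies the product rule: $\partial_t(\mathbf{1}^\top f_\theta(\bx))=\mathbf{1}^\top(\partial_t\bW_2)\sigma(\bW_1\bx)+\mathbf{1}^\top\bW_2\bQ_{\bx}(\partial_t\bW_1)\bx$, with $\bQ_{\bx}$ held fixed. We substitute the gradients into each term.

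The $\bW_2$ term gives $\frac{\eta}{n}\sum_i(\mathbf{1}^\top\bm_{\bx_i})\,\sigma(\bW_1\bx_i)^\top\sigma(\bW_1\bx)$. The $\bW_1$ term gives $\frac{\eta}{n}\sum_i\mathbf{1}^\top\bW_2\bQ_{\bx}\bQ_{\bx_i}\bW_2^\top\bm_{\bx_i}\langle\bx_i,\bx\rangle$. Both summands are scalars, so each equals its own transpose. Because the $\bQ$'s are diagonal and commute, this rewrites them as $\bm_{\bx_i}^\top(\cdots)\mathbf{1}$, which is exactly the bracketed expression in the statement.

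The main obstacle is rigor rather than algebra. First, the nondifferentiability of $\sigma$ must be handled, by justifying that $\bQ$ is frozen along the flow almost everywhere. Second, the learning-rate factor $\eta$ should be interpreted as gradient flow with time rescaled by $\eta$, since discrete gradient descent gives only a first-order approximation. I would state both conventions explicitly at the start of the proof.
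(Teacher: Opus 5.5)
Your proposal is correct and follows essentially the same route as the paper: the same chain-rule gradients for $\bW_1$ and $\bW_2$, the same almost-everywhere freezing of the activation patterns, and the same ReLU identity $\bQ_{\bx}\bW_1\bx=\sigma(\bW_1\bx)$. The only difference is the order of steps. You apply that identity first, reducing $\langle\bx,\bmu_{\varphi(\bx)}\rangle$ to $\mathbf{1}^\top f_\theta(\bx)$ before differentiating, whereas the paper computes $\partial_t\bmu_{\varphi(\bx)}$ and then pairs it with $\bx$; your ordering has the mild bonus of making the NTK form \eqref{eq:centroid_innerproduct_ntk} immediate.
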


\textit{Proof.} See \Cref{proof:centroid_dynamics_general}. \qed

The neural tangent kernel~\cite{jacotNeuralTangentKernel2018} between $\bx,\bx^\prime\in\R^d$ is taken to be $\Theta\left(\bx,\bx^\prime\right)=\nabla_{\theta}f_{\theta}(\bx)\left(\nabla_{\theta}f_{\theta}\left(\bx^\prime\right)\right)^\top$.
For the two-layer network considered here, the neural tangent kernel is given by
\begin{equation*}
    \Theta(\bx,\bx_i)=\left(\sigma(\bW_1\bx)^\top\sigma(\bW_1\bx_i)\right)\bI+\left(\bx^\top\bx_i\right)\bW_2\bQ_{\bx_i}\bQ_{\bx}\left(\bW_2\right)^\top.
\end{equation*}
It follows that the expression of \Cref{prop:centroid_dynamics_general} can be written as
\begin{equation}\label{eq:centroid_innerproduct_ntk}
    \partial_t\left(\left\langle\bx,\bmu_{\varphi(\bx)}\right\rangle\right)=\frac{\eta}{n}\sum_{i=1}^n\bm_{\bx_i}^\top\Theta(\bx,\bx_i)\mathbf{1}.
\end{equation}

The \emph{linear} and \emph{feature} learning regimes of deep network training are characterized by having relatively static or dynamic neural tangent kernels, respectively~\cite{chizatLazyTrainingDifferentiable2019,woodworthKernelRichRegimes2020,moroshkoImplicitBiasDeep2020}.
More specifically, a deep network is in the linear learning regime when for $\bx\in\R^d$ we have $\partial_t\left(\boldsymbol{\Theta}\left(\bx,\bx_i\right)\right)=0$ for every $i=1,\dots,n$, and it is in the feature learning regime otherwise.
The former identifies when the deep network approximates a linear function, whereas the latter involves the deep network's nonlinearities.

\begin{theorem}\label{thm:centroid_alignment_feature_learning}
    Suppose that the deep network $f$ has memorized the training data (i.e., $\partial_t\left(\bm_{\bx_i}\right)=0$ for every $i=1,\dots,n$).
    Then, a changing rate of change of the centroid inner product (i.e., $\partial^2_t\left(\left\langle\bx,\bmu_{\varphi(\bx)}\right\rangle\right)\neq0$) implies the deep network is in the feature learning regime of training.
\end{theorem}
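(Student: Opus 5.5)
We argue by contraposition, starting from the NTK form of the centroid dynamics in \eqref{eq:centroid_innerproduct_ntk}, which follows from \Cref{prop:centroid_dynamics_general} and the explicit expression for $\Theta$ of the two-layer network. Suppose the network is in the linear learning regime, so that $\partial_t\left(\Theta(\bx,\bx_i)\right)=0$ for every $i=1,\dots,n$. Suppose also that the memorization hypothesis holds, $\partial_t\left(\bm_{\bx_i}\right)=0$. We will show that $\partial_t^2\left(\left\langle\bx,\bmu_{\varphi(\bx)}\right\rangle\right)=0$. The theorem is exactly the contrapositive of this claim.

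The key step is to differentiate \eqref{eq:centroid_innerproduct_ntk} once more in time. The right-hand side is a finite sum of bilinear expressions $\bm_{\bx_i}^\top\Theta(\bx,\bx_i)\mathbf{1}$. The factors $\eta$, $n$ and $\mathbf{1}$ are constant, and $\bx$ is a fixed input. By the product rule,
\begin{equation*}
    \partial_t^2\left(\left\langle\bx,\bmu_{\varphi(\bx)}\right\rangle\right)=\frac{\eta}{n}\sum_{i=1}^n\left[\left(\partial_t\bm_{\bx_i}\right)^\top\Theta(\bx,\bx_i)\mathbf{1}+\bm_{\bx_i}^\top\left(\partial_t\Theta(\bx,\bx_i)\right)\mathbf{1}\right].
\end{equation*}
The first term in each summand vanishes by the memorization hypothesis. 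The second vanishes by the linear-regime assumption. Hence the second derivative is zero.

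The main point needing care is justifying that \eqref{eq:centroid_innerproduct_ntk} can be differentiated in time. Two issues arise. First, the gradient flow must be smooth enough, which holds for a differentiable loss $\ell$. Second, $\varphi(\bx)$ may change as the partition moves, and $\bQ_{\bx}$ is piecewise constant in the parameters. To handle this, we will work on a time interval during which the activation pattern at $\bx$ and at each $\bx_i$ is fixed. Such intervals cover almost all times, since $\bx$ and the $\bx_i$ lie off tile boundaries generically. On such an interval, both $\Theta$ and $\left\langle\bx,\bmu_{\varphi(\bx)}\right\rangle$ are smooth functions of $\theta(t)$. \Cref{prop:centroid_dynamics_general} and \Cref{lem:centroid_two_layer_network} already implicitly take this setting, so the differentiation above is legitimate there.

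We will then conclude as follows. If $\partial_t^2\left(\left\langle\bx,\bmu_{\varphi(\bx)}\right\rangle\right)\neq 0$ while $\partial_t\left(\bm_{\bx_i}\right)=0$ for all $i$, the display forces $\sum_i\bm_{\bx_i}^\top\left(\partial_t\Theta(\bx,\bx_i)\right)\mathbf{1}\neq0$. Then $\partial_t\Theta(\bx,\bx_i)\neq 0$ for some $i$, which is by definition the feature learning regime.
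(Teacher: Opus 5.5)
Your proof is correct and matches the paper's argument: differentiate \eqref{eq:centroid_innerproduct_ntk} once more in time by the product rule, drop the $\partial_t\bm_{\bx_i}$ term by the memorization hypothesis, and conclude that a nonzero second derivative forces $\partial_t\Theta(\bx,\bx_i)\neq 0$ for some $i$. The contrapositive phrasing and the restriction to time intervals with a fixed activation pattern add rigor the paper leaves implicit, and your version rightly does without the paper's superfluous appeal to $\bm_{\bx_i}\neq\boldsymbol{0}$.
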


\textit{Proof.} See \Cref{proof:centroid_alignment_feature_learning}. \qed

\subsubsection{GrokAlign}\label{sec:grokalign}

From \Cref{thm:centroid_alignment_feature_learning}, it is apparent that encouraging centroid alignment is beneficial for inducing feature learning.
Thus, in this section, we explore how normal alignment (which implies centroid alignment) can be induced in deep networks trained via gradient-based methods.
\Cref{thm:optimal_jacobian} motivates the regularization of the Jacobian and offset norms during training, a method we introduce as GrokAlign.

However, a priori, it is unclear how this strategy is connected to, or more effective than, directly regularizing for normal alignment.
For simplicity, we henceforth consider bias-free models, such that $\bb_{\bx}=\boldsymbol{0}$ for every $\bx\in\R^d$.
However, all derivations also hold for biased models by adding a dimension of ones to the input space.

One reason the Jacobian matrices of a classifier may exhibit a high effective rank and fail to align under standard training is the presence of orthogonal components.
More specifically, if $\bJ_{\bx}\in\R^{C\times d}$ is not aligned, then necessarily there exists non-zero vectors $\bc\in\R^C$ and $\bu\in\R^d$ along with a matrix $\tilde{\bJ}_{\bx}\in\R^{C\times d}$ such that $\bJ_{\bx}=\tilde{\bJ}_{\bx}+\bc\bu^\top$ with $\bu\perp\bx$.
This orthogonal component $\bc\bu^\top$ of $\bJ_{\bx}$ does not influence the output of the classifier as $f(\bx)=\bJ_{\bx}\bx=\tilde{\bJ}_{\bx}\bx+\bc\bu^\top\bx=\tilde{\bJ}_{\bx}\bx$.
Therefore, during training, the classifier will not observe any gradients to remove this component from the Jacobian.

To remove these orthogonal components, it is necessary for the Jacobian of $f$ at $\bx$ to operate on an input that is not parallel to $\bx$.
For a continuous piecewise affine (CPA) model, we can apply a small perturbation $\boldsymbol{\epsilon}\in\R^d$ to $\bx$ such that $\bJ_{\bx+\boldsymbol{\epsilon}}=\bJ_{\bx}$.
Hence, 
\begin{equation}\label{eq:orthogonal_contribution}
    f(\bx+\boldsymbol{\epsilon})=\bJ_{\bx+\boldsymbol{\epsilon}}(\bx+\boldsymbol{\epsilon})=\bJ_{\bx}\bx+\bJ_{\bx}\boldsymbol{\epsilon}=f(\bx)+\left(\tilde{\bJ}_{\bx}+\bc\bu^\top\right)\boldsymbol{\epsilon},
\end{equation}
meaning the orthogonal components we want to remove contribute to the classifier's output.
Therefore, in practice, to remove the orthogonal components, it suffices to regularize $\mathcal{R}_{\sigma}(\bx):=\mathbb{E}_{\boldsymbol{\epsilon}}\left(\frac{\left\Vert f(\bx+\boldsymbol{\epsilon})-f(\bx)\right\Vert_2^2}{\sigma^2}\right)$ for $\boldsymbol{\epsilon}$ drawn from a Gaussian distribution with covariance $\sigma^2\boldsymbol{I}$.
As $\sigma\to0$ it follows that $\mathcal{R}_{\sigma}(\bx)\to\left\Vert\bJ_{\bx}\right\Vert_F^2=:\mathcal{R}(\bx)$; meaning, GrokAlign regularization can be interpreted as a method for removing the orthogonal components of a classifier's Jacobians.
In \Cref{sec:ablation}, we review the practical implementation of GrokAlign and compare it with directly regularizing for normal alignment.

\begin{table}[t]
    \centering
    \caption{
    \textbf{GrokAlign significantly accelerates the rate of grokking.} 
    For each setting, we consider the training pipelines across ten random initializations and record the epoch at which the grokked state is reached. 
    We report the mean epoch value, the rate of improvement over the baseline training pipeline, and the $p$-value of the corresponding paired t-test comparing the individual training runs of the considered regularization and the baseline -- insignificant results are denoted with an asterisk. 
    For more experimental details, see \Cref{exp:acceleration}.
    % Code to reproduce these results and implement GrokAlign can be found \href{https://anonymous.4open.science/r/GrokAlign-F8E5/README.md}{here.}
    }
    \label{tab:acceleration}
    \resizebox{\textwidth}{!}{%
    \begin{tabular}{llcccc}
    \toprule
    \textbf{Dataset} & \textbf{Metric} & \textbf{Baseline} & \textbf{Grokfast} & \textbf{OrthoGrad} & \textbf{GrokAlign} \\
    \midrule
    
    XOR & Reaches Grokked State & $100\%$ & $100\%$ & $20\%$ & $100\%$ \\
     & Number of Epochs & $148$ & $148$ & $59^*$ & $\mathbf{97}$ \\
     & Rate of Speed-Up & -- & $1.0$ & $2.51^*$ & $\mathbf{1.51}$ \\
     & $p$-value & -- & -- & $0.053$ & $3.2\times10^{-8}$ \\
    \midrule

    Sparse Parity & Reaches Grokked State & $100\%$ & $90\%$ & $100\%$ & $100\%$ \\
     & Number of Epochs & $2000$ & $1629^*$ & $242$ & $\mathbf{101}$ \\
     & Rate of Speed-Up & -- & $1.23^*$ & $8.26$ & $\mathbf{19.76}$ \\
     & $p$-value & -- & $0.06$ & $1.43\times10^{-8}$ & $1.11\times10^{-8}$ \\
    \midrule
    
     MNIST - Cross Entropy & Reaches Grokked State & $100\%$ & $100\%$ & $100\%$ & $100\%$ \\
     & Number of Epochs & $2070$ & $2230$ & $2570$ & $\mathbf{329}$ \\
     & Rate of Speed-Up & -- & -- & -- & $\mathbf{6.29}$ \\
     & $p$-value & -- & -- & -- & $6.4\times10^{-5}$ \\
    \midrule
    
     MNIST - Squared Error & Reaches Grokked State & $100\%$ & $10\%$ & $0\%$ & $100\%$ \\
     & Number of Epochs & $7630$ & $7300^*$ & & $\mathbf{1170}$ \\
     & Rate of Speed-Up & -- & $1.05^*$ & -- & $\mathbf{6.52}$ \\
     & $p$-value & -- & $0.32$ & -- & $6.6\times10^{-10}$ \\
    \midrule

     Modular Addition & Reaches Grokked State & $100\%$ & $100\%$ & $0\%$ & $100\%$ \\
      & Number of Epochs & $265$ & $251^*$ & -- & $\mathbf{166}$ \\
      & Rate of Speed-Up & -- & $1.06^*$ & -- & $\mathbf{1.60}$ \\
      & $p$-value & -- & $0.11$ & -- & $1.8\times10^{-10}$ \\
    
    \bottomrule
    \end{tabular}
    }
\end{table}

\subsubsection{Accelerating Grokking with GrokAlign}

Grokking is the phenomenon in deep network training where training accuracy can saturate relatively quickly, and it takes a significant amount of further training for performance on test data to improve~\cite{powerGrokkingGeneralizationOverfitting2022}.
This is a prototypical example of a sparse problem, as, by definition, the model's loss value can be changed independently across different samples.

The prominent explanation for grokking is that the deep network is stuck in the linear learning regime at the beginning of training, and the induction of feature learning is delayed~\cite{lyuDichotomyEarlyLate2024,rubinGrokkingFirstOrder2024,kumarGrokkingTransitionLazy2024}.
Consequently, with \Cref{thm:centroid_alignment_feature_learning}, GrokAlign should accelerate grokking.
We explore this in this section.

We compare the effectiveness of GrokAlign at inducing grokking to two other methods designed to accelerate grokking against a baseline. 
Grokfast~\cite{leeGrokfastAcceleratedGrokking2024} works to accelerate the rate of grokking by manipulating the gradients during training to amplify certain signals. 
OrthoGrad~\cite{prietoGrokkingEdgeNumerical2025} aligns gradients to prevent na\"{i}ve loss minimization and encourage generalization.

We apply these methods to fully connected deep networks learning an XOR task~\cite{xu2025let}, MNIST~\cite{liuOmnigrokGrokkingAlgorithmic2022}, modular addition~\cite{mallinarEmergenceNonneuralModels2025}, and a sparse parity task~\cite{prietoGrokkingEdgeNumerical2025}.
Across multiple random initializations, we measure the number of epochs required to reach the grokked state, as specified in \Cref{tab:grokking_criteria}.
To determine statistical significance against the baseline, we perform a paired t-test on the number of epochs required to grok.

From \Cref{tab:acceleration}, we see that GrokAlign provides the most significant acceleration of grokking.
In particular, it performs consistently across settings, whereas the other regularization strategies exhibit variable performance. 

\subsection{Recursive Feature Machines}

Recursive Feature Machines (RFMs) are an integration of feature learning principles as formalized by the average gradient outer product, into classical kernel-based machine learning models~\cite{radhakrishnanMechanismFeatureLearning2024}.

Given a dataset $\mathcal{D}$, a RFM is given by $f(\bx)=\sum_{i=1}^n\balpha_i\phi_M\left(\bx_i,\bx\right)$, where $\balpha\in\R^n$ and $\phi_M$ is a kernel function that incorporates a learnable feature matrix $\bM\in\R^{d\times d}$.
For simplicity, and in line with \citet{radhakrishnanMechanismFeatureLearning2024}, we consider $\phi_M$ to be the Laplace kernel $\phi_{\bM}(\bx,\bz)=\exp\left(-\gamma\left\Vert\bx-\bz\right\Vert_M\right)$ where $\gamma>0$ and $\left\Vert\bx-\bz\right\Vert_2^2=(\bx-\bz)^\top\bM(\bx-\bz)$.
The method for training an RFM computes the Gram matrix $\left(\bJ_{\bx}\right)^\top\bJ_{\bx}$ for training points and is described in \Cref{alg:training_kernel_methods}.

\begin{algorithm}[ht]
\caption{RFM Training.}\label{alg:training_kernel_methods}
\begin{algorithmic}
\Require Training data $\mathcal{D}=\left\{\left(\bx_i,y_i\right)\right\}_{i=1}^n$, Iterations $T$
\State $\bM\leftarrow\bM_{\text{init}}$
\For{$t=1,\dots,T$}
\State $\bK\leftarrow\sum_{i,j=1}^{n}\phi_{\bM}\left(\bx_i,\bx_j\right)\be_i\be_j^{\top}$
\State $\balpha=\bK\by$\Comment{$\by=\left(y_1,\dots,y_n\right)^{\top}$}
\State $\bM\leftarrow\frac{1}{n}\sum_{i=1}^n\left(\bJ_{\bx_i}\right)^\top\bJ_{\bx_i}$\Comment{$f(\bx)=\sum_{i=1}^n\balpha_i\phi_{\bM}\left(\bx_i,\bx\right)$}
\EndFor
\end{algorithmic}
\end{algorithm}

\subsubsection{Fixed Point Properties of Recursive Feature Machines}

In \citet{radhakrishnanMechanismFeatureLearning2024}, $\bM_{\text{init}}$ of \Cref{alg:training_kernel_methods} takes the form of the $d\times d$ identity matrix.
This assumes there is no a priori understanding of which features the model should realize.
However, from our discussion, we would expect the Jacobians to align with the training data.
Supposing that $\bJ_{\bx_i}=\bc_i\bx_i^\top$, it follows that 
\begin{equation*}
    \bM=\frac{1}{n}\sum_{i=1}^n\left(\bJ_{\bx_i}\right)^{\top}\bJ_{\bx_i}=\frac{1}{n}\sum_{i=1}^n\left\Vert\bc_i\right\Vert_2^2\bx_i\bx_i^{\top}=\sum_{i=1}^nc_i\bx_i\bx_i^{\top}.
\end{equation*}
That is, the feature matrix becomes a linear combination of the outer products of the training samples.
This property of a feature matrix $\bM$ is preserved under the iterations of \Cref{alg:training_kernel_methods}.

\begin{proposition}\label{prop:feature_matrix_linear_combination}
    Suppose $f$ is a RFM with $\bM=\sum_{i=1}^nc_i\bx_i\bx_i^\top$.
    Then, one iteration of \Cref{alg:training_kernel_methods} yields a RFM with feature matrix $\bM^\prime=\sum_{i=1}^nc_i^\prime\bx_i\bx_i^\top$ for some $c_i^\prime\in\R$.
\end{proposition}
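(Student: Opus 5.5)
The plan is to compute the Jacobian of the RFM explicitly at each training point and show that every term of the resulting Gram matrix lies in the span of the outer products $\bx_i\bx_i^\top$. Write $f(\bx)=\sum_{j=1}^n\balpha_j\phi_{\bM}(\bx_j,\bx)$ with $\phi_{\bM}(\bx_j,\bx)=\exp(-\gamma\Vert\bx-\bx_j\Vert_{\bM})$. Whatever $\balpha$ the kernel step produces, only $\bM$ enters the Jacobian's directions, so $\balpha$ can be treated as a fixed set of coefficients. (If $\by$ is one-hot, each $\balpha_j$ is a vector in $\R^C$; this changes nothing below.)

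The first step is the chain rule. For $\bx\neq\bx_j$,
\begin{equation*}
\nabla_{\bx}\phi_{\bM}(\bx_j,\bx)=-\gamma\,\phi_{\bM}(\bx_j,\bx)\,\frac{\bM(\bx-\bx_j)}{\Vert\bx-\bx_j\Vert_{\bM}},
\end{equation*}
so every row of $\bJ_{\bx_i}$ has the form $\bM\bv$ for some vector $\bv$. Hence $\bJ_{\bx_i}=\bP_i\bM$ for a matrix $\bP_i\in\R^{C\times d}$. Its $c$-th row is $-\gamma\sum_{j}[\balpha_j]_c\,\phi_{\bM}(\bx_j,\bx_i)(\bx_i-\bx_j)^\top/\Vert\bx_i-\bx_j\Vert_{\bM}$. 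Because $\bM$ is symmetric, this gives
\begin{equation*}
\bM'=\frac{1}{n}\sum_i\bM\bP_i^\top\bP_i\bM.
\end{equation*}

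The second step substitutes $\bM=\sum_k c_k\bx_k\bx_k^\top$. Then $\bM\bv=\sum_k c_k(\bx_k^\top\bv)\bx_k$ lies in $\mathrm{span}\{\bx_k\}$, and $\bM'=\sum_{k,l}\beta_{kl}\bx_k\bx_l^\top$ for some symmetric coefficient matrix $\beta$.

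This is where the main obstacle sits. The span of the rank-one matrices $\bx_k\bx_k^\top$ contains no cross terms $\bx_k\bx_l^\top$ with $k\neq l$, and in general the off-diagonal $\beta_{kl}$ do not vanish. To close the argument I would try the following, in order:
\begin{enumerate}
\item Exploit the structure of $\bP_i$. Its rows are combinations of the vectors $\bx_i-\bx_j$, so $\bM(\bx_i-\bx_j)=c_i\Vert\bx_i\Vert^2\ldots$ still produces mixtures of terms.
\item Invoke the normal-alignment assumption that motivated the proposition. Under it, $\bJ_{\bx_i}=\bc_i\bx_i^\top$, and then $\bJ_{\bx_i}^\top\bJ_{\bx_i}=\Vert\bc_i\Vert_2^2\bx_i\bx_i^\top$ immediately yields coefficients $c_i'=\Vert\bc_i\Vert_2^2/n$.
\item If that assumption is not available, reinterpret the claimed span as the set of symmetric matrices $\bX B\bX^\top$ with $\bX=[\bx_1,\dots,\bx_n]$. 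Step two shows this set is preserved, since $\bM'=\bX\beta\bX^\top$. When the $\bx_k$ are orthonormal, or more generally when $\beta$ is diagonal, this reduces to the stated form.
\end{enumerate}

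I expect the diagonalization in the third step to be the delicate point. I would therefore state clearly which of the alternatives in the second and third steps actually justifies the "$c_i'$" form.
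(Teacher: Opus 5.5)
Your first two steps are correct, and they are essentially the whole of the paper's argument. The paper shows $\nabla f(\bx_i)=\bM\bv_i\in\mathrm{span}\{\bx_j\}$, hence $\bM'=\frac{1}{n}\sum_{i,j}\tilde c_{ij}\bx_i\bx_j^\top$. It then concludes $\bM'=\sum_i c_i'\bx_i\bx_i^\top$ ``as before'', silently discarding exactly the cross terms you flagged. That obstacle cannot be removed, because the proposition is false in general as stated.

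Take $d=n=2$, $C=1$, $\bx_1=\be_1$, $\bx_2=\be_2$, and $\bM=\mathrm{diag}(c_1,c_2)=c_1\bx_1\bx_1^\top+c_2\bx_2\bx_2^\top$ with $c_1,c_2>0$. Take $\by=(1,-1)^\top$, so that $\balpha_1,\balpha_2\neq0$ (whether $\balpha=\bK\by$ as written or $\bK^{-1}\by$). Drop the self term: it is $0/0$ for the Laplace kernel, so your row formula must exclude $j=i$, and it vanishes for the squared-distance kernel used in the paper's proof. One gets
\begin{gather*}
\bJ_{\bx_1}^\top=s_1\bM(\bx_1-\bx_2)=s_1\begin{pmatrix}c_1\\-c_2\end{pmatrix},\qquad\bJ_{\bx_2}^\top=s_2\bM(\bx_2-\bx_1)=-s_2\begin{pmatrix}c_1\\-c_2\end{pmatrix},\\
\bM'=\frac{s_1^2+s_2^2}{2}\begin{pmatrix}c_1^2&-c_1c_2\\-c_1c_2&c_2^2\end{pmatrix},
\end{gather*}
where $s_1$ is a nonzero multiple of $\balpha_2$ and $s_2$ a nonzero multiple of $\balpha_1$. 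The span $\mathrm{span}\{\bx_1\bx_1^\top,\bx_2\bx_2^\top\}$ consists of the diagonal matrices, and $c_1c_2\neq0$, so $\bM'$ is not of the claimed form for any $c_i'$. In particular, your alternative (1) cannot succeed.

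Your alternatives (2) and (3) do not prove the stated proposition either. Alternative (2) imports a hypothesis---normal alignment of the updated model's Jacobians---that is neither assumed nor implied by $\bM=\sum_k c_k\bx_k\bx_k^\top$. In the example, $\bJ_{\bx_1}$ is parallel to $(c_1,-c_2)$, not to $\bx_1=\be_1$.

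Alternative (3) proves a different statement, and its remark that orthonormal $\bx_k$ give the stated form is wrong. The example has orthonormal data. When your $\bX=[\bx_1,\dots,\bx_n]$ has orthonormal columns, $\bX\beta\bX^\top$ lies in $\mathrm{span}\{\bx_k\bx_k^\top\}$ if and only if $\beta$ is diagonal, and nothing forces that.

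The literal form holds only under extra conditions. One such condition is that the $\bx_k\bx_k^\top$ span all symmetric matrices with range in $\mathrm{span}\{\bx_k\}$, as for generic data with $n\geq d(d+1)/2$, where the claim is trivial. The sound fix is to restate the proposition as invariance of $\{\bX B\bX^\top: B=B^\top\}$, i.e.\ of symmetric feature matrices whose range lies in $\mathrm{span}\{\bx_k\}$. Your second step proves this completely. It is also all that the paper's argument actually establishes, since that argument itself starts from this more general hypothesis.
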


\textit{Proof.} See \Cref{proof:feature_matrix_linear_combination}. \qed

\subsubsection{Recursive Feature Alignment Machines}

While grokking and modular arithmetic represent explicit examples of sparse learning environments, tabular data presents a uniquely pervasive case of implicit sparsity in machine learning. 
Tabular datasets typically fall into the sparse regime due to the breakdown of the MH.
Indeed, tabular datasets are fundamentally anisotropic; their dimensions represent features with distinct semantics, scales, and distributions.
Thus, defining a natural, continuous metric space or assuming a smooth underlying geometry between samples is challenging.
This geometric disconnect is compounded by the typically low sample-to-feature-complexity ratio inherent to tabular tasks.
Therefore, the theory of normal alignment seems particularly applicable to models training on tabular data.

As a result of \Cref{prop:feature_matrix_linear_combination}, we consider a generalized strategy for setting $\bM_{\text{init}}$ in \Cref{alg:training_kernel_methods}, to yield Recursive Feature Alignment Machines (RFAMs).
RFAMs are trained using \Cref{alg:training_kernel_methods} with $\bM_{\text{init}}$ to $(1-\alpha)\mathrm{Cov}\left(\bX\right)+\alpha\bI$, where $\bX:=\left(\bx_1,\dots,\bx_n\right)^\top\in\R^{n\times d}$ is the data matrix, $\mathrm{Cov}(\bX)$ is its covariance matrix, $\bI\in\R^{d\times d}$ is the identity matrix, and $\alpha\in[0,1]$.
Hence, with $\alpha$ equal to one, we recover RFMs.

To compare RFMs with RFAMs, we consider the tabular tasks of \citet{fernandez-delgadoWeNeedHundreds2014} and \citet{erickson2025tabarena}.
For the tasks reported by \citet{fernandez-delgadoWeNeedHundreds2014}, we compare RFMs ($\alpha=1.0$) and RFAMs with $\alpha=0.0$.
In \Cref{tab:rfms}, we see that RFAMs are analogous to an ``adversarially trained'' RFMs.
Although their test accuracy is lower than RFMs, they exhibit greater robustness.
This resembles the accuracy-robustness trade-off present in DNs~\cite{tsiprasRobustnessMayBe2019}.

\begin{table}[ht]
    \centering
    \caption{
    \textbf{RFAMs yield significantly more robust models than RFMs on tabular tasks.}
    Using the tabular datasets of \citet{fernandez-delgadoWeNeedHundreds2014}, we compare the test accuracy and robustness of RFAMs and RFMs.
    Robustness is measured as the proportion of correctly classified test samples that are successfully perturbed by a PGD-perturbation~\cite{madryDeepLearningModels2018} of amplitude $1.0$ to become misclassified by the model.
    Normal alignment is measured across the training set.
    All results are significant at a $1\%$ level using a t-test.
    For more experimental details and results, see \Cref{exp:rfms}.
    % Code to reproduce these results and implement RFAMs can be found \href{https://anonymous.4open.science/r/RFAM-BB6E/README.md}{here.}
    }
    \vspace{0.5em}
    \begin{tabular}{ccccc}
        \toprule
        Method & Test Accuracy $(\uparrow)$ & Attack Success Rate $(\downarrow)$ & Normal Alignment $(\uparrow)$ \\
        \midrule
        RFMs & $\mathbf{84.9\%}$ & $71.6\%$ & $0.31$ \\
        RFAMs & $83.2\%$ & $\mathbf{68.3\%}$ & $\mathbf{0.38}$ \\
        \bottomrule
    \end{tabular}
    \label{tab:rfms}
\end{table}

For the tasks of \citet{erickson2025tabarena}, we consider values of $\alpha$ in the range $\{0.0,0.001,0.01,0.1,1.0\}$.
We find that $\alpha$ values less than $1.0$ improve performance in a third of the cases, by $\mathbf{5.27}\%$ on average in terms of validation error.
The breakdown of the validated $\alpha$ values across these tasks is $34$, $11$, $0$, $2$ and $4$ for $\alpha$ values $1.0$, $0.1$, $0.01$, $0.001$ and $0.0$, respectively.
In \Cref{tab:tabarena}, we provide a detailed breakdown of the individual improvements.

\section{Discussion}\label{sec:discussion}

This work provides a rigorous geometric framework to explain how models succeed when classical assumptions, like the manifold hypothesis, fail.
We show that normal alignment serve as a structural signature of models trained in these sparse settings, offering a mechanistic explanation for how models successfully navigate data-scarce or discrete environments.

We actively translate this theoretical framework into actionable algorithms. 
By regularizing for normal alignment via GrokAlign, we consistently accelerates grokking dynamics across a diverse set of tasks. 
Similarly, by framing tabular datasets as fundamentally sparse, Recursive Feature Alignment Machines (RFAMs) leverage these geometric principles to achieve superior adversarial robustness over standard Recursive Feature Machines (RFMs).

A key direction for future work is mapping the phase transition between sparse and ``dense'' learning regimes, particularly since aggressive data augmentation diminishes a model's reliance on explicit alignment. 
Additionally, extending normal alignment regularization to domains dominated by discrete tokens and sparse signals, such as reinforcement learning and large language models, represents a promising frontier for accelerating feature learning at scale.

\section*{Reproducibility Statement}

For an implementation of GrokAlign and the code to reproduce the results of \Cref{tab:acceleration}, refer to the following repository: \url{https://github.com/ThomasWalker1/GrokAlign}.
For an implementation of RFAMs and the code to reproduce the results of \Cref{tab:rfms}, refer to the following repository: \url{https://github.com/ThomasWalker1/RFAM}.

\section*{Acknowledgments}

This work was supported by ONR grant N00014-23-1-2714, DOE grant DE-SC0020345, DOI grant 140D0423C0076, and a Google Cloud Computing Award.

\newpage
\bibliographystyle{unsrtnat}
\bibliography{references}
\newpage
\appendix

\section{Constructing Normal Aligned Deep Networks}\label{sec:constructing_normal_aligned_deep_networks}

For simplicity, consider each $\bx_i$ to be unit norm and let $f(\bx)=\bW_2\sigma\left(\bW_1\bx+\bb\right)$ where $\bW_1\in\R^{n\times d}$, $\bb\in\R^n$, $\bW_2\in\R^{C\times n}$ and $\sigma$ is the ReLU activation function.
Let $m_i:=\max_{i\neq j}\left(\left\langle\bx_i,\bx_j\right\rangle\right)$.
Then setting $\bW_1^{(i)}=\bx_i$ and $\bb_i:=-\frac{1+m_i}{2}$ is sufficient to yield a normal aligned deep network.
Intuitively, the deep network is constructed by positioning the activation level sets of each neuron (i.e., the hyperplane along which the input to the nonlinearity is zero) such that the $i^\text{th}$ neuron is only active for $\bx_i$ and the normal of the hyperplane is parallel to the direction $\bx_i$.
A visualization of this procedure is shown in \Cref{fig:construction}.

Since this construction is only dependent on $\bW_1$ and $\bb$, the parameters $\bW_2$ do not contribute to the alignment property.
Instead, $\bW_2$ manipulates the output of the $i^{\text{th}}$ neuron, which is $\bz_i:=\frac{1-m_i}{2}$, to form the output of the model.
If $\mathcal{D}$ is a {\em denser} sample, then $m_i$ is closer to one, which means that $\bz_i$ approaches zero.
Hence, for a fixed output, $\bW_2$ must have a larger norm, which means that under regularity constraints (e.g., weight decay) the normal-aligned solution is more challenging to learn.
In \Cref{fig:construction} we visualize this by coloring the scatter points in the second and fourth panel according to the norm of $\bW_2$ necessary to ensure that $f(\bx_i)=1$.

\section{Normal Alignment For Specific Loss Functions}\label{sec:alignment_loss_functions}

For specific loss functions, we can characterize the $\bc_i$ of \Cref{thm:optimal_jacobian}.
For the squared-error loss function, the $c^\text{th}$ row of the Jacobian at the training data point $\bx_i$ is given by (left), while for the cross-entropy loss function, it is given by (right):

\begin{minipage}{0.45\textwidth}
\begin{equation}\label{eq:ci_se}
    \bJ_{\bx_i}^{(c)}=\begin{cases}\beta\bx_i & c=y_i\\\boldsymbol{0}&c\neq y_i,\end{cases}
\end{equation}
\end{minipage}
\begin{minipage}{0.45\textwidth}
\begin{equation}\label{eq:ci_ce}
    \bJ_{\bx_i}^{(c)}=\begin{cases}\gamma\sqrt{C-1}\bx_i&c=y_i\\-\frac{\gamma}{\sqrt{C-1}}\bx_i & c\neq y_i,\end{cases}
\end{equation}
\end{minipage}

with $\beta,\gamma$ positive constants dependent on $\bx_i$.

In particular, the normal alignment of \Cref{eq:ci_se} yields centroids $\bmu_{\bx_i}$ which are projections of the training data onto a hyper-sphere, whereas \Cref{eq:ci_ce} yields centroids that are zero.
This latter case is not an issue, since it simply implies that the deep network's output on the training data is linear across the last few intermediate hidden layers.
Indeed, with \Cref{fig:resnet_alignment}, we observe that the maps from the input space of intermediate layers to the output space of residual neural networks~\cite{heDeepResidualLearning2016} exhibit centroid alignment.

\section{Gaussian Kernel Logistic Regression Models}\label{sec:gaussian_kernel_logistic_regression_models}

As a preliminary example, we consider a Gaussian kernel logistic regression model.
Namely, for $c=1,\dots,C$ we have $f_c(\bx):=\left[f(\bx)\right]_c=\sum_{k=1}^K\bW_{ck}\phi_k(\bx)$ for $\bW\in\R^{C\times K}$, $\phi_k(\bx)=\exp\left(-\gamma\left\Vert\bx-\boldsymbol{\tau}_i\right\Vert_2^2\right)$, $\boldsymbol{\tau}_i\in\R^d$, and $\gamma\in\R$. 
The parameters of the model are the weights $\bW$ and the centers $\left\{\boldsymbol{\tau}\right\}_{k=1}^K$, whereas $K$ and $\gamma$ are hyper-parameters.

\begin{lemma}\label{lem:gaussian_jacobian}
    For $f$ a Gaussian kernel logistic model, $\bJ_{\bx}^{(c)}=-2\gamma\sum_{k=1}^K\bW_{ck}\phi_k(\bx)\bx+2\gamma\sum_{k=1}^K\bW_{ck}\phi_k(\bx)\boldsymbol{\tau}_k$.
\end{lemma}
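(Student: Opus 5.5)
Since $\bW_{ck}$ does not depend on $\bx$, linearity of differentiation gives
\[
\bJ^{(c)}_{\bx}=\nabla_{\bx}f_c(\bx)=\sum_{k=1}^K\bW_{ck}\nabla_{\bx}\phi_k(\bx).
\]
So it is enough to compute the gradient of a single Gaussian feature $\phi_k$ and then substitute. (The statement's definition of $\phi_k$ writes the center with index $i$; we read it as $\boldsymbol{\tau}_k$, which is clearly intended.)

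For a single feature we use the chain rule on $\phi_k(\bx)=\exp\left(-\gamma g_k(\bx)\right)$, where $g_k(\bx)=\left\Vert\bx-\boldsymbol{\tau}_k\right\Vert_2^2=(\bx-\boldsymbol{\tau}_k)^\top(\bx-\boldsymbol{\tau}_k)$. We have $\nabla_{\bx}g_k(\bx)=2(\bx-\boldsymbol{\tau}_k)$, and the derivative of $\exp$ is itself. Hence
\[
\nabla_{\bx}\phi_k(\bx)=-2\gamma\,\phi_k(\bx)\,(\bx-\boldsymbol{\tau}_k).
\]

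Substituting this into the sum and splitting $(\bx-\boldsymbol{\tau}_k)$ into its two terms gives
\[
\bJ^{(c)}_{\bx}=-2\gamma\sum_{k=1}^K\bW_{ck}\phi_k(\bx)\,\bx+2\gamma\sum_{k=1}^K\bW_{ck}\phi_k(\bx)\,\boldsymbol{\tau}_k,
\]
which is the claimed expression. Here $\bJ^{(c)}_{\bx}$ is treated as a vector in $\R^d$, consistent with the paper's convention that it is the $c^{\text{th}}$ row of $\bJ_{\bx}$.

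None of these steps is a real obstacle. The only things to watch are the sign coming from the $-\gamma$ in the exponent and keeping the row/column convention consistent.
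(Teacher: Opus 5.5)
Your proof is correct and is essentially the paper's argument: the paper also differentiates $f_c$ term by term to get $\sum_{k=1}^K\bW_{ck}(-2\gamma)(\bx-\boldsymbol{\tau}_k)\phi_k(\bx)$, and the stated form follows once you split $(\bx-\boldsymbol{\tau}_k)$. You are also right to read the center in the definition of $\phi_k$ as $\boldsymbol{\tau}_k$, which is what the statement intends.
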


\textit{Proof.} See \Cref{proof:gaussian_jacobian}. \qed

\Cref{lem:gaussian_jacobian} shows that a Gaussian kernel logistic model's Jacobians have two components, one of which is aligned to the input point, and another which is a weighted sum of the model's centers.
Therefore, normal alignment emerges either when this second component is zero or is aligned with the input point.
In practice, see \cref{fig:gaussian_log_reg}, we see that the model progresses toward the normal aligned state, with its effective rank collapsing towards one.

\section{Models with Jacobians Equal to Zero}\label{sec:zero_jacobians}

Normal aligned classifiers are optimally robust amongst classifiers whose input-output Jacobians are non-zero.
Here, we demonstrate theoretically that this class of classifiers emerges under natural regularized training.
Furthermore, we show that classifiers with zero Jacobians suffer from performance degradation.

Suppose that weight decay is applied with a coefficient $\eta>0$.
Then there exists a $\mathcal{B}(\eta)$ for the $\ell_2$-norm of $\bb_{\bx}$ with the property that $\mathcal{B}(\eta)\to0$ as $\eta\to\infty$.
Thus,
\begin{equation*}
    \left\vert\left[\bb_{\bx}\right]_y-\left[\bb_{\bx}\right]_c\right\vert\leq\sqrt{2}\left\Vert\bb_{\bx}\right\Vert_2\leq\sqrt{2}\mathcal{B}(\eta).
\end{equation*}
Now \Cref{thm:optimal_jacobian} already demonstrates that there is an optimization pressure to have non-zero Jacobians.
However, we can deduce this more naturally by noting that a sufficiently well-trained classifier will satisfy some classification margin condition.
Namely, $f_y(\bx)-f_c(\bx)\geq\gamma$ for some $\gamma>0$.
Consequently,
\begin{equation*}
    \left(\bJ_{\bx}^{(y)}-\bJ_{\bx}^{(c)}\right)^\top\bx\geq\gamma-\left(\left[\bb_{\bx}\right]_y-\left[\bb_{\bx}\right]_c\right).
\end{equation*}
Assuming $\bx$ is non-zero, some rearranging allows us to conclude that $\bJ_{\bx}$ is non-zero as
\begin{equation*}
    \left\Vert\bJ_{\bx}^{(y)}-\bJ_x^{(c)}\right\Vert_2\geq\frac{\gamma-\sqrt{2}\mathcal{B}(\eta)}{\Vert\bx\Vert_2}.
\end{equation*}
Hence, for each $c\neq y$ we have
\begin{equation*}
    \frac{\left\vert\left[\bb_{\bx}\right]_y-\left[\bb_{\bx}\right]_c\right\vert}{\left\Vert\bJ_{\bx}^{(y)}-\bJ_{\bx}^{(c)}\right\Vert_2}\leq\frac{\sqrt{2}\mathcal{B}(\eta)\Vert\bx\Vert_2}{\gamma-\sqrt{2}\mathcal{B}(\eta)}=:\lambda_c.
\end{equation*}
Therefore, taking $\lambda=\max_{c\neq y}\lambda_c$ is sufficient to satisfy the ratio constraint of \Cref{sec:properties}.

We can similarly use this argument to show that explicitly regularizing for zero Jacobians can degrade the classifier's performance.
Indeed, if the Jacobians were zero, then $f(\bx)=\bb_x$, which implies the margin is given by $\min_{c\neq y}\left(\left[\bb_{\bx}\right]_y-\left[\bb_{\bx}\right]_c\right)$.
However, due to weight-decay, this implies that the margin is bounded by $\mathcal{B}(\eta)$.
Since larger margins are associated with better classifiers, this implies that classifiers with zero Jacobians perform worse.

We can empirically verify that in practice, models do not learn classifiers with zero Jacobians, and that regularizing for zero Jacobians affects performance.
We train a fully connected deep network on a subset of MNIST~\cite{lecunGradientBasedLearningApplied1998} of size $1000$.
We train the network adversarially by applying PGD~\cite{madryDeepLearningModels2018} to each training batch and weight decay.
Furthermore, we apply a Jacobian Frobenius norm penalty to the loss function with a weighting factor $\gamma$.
Throughout training, we monitor the model's clean, robust accuracy on a held-out test set, as well as the norms of the Jacobians and offset terms evaluated on the training data.

In \Cref{fig:gamma_analysis}, we see that for small values of $\gamma$ the norms of the Jacobians increase during training and the size of the offset terms converge to a bounded value.
It is only for large values of $\gamma$ for which solutions with Jacobians equal to zero are learned.
However, in these cases, the model's performance severely degrades; perhaps because the corresponding offset terms still converge to bounded values.

Thus, it appears that the optimally robust solution of a classifier with Jacobians equal to zero suffers from not being optimal for the task.
This is a common trade-off when training adversarially robust models.
An optimally robust model is just the constant function; however, such a function is not able to learn the task~\cite{tsiprasRobustnessMayBe2019}.

\section{Gradient-Based Regularization Ablation}\label{sec:ablation}

In \Cref{sec:grokalign}, we introduce the GrokAlign method for regularizing for normal alignment.
In this section, we compare it with other forms of regularization and examine its practical implementation.
With \Cref{sec:direct_regularization,sec:nuclear_norm_regularization} we demonstrate how direct optimization for normal alignment and nuclear norm regularization are similar to GrokAlign, $\mathcal{R}$, in that they involve the regularization of the Frobenius norm a model's Jacobian.
Thus, with \Cref{sec:implementing_frob_norm_regularization} we consider the practical implementation of computing the Frobenius norm of a model's Jacobian.
Therefore, in \Cref{sec:empirical_comparison}, we can empirically compare the performance of these different forms of regularization at inducing normal alignment.

\subsection{Directly Optimizing for Alignment}\label{sec:direct_regularization}

Let $\hat{\bu}\in\R^d$ be a unit vector.
Let $\bP_{\hat{\bu}^{\perp}}=\bI-\hat{\bu}\hat{\bu}^\top$ be the orthogonal projector onto the subspace orthogonal to $\hat{\bu}$. Let $\mathcal{R}_{\perp}(\bx)=\left\Vert\bJ_{\bx}\bP_{\hat{\bu}^{\perp}}\right\Vert_F^2$.
Intuitively, $\mathcal{R}_{\perp}(\bx)$ measures how far $\bJ_{\bx}$ deviates from spanning the direction of $\hat{\bu}$.

\begin{proposition}\label{prop:alignment_frob-proj_equivalence}
    With notation as above, $\min_{\bc\in\R^C}\left\Vert\bJ_{\bx}-\bc\hat{\bu}^\top\right\Vert_F^2=\mathcal{R}_{\perp}(\bx)$ with minimizer $\bc^{\star}=\bJ_{\bx}\hat{\bu}$.
\end{proposition}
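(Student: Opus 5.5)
The plan is to treat this as a least-squares problem, solved row by row. It is a strictly convex quadratic in $\bc$.

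\textbf{Decomposing the Jacobian.} We write $\bJ_{\bx}=\bJ_{\bx}\hat{\bu}\hat{\bu}^\top+\bJ_{\bx}\bP_{\hat{\bu}^{\perp}}$, which holds because $\hat{\bu}\hat{\bu}^\top+\bP_{\hat{\bu}^{\perp}}=\bI$. Then for any $\bc\in\R^C$,
\begin{equation*}
\bJ_{\bx}-\bc\hat{\bu}^\top=\left(\bJ_{\bx}\hat{\bu}-\bc\right)\hat{\bu}^\top+\bJ_{\bx}\bP_{\hat{\bu}^{\perp}}.
\end{equation*}

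\textbf{Orthogonality of the two terms.} The two summands are orthogonal in the Frobenius inner product:
\begin{equation*}
\mathrm{tr}\left(\hat{\bu}\left(\bJ_{\bx}\hat{\bu}-\bc\right)^\top\bJ_{\bx}\bP_{\hat{\bu}^{\perp}}\right)=\left(\bJ_{\bx}\hat{\bu}-\bc\right)^\top\bJ_{\bx}\bP_{\hat{\bu}^{\perp}}\hat{\bu}=0,
\end{equation*}
since $\bP_{\hat{\bu}^{\perp}}\hat{\bu}=\boldsymbol{0}$. This is the only step needing care, namely using the cyclic property of the trace correctly.

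\textbf{Pythagoras and minimization.} By the Pythagorean identity,
\begin{equation*}
\left\Vert\bJ_{\bx}-\bc\hat{\bu}^\top\right\Vert_F^2=\left\Vert\bJ_{\bx}\hat{\bu}-\bc\right\Vert_2^2\left\Vert\hat{\bu}\right\Vert_2^2+\left\Vert\bJ_{\bx}\bP_{\hat{\bu}^{\perp}}\right\Vert_F^2=\left\Vert\bJ_{\bx}\hat{\bu}-\bc\right\Vert_2^2+\mathcal{R}_{\perp}(\bx),
\end{equation*}
using $\Vert\bv\bw^\top\Vert_F=\Vert\bv\Vert_2\Vert\bw\Vert_2$ and $\Vert\hat{\bu}\Vert_2=1$. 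The first term is non-negative and vanishes exactly when $\bc=\bJ_{\bx}\hat{\bu}$. Hence the minimum value is $\mathcal{R}_{\perp}(\bx)$, attained uniquely at $\bc^\star=\bJ_{\bx}\hat{\bu}$.

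There is no real obstacle: the argument reduces to verifying the cross term vanishes.
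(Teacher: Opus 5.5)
Your proposal is correct and takes essentially the same route as the paper: insert $\bI=\bP_{\hat{\bu}^{\perp}}+\hat{\bu}\hat{\bu}^\top$, show the Frobenius cross term vanishes, and read off $\Vert\bJ_{\bx}\hat{\bu}-\bc\Vert_2^2+\mathcal{R}_{\perp}(\bx)$. Your trace identity $\left(\bJ_{\bx}\hat{\bu}-\bc\right)^\top\bJ_{\bx}\bP_{\hat{\bu}^{\perp}}\hat{\bu}=0$ justifies the cross term more cleanly than the paper's wording, which says ``columns'' where it should say rows, and it also gives you uniqueness of the minimizer.
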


\begin{proof}
    Observe that
    \begin{align*}
        \left\Vert\bJ_{\bx}-\bc\hat{\bu}^\top\right\Vert_F^2&=\left\Vert\bJ_{\bx}\left(\bP_{\hat{\bu}^\perp}+\hat{\bu}\hat{\bu}^\top\right)-\bc\hat{\bu}^\top\right\Vert_F^2\\&=\left\Vert\bJ_{\bx}\bP_{\hat{\bu}^\perp}+\left(\bJ_{\bx}\hat{\bu}-\bc\right)\hat{\bu}^\top\right\Vert_F^2\\&=\left\Vert\bJ_{\bx}\bP_{\hat{\bu}^\perp}\right\Vert_F^2+\left\Vert\left(\bJ_{\bx}\hat{\bu}-\bc\right)\hat{\bu}^\top\right\Vert_F^2+2\left\langle\bJ_{\bx}\bP_{\hat{\bu}^\top},\left(\bJ_{\bx}\hat{\bu}-\bc\right)\hat{\bu}^\top\right\rangle_F\\&\overset{(1)}{=}\left\Vert\bJ_{\bx}\bP_{\hat{\bu}^\perp}\right\Vert_F^2+\left\Vert\left(\bJ_{\bx}\hat{\bu}-\bc\right)\hat{\bu}^\top\right\Vert_F^2\\&=\left\Vert\bJ_{\bx}\bP_{\hat{\bu}^\perp}\right\Vert_F^2+\left\Vert\bJ_{\bx}\hat{\bu}-\bc\right\Vert_2^2,
    \end{align*}
    where in (1) we have used the fact that $\bJ_{\bx}\bP_{\hat{\bu}^\perp}$ has columns orthogonal to $\hat{\bu}$ and $\left(\bJ_{\bx}\hat{\bu}-\bc\right)\hat{\bu}^\top$ has columns in the span of $\hat{\bu}$.
    Thus, the result follows.
\end{proof}

From \cref{prop:alignment_frob-proj_equivalence} it follows that regularizing $\mathcal{R}_{\perp}(\bx)$ is equivalent to fitting the Jacobian $\bJ_{\bx}$ to a matrix aligned to the vector $\hat{\bu}$.
Thus, to attain normal alignment, it is sufficient to replace $\hat{\bu}$ with $\frac{\bx}{\Vert\bx\Vert}$.

\subsection{Nuclear Norm Regularization.}\label{sec:nuclear_norm_regularization}

The normal-aligned solution represent a deep network with rank-one Jacobians at the training.
Since, the nuclear norm constraint is the convex relaxation of minimizing rank~\cite{rechtGuaranteedMinimumrankSolutions2010}, it seems appropriate to consider nuclear norm regularization of a deep network's Jacobian to induce normal alignment.

In \citet{scarvelisNuclearNormRegularization2024}, it is shown that regularization of the Frobenius norms of the Jacobians of two sub-components of a classifier, whose composition yields the full input-output mapping, is equivalent to regularizing the nuclear norm of the full input-output Jacobian of the DN.

\begin{theorem}[\citealt{scarvelisNuclearNormRegularization2024}]
    Suppose $f=g\circ h$.
    Then minimizing $\ell(f(\bx),y)+\eta\left\Vert\bJ_{\bx}(f)\right\Vert_{\star}$, where $\Vert\cdot\Vert_{\star}$ denotes nuclear norm, is equivalent to minimizing $\ell(f(\bx),y))+\eta\mathcal{R}_{\text{Nuc}}(\bx)$, where $\mathcal{R}_{\text{Nuc}}(\bx):=\frac{1}{2}\left(\left\Vert\bJ_{h(\bx)}(g)\right\Vert_F^2+\left\Vert\bJ_{\bx}(h)\right\Vert_F^2\right)$.
\end{theorem}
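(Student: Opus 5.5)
The plan is to prove the two directions of the claimed equivalence separately, both resting on the variational characterization of the nuclear norm of a product. Write $\bJ_{\bx}(f)=\bJ_{h(\bx)}(g)\bJ_{\bx}(h)=:\bA\bK$ by the chain rule. The key identity I would establish first is
\begin{equation*}
    \left\Vert\bM\right\Vert_{\star}=\min_{\bA\bK=\bM}\frac{1}{2}\left(\left\Vert\bA\right\Vert_F^2+\left\Vert\bK\right\Vert_F^2\right),
\end{equation*}
where the inner dimension of the factorization is at least $\mathrm{rank}(\bM)$.

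For the inequality $\Vert\bA\bK\Vert_\star\leq\frac12(\Vert\bA\Vert_F^2+\Vert\bK\Vert_F^2)$ I would use the dual characterization $\Vert\bM\Vert_\star=\max_{\Vert\bQ\Vert_{op}\leq1}\mathrm{tr}(\bQ^\top\bM)$. Taking $\bQ=\bP_L\bP_R^\top$ from the SVD $\bM=\bP_L\Sigma\bP_R^\top$ gives $\mathrm{tr}(\bQ^\top\bA\bK)=\langle\bA^\top\bP_L,\bK\bP_R\rangle_F$. Cauchy--Schwarz then bounds this by $\Vert\bA^\top\bP_L\Vert_F\Vert\bK\bP_R\Vert_F\leq\Vert\bA\Vert_F\Vert\bK\Vert_F$, and AM--GM finishes the inequality. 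For equality I would exhibit the balanced factorization $\bA=\bP_L\Sigma^{1/2}$, $\bK=\Sigma^{1/2}\bP_R^\top$, for which both Frobenius norms squared equal $\mathrm{tr}(\Sigma)$.

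With the identity in hand, the comparison of objectives goes as follows. For any parameters, $\ell(f(\bx),y)+\eta\Vert\bJ_{\bx}(f)\Vert_\star\leq\ell(f(\bx),y)+\eta\mathcal{R}_{\text{Nuc}}(\bx)$, so the infimum of the nuclear objective is at most that of the Frobenius objective. For the reverse, I would take an optimizer of the nuclear objective and rebalance the two factors so that the bound becomes tight. Rebalancing means replacing $h$ by $\bT\circ h$ and $g$ by $g\circ\bT^{-1}$ for an invertible linear $\bT$. This leaves $f$, and hence $\ell$, unchanged and changes the local factors to $\bA\bT^{-1}$ and $\bT\bK$. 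I would choose $\bT$ so that the new factors realize the SVD-balanced factorization, which makes $\mathcal{R}_{\text{Nuc}}$ equal to the nuclear norm. The two problems then share optimal values and optimal functions $f$, and this is the sense of ``equivalent'' I would prove, following \citet{scarvelisNuclearNormRegularization2024}.

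The main obstacle is this rebalancing step. Equality requires an exact balanced factorization, and an invertible $\bT$ may not reach it when the inner dimension exceeds the rank, nor when one requires the reparametrized $g,h$ to stay within the architecture class. I would first assume the hypothesis classes for $g$ and $h$ are closed under composition with invertible linear maps, as they are for a linear layer at the interface. If that is not enough, I would state the equivalence at the level of the function $f$ by minimizing over all factorizations $f=g\circ h$ with the given $f$. The pointwise identity then gives $\inf_{g\circ h=f}\mathcal{R}_{\text{Nuc}}(\bx)=\Vert\bJ_{\bx}(f)\Vert_\star$ directly.
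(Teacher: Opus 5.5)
The paper gives no proof of this statement; it only quotes it from \citet{scarvelisNuclearNormRegularization2024}, so I am judging your argument on its own terms.

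Your first half is correct and is the standard ingredient. Write $\bA=\bJ_{h(\bx)}(g)$, $\bK=\bJ_{\bx}(h)$, $\bM=\bA\bK=\bP_L\Sigma\bP_R^\top$ of rank $r$, and $k$ for the dimension of the space between $h$ and $g$. The choice $\bQ=\bP_L\bP_R^\top$, followed by Cauchy--Schwarz and AM--GM, gives $\Vert\bM\Vert_\star\le\frac12(\Vert\bA\Vert_F^2+\Vert\bK\Vert_F^2)$. The balanced pair attains this bound, and this settles the easy direction.

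The gap is in the reverse direction. An invertible $\boldsymbol{T}$ preserves $\mathrm{rank}\,\bA$ and $\mathrm{rank}\,\bK$. Equality in your chain forces the columns of $\bA$ into $\mathrm{range}(\bP_L)$ and the rows of $\bK$ into $\mathrm{range}(\bP_R)$, so every minimizing pair has both factors of rank exactly $r$.
\begin{itemize}
\item Exact rebalancing is therefore possible if and only if $\mathrm{rank}\,\bA=\mathrm{rank}\,\bK=r$. The obstruction is the factor ranks, not the inner dimension, and it bites in the typical case where $h$ is an immersion at $\bx$ and $\bJ_{\bx}(f)$ is rank-deficient.
\item It can fail for every smooth factorization. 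Take $f(z)=z^3$ with $d=k=C=1$ at $x=0$: equality needs $h'(0)=g'(h(0))=0$, and then the chain rule gives $(g\circ h)'''(0)=0\neq 6$.
\item So the Frobenius problem need not attain the nuclear value. Your claim that the problems ``share optimal functions'' holds in only one direction: a minimizer $(g^\ast,h^\ast)$ composes to a nuclear minimizer, not conversely. The equivalence must be stated as equality of infima.
\item Your fallback does not close this. The matrix identity produces matrices, not maps with $g\circ h=f$, so only $\inf_{g\circ h=f}\mathcal{R}_{\text{Nuc}}(\bx)\ge\Vert\bJ_{\bx}(f)\Vert_\star$ is direct.
\end{itemize}

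The missing idea is a degenerating family of linear reparametrizations. Split $\R^k$ into four pieces:
\begin{itemize}
\item an $r$-dimensional complement of $\mathrm{range}(\bK)\cap\ker(\bA)$ inside $\mathrm{range}(\bK)$;
\item that intersection itself;
\item a complement of the intersection in $\ker(\bA)$;
\item a complement of $\mathrm{range}(\bK)+\ker(\bA)$.
\end{itemize}
Pass to coordinates adapted to this splitting, which is an invertible linear map allowed by your closure assumption. In these coordinates $\bK$ has block rows $\bK_1,\bK_2,\boldsymbol{0},\boldsymbol{0}$ and $\bA$ has block columns $\bA_1,\boldsymbol{0},\boldsymbol{0},\bA_4$, with $\bA_1\bK_1=\bM$. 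Hence $\bA_1$ and $\bK_1$ both have rank $r$ and can be balanced exactly by some $\boldsymbol{T}_1$, as in your rank-matched case. Now apply $\mathrm{diag}(\boldsymbol{T}_1,\epsilon\bI,\bI,\epsilon^{-1}\bI)$. This leaves $f$ and $\ell$ unchanged and gives $\mathcal{R}_{\text{Nuc}}(\bx)=\Vert\bM\Vert_\star+\frac{\epsilon^2}{2}(\Vert\bA_4\Vert_F^2+\Vert\bK_2\Vert_F^2)$. Letting $\epsilon\to0$ yields the infimum identity and hence equal optimal values.

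If $k\ge d$ there is a simpler choice: take $h(\bz)=(\boldsymbol{T}_\epsilon\bz,\boldsymbol{0})$ and $g(\bu)=f(\boldsymbol{T}_\epsilon^{-1}\bu_{1:d})$ with $\boldsymbol{T}_\epsilon=\bP_R\Sigma^{1/2}\bP_R^\top+\epsilon(\bI-\bP_R\bP_R^\top)$. This gives $\Vert\bJ_{\bx}(f)\Vert_\star+\frac{(d-r)\epsilon^2}{2}$.

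All of this works at a single $\bx$. For an objective averaged over several training points, one global $\boldsymbol{T}$ cannot in general balance every $\bx_i$ at once, since the balanced factors depend on $\bx_i$. You would then need point-dependent modifications of $g$ and $h$.
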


\subsection{Practical Implementation of Frobenius Norm Regularization}\label{sec:implementing_frob_norm_regularization}

We have demonstrated that we can reduce the practical implementation of $\mathcal{R}$, $\mathcal{R}_{\perp}$ and $\mathcal{R}_{\text{Nuc}}$ to understanding how to compute the Frobenius norms of a model's Jacobians.
In particular, it is unnecessary to compute the full Jacobians of the classifier.
Computing Jacobians of classifiers is computationally expensive.
Similar to how stochasticity makes gradient descent tractable in practice (i.e., stochastic gradient descent), we can use stochasticity to make the above regularizers tractable in practice.
We leverage the fact that $\mathrm{tr}\left(\bJ_{\bx}\bJ_{\bx}^\top\right)=\left\Vert\bJ_{\bx}\right\Vert_F^2$ to note that we can use Hutchinson's estimator to form an unbiased estimate of $\left\Vert\bJ_{\bx}\right\Vert_F^2$.

\begin{lemma}\label{lem:hutchinson}
    Let $\bJ\in\R^{C\times d}$, and let $\bz\in\R^d$ satisfy $\mathbb{E}(\bz)=\boldsymbol{0}$ and $\mathbb{E}\left(\bz\bz^\top\right)=\bI_d$.
    Then, $\mathbb{E}\left(\left\Vert\bJ\bz\right\Vert_2^2\right)=\mathrm{tr}\left(\bJ\bJ^\top\right)=\left\Vert\bJ\right\Vert_F^2$.
    Equally, if $\bu\in\R^C$ satisfies $\mathbb{E}\left(\bu\right)$ and $\mathbb{E}\left(\bu\bu^\top\right)=\boldsymbol{0}=\bI_C$.
    Then, $\mathbb{E}\left(\left\Vert\bJ^\top\bu\right\Vert_2^2\right)=\mathrm{tr}\left(\bJ\bJ^\top\right)=\left\Vert\bJ\right\Vert_F^2$.
\end{lemma}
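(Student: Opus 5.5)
The statement to prove is \Cref{lem:hutchinson}, the Hutchinson trace identity. The plan is to rewrite each squared norm as a trace, move the expectation inside the trace by linearity, and then apply the second-moment assumption. The first-moment assumption $\mathbb{E}(\bz)=\boldsymbol{0}$ is never used; only $\mathbb{E}(\bz\bz^\top)=\bI_d$ matters.

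For the first claim, write
\begin{equation*}
\left\Vert\bJ\bz\right\Vert_2^2=\bz^\top\bJ^\top\bJ\bz=\mathrm{tr}\left(\bz^\top\bJ^\top\bJ\bz\right)=\mathrm{tr}\left(\bJ^\top\bJ\bz\bz^\top\right).
\end{equation*}
The first equality holds because a scalar equals its own trace, and the second is the cyclic property of the trace. Since trace and multiplication by the fixed matrix $\bJ^\top\bJ$ are linear, the expectation commutes with them:
\begin{equation*}
\mathbb{E}\left(\left\Vert\bJ\bz\right\Vert_2^2\right)=\mathrm{tr}\left(\bJ^\top\bJ\,\mathbb{E}\left(\bz\bz^\top\right)\right)=\mathrm{tr}\left(\bJ^\top\bJ\right).
\end{equation*}
By cyclicity again, $\mathrm{tr}(\bJ^\top\bJ)=\mathrm{tr}(\bJ\bJ^\top)$. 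This equals $\sum_{c,j}\bJ_{cj}^2=\left\Vert\bJ\right\Vert_F^2$.

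The second claim follows from the same argument applied to $\bJ^\top\in\R^{d\times C}$ with the probe $\bu\in\R^C$. Here the hypotheses should be read as $\mathbb{E}(\bu)=\boldsymbol{0}$ and $\mathbb{E}(\bu\bu^\top)=\bI_C$; the statement as printed garbles these. We get
\begin{equation*}
\mathbb{E}\left(\left\Vert\bJ^\top\bu\right\Vert_2^2\right)=\mathrm{tr}\left(\bJ\bJ^\top\,\mathbb{E}\left(\bu\bu^\top\right)\right)=\mathrm{tr}\left(\bJ\bJ^\top\right)=\left\Vert\bJ\right\Vert_F^2.
\end{equation*}

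There is no real obstacle. The only technical point is justifying the exchange of expectation and trace, which needs the second moments of $\bz$ (respectively $\bu$) to be finite. This is implicit in the hypothesis that $\mathbb{E}(\bz\bz^\top)$ exists and equals the identity.
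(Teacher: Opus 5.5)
Your proof is correct. It is the standard Hutchinson argument: write $\left\Vert\bJ\bz\right\Vert_2^2=\mathrm{tr}\left(\bJ^\top\bJ\bz\bz^\top\right)$, pass the expectation through the trace, and use $\mathbb{E}(\bz\bz^\top)=\bI_d$; the same works for $\bu$, with the garbled hypotheses read as $\mathbb{E}(\bu)=\boldsymbol{0}$ and $\mathbb{E}(\bu\bu^\top)=\bI_C$. The paper gives no separate proof of this lemma and simply invokes it as the known Hutchinson estimator, so your argument supplies exactly what is implicitly intended.
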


The implementation of \Cref{lem:hutchinson} for estimating the Frobenius norm of $\bJ_{\bx}$ can be found in \Cref{alg:input_approximation}.
However, \Cref{lem:hutchinson} can equivalently be stated using random vectors in $\R^C$.
The implementation of this estimation of the Frobenius norm of $\bJ_{\bx}$ can be found in \Cref{alg:output_approximation}.

These estimators are sufficient for implementing GrokAlign, $\mathcal{R}$, and nuclear norm regularization, $\mathcal{R}_{\text{Nuc}}$.
In practice, we utilize random vectors in the output space of classifiers, to obtain an approximation of the Frobenius norm of the classifier's Jacobian, (i.e., \Cref{alg:output_approximation}).
The reason for this is that we can use the accumulated gradients from the forward pass to compute the Jacobian vector product $\bJ_{\bx}^\top\bu$ with \texttt{torch.autograd.grad}.
Using random vectors in the input space (i.e., \Cref{alg:input_approximation}) would require the use of \texttt{torch.nn.functional.jvp} which performs an additional forward pass through the classifier.

\begin{algorithm}[ht]
\caption{Estimating Frobenius norms of Jacobians using input random vectors.}\label{alg:input_approximation}
\begin{algorithmic}
\Require Classifier $f$, input $\bx\in\R^d$, Number of projections $K$.
\For{$k=1,\dots,K$}
\State Sample $\bu$ from $\mathcal{N}\left(\boldsymbol{0},\bI_d\right)$
\State $\bJ^{(\bu)}_{\bx}\leftarrow\bJ_{\bx}\bu$ \Comment{jvp, requires a forward pass.}
\State $s_k\leftarrow\left\Vert\bJ^{(\bu)}_{\bx}\right\Vert_2^2$
\EndFor
\State $\mathcal{R}(\bx)\leftarrow\frac{1}{K}\sum_{k=1}^Ks_k$\\
\Return $\mathcal{R}(\bx)$
\end{algorithmic}
\end{algorithm}

\begin{algorithm}[ht]
\caption{Estimating Frobenius norms of Jacobians using output random vectors.}\label{alg:output_approximation}
\begin{algorithmic}
\Require Classifier $f$, input $\bx\in\R^d$, Number of projections $K$.
\State Accumulate gradients $f(\bx)$ \Comment{Already necessary for computing classification loss.}
\For{$k=1,\dots,K$}
\State Sample $\bu$ from $\mathcal{N}\left(\boldsymbol{0},\bI_C\right)$
\State $\bJ^{(\bu)}_{\bx}\leftarrow\bJ_{\bx}^{\top}\bu$ \Comment{autograd, uses already accumulated gradients.}
\State $s_k\leftarrow\left\Vert\bJ^{(\bu)}_{\bx}\right\Vert_2^2$
\EndFor
\State $\mathcal{R}(\bx)\leftarrow\frac{1}{K}\sum_{k=1}^Ks_k$\\
\Return $\mathcal{R}(\bx)$
\end{algorithmic}
\end{algorithm}

The benefit of considering random vectors in the input space, is that the random vectors live in the same space as $\bx$.
This is useful, as an extension of \cref{lem:hutchinson} lets us similarly approximate $\left\Vert\bJ_{\bx}\bP_{\hat{\bx}^{\perp}}\right\Vert_F^2$ to get a practical way to implement the regularizer $\mathcal{R}_{\perp}(\bx)$.

\begin{lemma}\label{lem:hutchinson_projected}
    Let $\hat{\bu}\in\R^d$ be of unit length, and $\bP_{\hat{\bu}^{\perp}}=\bI-\hat{\bu}\hat{\bu}^\top$.
    Let $\bz\in\R^d$ be as in \cref{lem:hutchinson}, and $\bz_{\perp}=\bP_{\hat{\bu}^{\perp}}\bz$.
    Then, $\mathbb{E}\left(\left\Vert\bJ\bz_{\perp}\right\Vert_2^2\right)=\mathrm{tr}\left(\bJ\bP_{\hat{\bu}^{\perp}}\bJ^\top\right)=\left\Vert\bJ\bP_{\hat{\bu}^{\top}}\right\Vert_F^2$.
\end{lemma}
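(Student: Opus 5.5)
We need to prove Lemma \ref{lem:hutchinson_projected}: $\mathbb{E}\|\bJ\bz_\perp\|_2^2=\mathrm{tr}(\bJ\bP\bJ^\top)=\|\bJ\bP\|_F^2$, where we write $\bP:=\bP_{\hat{\bu}^{\perp}}$. (The statement's subscript $\hat{\bu}^{\top}$ is presumably a typo for $\hat{\bu}^{\perp}$.) The plan is to reduce to \Cref{lem:hutchinson} applied to the matrix $\bJ\bP$, and then use the algebraic properties of the projector.

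First, observe that $\bJ\bz_\perp=\bJ\bP\bz=(\bJ\bP)\bz$. Applying \Cref{lem:hutchinson} to the $C\times d$ matrix $\bJ\bP$ gives
\begin{equation*}
\mathbb{E}\left(\|\bJ\bP\bz\|_2^2\right)=\mathrm{tr}\left(\bJ\bP(\bJ\bP)^\top\right)=\|\bJ\bP\|_F^2 .
\end{equation*}

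Alternatively, one can compute directly: $\mathbb{E}(\bz^\top\bP^\top\bJ^\top\bJ\bP\bz)=\mathrm{tr}(\bJ^\top\bJ\bP\,\mathbb{E}(\bz\bz^\top)\,\bP^\top)$ by the cyclic trace trick and linearity of expectation.

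Second, to obtain the middle expression, we need $\bP\bP^\top=\bP$. The projector is symmetric, since $(\bI-\hat{\bu}\hat{\bu}^\top)^\top=\bI-\hat{\bu}\hat{\bu}^\top$. It is also idempotent: expanding gives $\bP^2=\bI-2\hat{\bu}\hat{\bu}^\top+\hat{\bu}(\hat{\bu}^\top\hat{\bu})\hat{\bu}^\top$, which equals $\bP$ because $\|\hat{\bu}\|_2=1$. Hence
\begin{equation*}
\mathrm{tr}\left(\bJ\bP\bP^\top\bJ^\top\right)=\mathrm{tr}\left(\bJ\bP\bJ^\top\right),
\end{equation*}
and the chain of equalities is complete.

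There is no real obstacle. The only points needing care are using the unit-norm assumption in the idempotency step and interpreting the typo in the final Frobenius norm correctly.
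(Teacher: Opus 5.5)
Your proof is correct and is essentially the intended argument. The paper gives no separate proof and presents the lemma as a direct extension of \cref{lem:hutchinson}; you do exactly that by applying \cref{lem:hutchinson} to $\bJ\bP_{\hat{\bu}^{\perp}}$ and then using $\bP_{\hat{\bu}^{\perp}}\bP_{\hat{\bu}^{\perp}}^\top=\bP_{\hat{\bu}^{\perp}}$ (symmetry plus idempotence, where idempotence needs $\Vert\hat{\bu}\Vert_2=1$), and your reading of the subscript $\hat{\bu}^{\top}$ as a typo for $\hat{\bu}^{\perp}$ is right.
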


In \cref{alg:input_approximation_orthogonal}, we present a procedure to generate an estimate for $\left\Vert\bJ_{\bx}\bP_{\hat{\bx}^{\perp}}\right\Vert_F^2$ using the estimator of \cref{lem:hutchinson_projected}.
Interestingly, this provides another interpretation of $\mathcal{R}_{\perp}(\bx)$.
Just as we reinterpreted GrokAlign as removing orthogonal components in the Jacobians, we can similarly reinterpret optimizing directly for alignment as a more effective strategy for removing the orthogonal components in the Jacobians.
In particular, note in \cref{eq:orthogonal_contribution} that $\tilde{\bJ}_{\bx}$ has components in the direction of $\bx$.
Since we do not want to include its contribution in our regularization we should instead choose $\boldsymbol{\epsilon}$ orthogonal to $\bx$, in order to maximize the contribution of the orthogonal components of the Jacobian.
This is precisely the procedure outlined in \cref{alg:input_approximation_orthogonal} that generates an estimate of $\left\Vert\bJ_{\bx}\bP_{\hat{\bx}^{\perp}}\right\Vert_F^2$ for $\mathcal{R}_{\perp}(\bx)$.

\begin{algorithm}[ht]
\caption{Estimating Frobenius norms of Jacobians using input random but orthogonal vectors.}\label{alg:input_approximation_orthogonal}
\begin{algorithmic}
\Require Classifier $f$, input $\bx\in\R^d$, Number of projections $K$.
\State $\hat{\bx}\leftarrow\frac{\bx}{\Vert\bx\Vert_2}$
\For{$k=1,\dots,K$}
\State Sample $\bu$ from $\mathcal{N}\left(\boldsymbol{0},\bI_d\right)$.
\State $\bu\leftarrow\bu-\left(\hat{\bx}^\top\bu\right)\hat{\bx}$
\State $\bJ^{(\bu)}_{\bx}\leftarrow\bJ_{\bx}\bu$ \Comment{jvp, requires a forward pass.}
\State $s_k\leftarrow\left\Vert\bJ^{(\bu)}_{\bx}\right\Vert_2^2$
\EndFor
\State $\mathcal{R}_{\perp}(\bx)\leftarrow\frac{1}{K}\sum_{k=1}^Ks_k$\\
\Return $\mathcal{R}_{\perp}(\bx)$
\end{algorithmic}
\end{algorithm}

\subsection{Empirical Comparison}\label{sec:empirical_comparison}

To test these regularization strategies and their hyperparameters, we train a fully connected deep network on MNIST.
The weights of the deep network are scaled by a factor of 2 at initialization, and a weight decay of $0.0001$ is applied.
The hyperparameters we consider are the number of projections used to generate the estimates and the regularization coefficient used to append the regularizer to the loss function; a full list of these can be found in \Cref{tab:hyperparameters}.
We then select the best-performing hyperparameters to compare the regularizers in \Cref{fig:grokalign_ablation}.

As a result of the experiments, we observe that GrokAlign using \Cref{alg:output_approximation} with one projection is the most effective strategy for inducing alignment and robustness in the deep network.
Consequently, we use this implementation as the standard method of GrokAlign.

\section{Experimental Details}\label{sec:experimental_details}

The majority of the experiments were conducted using a combination of NVIDIA TITAN Xs and NVIDIA Quadro RTX 8000s.
Some of the tasks in \Cref{tab:rfms} required greater memory capacity, so an NVIDIA A100-SXM4-80GB was used.
The experiments of \Cref{tab:acceleration} took around 50 GPU hours, whereas the experiment of \Cref{tab:rfms} took only a few hours.

\subsection{\Cref{fig:data_sparsity}}\label{exp:data_sparsity}

For the left panel, we train a fully connected ReLU deep network on a subset of MNIST containing 2000 samples.
The deep network has 4 layers with a hidden width of 256.
It is trained with a batch size of 100, weight decay of $0.0001$, and the AdamW optimizer~\cite{loshchilov2019decoupled} with a learning rate of $0.001$.
GrokAlign is used with strength $0.0001$.
Rotation, translation, and Gaussian noise augmentations are applied during training.
The intensity parameter, say $\gamma\in[0,1]$, controls the strength of these augmentations.
Rotations are applied with maximum angle $30\cdot\gamma$ degrees, translations are applied with maximum shift of $0.2\cdot\gamma$, and Gaussian noise is applied with noise of standard deviation $0.3\cdot\gamma$. 

For the right panel, we use the adversarial training pipeline from \citet{cui2024decoupled}.
More specifically, we train on CIFAR10 using the \texttt{basic} data augmentation strategy and all other default training parameters.\footnote{\url{https://github.com/jiequancui/DKL/tree/main/DKLv2/Adv-training-dkl}}
To form the subsets, we divided the total subset size by the number of classes, and then sample the first images from each class.

\subsection{\Cref{fig:gaussian_log_reg}}\label{exp:gaussian_log_reg}

The dataset comprises of $1000$ samples from Gaussian blobs in ten-dimensional space.
The dataset is split into $80\%$ for training and $20\%$ for testing.

The model uses $50$ center vectors, initialized from the centers obtained by K-means clustering of the data.
The model is trained using full-batch gradient descent with the Adam optimizer with a learning rate of $0.01$ for $1000$ epochs.
weight-decay is applied with a coefficient of $0.001$.
We repeat the experiment five times with random initializations.

\subsection{\Cref{fig:ca_modadd}}\label{exp:ca_modadd}

The training pipeline is identical to that of \citet{nandaProgressMeasuresGrokking2022}.

\subsection{\Cref{tab:acceleration}}\label{exp:acceleration}

This experiment consists of different training setups, and we will detail each one in turn. 
In \Cref{tab:grokking_criteria}, we present the grokked state criteria for each setup.
Throughout each configuration, we maintain the same weight decay strength.
When implementing OrthoGrad, there is no other hyperparameter.
On one seed, we test GrokFast-EMA with the $\alpha\in\{0.8,0.98\}$ and $\lambda\in\{0.1,1.0\}$ values, to set the values for the experiment.
For GrokAlign, we test different $\lambda_{\text{Jac}}$ values less than or equal to $1.0$.

\paragraph{XOR.} 

The setup is similar to that of \citet{xu2025let}, entailing a scalar-output two-layer fully connected network learning on XOR cluster data. 
The XOR cluster data contains $40000$-dimensional vectors of the form $\bx=\left(x_1,x_2,\tilde{\bx}^\top\right)^\top\in\R^{40000}$, where $x_1,x_2\in\left\{\pm1\right\}$ and $\tilde{\bx}\in\R^{39998}$. 
The $400$ samples used to train the network are constructed by sampling entries $x_1$, $x_2$ uniformly from $\left\{\pm1\right\}$ and entries of $\tilde{\bx}$ uniformly from $\left\{\pm\epsilon\right\}$, here we take $\epsilon=0.05$. 
The corresponding label of such a sample is $x_1x_2\in\{\pm1\}$. 
A similar sample of the same size is generated as a test set.

The DN is trained up to $1000$ epochs using full-batch gradient descent with a learning rate of $0.1$ and a weight-decay of $0.1$. To test the adversarial accuracy of the DN, we perturb the last $39998$ components of the test set with random noise of standard deviation $0.2$.

GrokAlign is used with $\lambda_{\text{Jac}}$ equal to $1.0$.
GrokFast is used with $(\alpha,\lambda)=(0.8,0.1)$.

\paragraph{Sparse Parity.} 

This setup is taken from \citet{prietoGrokkingEdgeNumerical2025}. 
It involves performing the binary classification of a bit string based on the parity of the sum of a select few indices.
More specifically, the training distribution consists of $2000$ bit strings of length $40$, with labels equal to the parity of the sum of the first three bits.
We train a DN with two hidden layers of width $200$ on half of this training distribution and test it on the other half.
The DN is trained using the cross-entropy loss function with the AdamW optimizer and a learning rate of $0.01$.
We consider up to $20,000$ epochs.
Weight-decay is applied at $0.1$.
GrokAlign is used with $\lambda_{\text{Jac}}$ equal to $0.1$.
GrokFast is used with $(\alpha,\lambda)=(0.8,0.1)$.

\paragraph{MNIST.} 

Here we adopt a setup similar to that of \citet{liuOmnigrokGrokkingAlgorithmic2022}. 
That is, we train a three-hidden-layer DN on a random $1024$-sample subset of the MNIST classification task. 
The deep network has a constant width of $196$, no bias terms, and its weights are multiplied by 4 at initialization. 
The DN is trained with the AdamW optimizer at a learning rate of $0.001$, a batch size of $128$, and a weight decay of $0.01$.
We consider up to $20,000$ epochs.
GrokAlign is used with $\lambda_{\text{Jac}}$ equal to $0.01$.
GrokFast is used with $(\alpha,\lambda)=(0.8,0.1)$.

\paragraph{Modular Addition.}

This setup is similar to one from \citet{mallinarEmergenceNonneuralModels2025} and involves a one-hidden-layer fully connected DN learning addition modular $61$.
The DN has a width of $256$ and uses a quadratic activation function.
The DN is trained using the AdamW optimizer with a learning rate of $0.001$ and a batch size of $32$.
We consider up to $1,000$ epochs.
A weight-decay of $1.0$.
GrokAlign is used with $\lambda_{\text{Jac}}$ equal to $0.01$.
GrokFast is used with $(\alpha,\lambda)=(0.8,0.1)$.

\subsection{\Cref{tab:rfms}}\label{exp:rfms}

Utilizing the training pipeline of the GitHub repository accompanying \citet{radhakrishnanMechanismFeatureLearning2024},\footnote{\url{https://github.com/aradha/recursive_feature_machines/tree/pip_install/tabular_benchmark_experiments}}, we evaluate the performance of the RFAMs and RFMs on the datasets of \citet{fernandez-delgadoWeNeedHundreds2014}.

\begin{table}[ht]
    \centering
    \caption{
    For each dataset considered in \Cref{tab:rfms}, we consider the relative change of performance metrics induced by using RFAMs instead of RFMs.
    We report the mean and standard deviation of these changes, along with the $p$-value from a one-sided t-test comparing RFAM performance to the RFM baseline.
    }
    \vspace{0.5em}
    \begin{tabular}{cccc}
        \toprule
        Metric & Test Accuracy & Normal Alignment & Attack Success Rate \\
        \midrule
        Mean & $-0.021$& $0.42$ & $-3.26$ \\
        Standard Deviation & $0.052$ & $1.1$ & $14.2$ \\
        $p$-value & $1.0$ & $<0.001$ & $0.007$ \\
        \bottomrule
    \end{tabular}
    \label{tab:rfms_tests}
\end{table}

\section{Mathematical Derivations}\label{sec:derivations}

\subsection{Gaussian Kernel Logistic Regression}

\paragraph{Proof of \Cref{lem:gaussian_jacobian}.}\label{proof:gaussian_jacobian}

Clearly, 
\begin{align*}
    \nabla_{\bx}f_c(\bx)&=\nabla_{\bx}\left(\sum_{i=1}^K\bW_{c,i}\phi_i(\bx)\right)\\&=\sum_{i=1}^K\bW_{c,i}\left(-2\gamma\right)\left(\bx-\boldsymbol{\tau}_i\right)\phi_i(\bx).
\end{align*}
\qed

\subsection{Optimizing the Training Objective with Normal Aligned Classifiers}

\paragraph{Proof of \Cref{thm:optimal_jacobian}.}\label{proof:optimal_jacobian}

We can account for the offset term by adding an extract dimension to our input space.
Namely, let $\tilde{\bx}_i=\left(\bx_i,1\right)$ and $\tilde{\bJ}_{\bx_i}=\left(\bJ_{\bx_i},\bb_{\bx_i}\right)$.
Without loss of generality, we will henceforth let $\tilde{\bx}_i=\bx_i$, $\tilde{\bJ}_{\bx_i}=\bJ_{\bx_i}$ and set $\bb_{\bx_i}=\boldsymbol{0}$.

Since $\ell_i=\ell\left(f\left(\bx_i\right)\right)=\ell\left(\bJ_{\bx_i}\bx_i\right)$, it follows from the chain rule that $$\frac{\partial\ell_i}{\partial\bJ_{\bx_i}^{(c)}}=\frac{\partial\ell}{\partial\bJ_{\bx_i}^{(c)}}\cdot\bx_i.$$For simplicity we suppose that $\ell$ treats all wrong classes equally such that,
\begin{equation}\label{eq:loss_function_condition}
        \frac{\partial\ell_i}{\partial\bJ^{(c)}_{\bx_i}}=\begin{cases}\beta_1(c)\bx_i&c=y_i\\\beta_2(c)\bx_i&c\neq y_i,\end{cases}    
\end{equation}
however, the proof proceeds without this assumption too.

Note that the optimization problem is convex on a convex set, and thus it is sufficient to consider the Karush-Kuhn-Tucker conditions with a Lagrange multiplier. More specifically, since the Frobenius norm constraint implies that $\sum_{c=1}^C\left\Vert\bJ^{(c)}_{\bx_i}\right\Vert_2^2\leq\alpha$, we can consider $$\ell_i^{(\text{KKT})}=\ell_i+\lambda\left(\sum_{c=1}^C\left\Vert\bJ^{(c)}_{\bx_i}\right\Vert_2^2-\alpha\right).$$Thus, 
\begin{equation}\label{eq:kkt_1}
    \boldsymbol{0}=\frac{\partial\ell_i^{(\text{KKT})}}{\partial\bJ^{(c)}_{\bx_i}}=\frac{\partial\ell_i}{\partial\bJ^{(c)}_{\bx_i}}+2\lambda\bJ^{(c)}_{\bx_i}
\end{equation}
for $c=1,\dots,C$, and 
\begin{equation}\label{eq:kkt_2}
    0=\frac{\partial\ell^{(\text{KKT})}}{\partial\lambda}=\sum_{c=1}^C\left\Vert\bJ^{(c)}_{\bx_i}\right\Vert_2^2-\alpha.
\end{equation}
From \cref{eq:kkt_1}, we have  $$0=\sum_{c=1}^C\left\langle\bJ^{(c)}_{\bx_i},\frac{\partial\ell_i^{(\text{KKT})}}{\partial\bJ^{(c)}_{\bx_i}}\right\rangle=\sum_{c=1}^C\left\langle\bJ^{(c)}_{\bx_i},\frac{\partial\ell_i}{\partial\bJ^{(c)}_{\bx_i}}\right\rangle+2\lambda\left\Vert\bJ^{(c)}_{\bx_i}\right\Vert_2^2\overset{\cref{eq:kkt_2}}{=}2\lambda\alpha+\sum_{c=1}^C\left\langle\bJ^{(c)}_{\bx_i},\frac{\partial\ell_i}{\partial\bJ^{(c)}_{\bx_i}}\right\rangle,$$which implies that $\lambda=-\frac{1}{2\alpha}\sum_{c=1}^C\left\langle\bJ^{(c)}_{\bx_i},\frac{\partial\ell_i}{\partial\bJ^{(c)}_{\bx_i}}\right\rangle$. Using this back in \cref{eq:kkt_1}, we deduce that
\begin{equation}\label{eq:kkt_3}
    \boldsymbol{0}=\frac{\partial\ell_i}{\partial\bJ^{(c)}_{\bx_i}}-\frac{1}{\alpha}\sum_{c^\prime=1}^C\left\langle\bJ^{(c^\prime)}_{\bx_i},\frac{\partial\ell_i}{\partial\bJ^{(c^\prime)}_{\bx_i}}\right\rangle\bJ^{(c)}_{\bx_i}
\end{equation}
for $c=1,\dots,C$. Consider the ansatz 
\begin{equation}\label{eq:ansatz}
    \bJ^{(c)}_{\bx_i}=\begin{cases}a_1\bx_i&c=y_i\\a_2\bx_i&c\neq y_i.\end{cases}
\end{equation}
Then for $c=y_i$, \cref{eq:kkt_3} becomes 
\begin{align*}
    \boldsymbol{0}&=\beta_1\bx_i-\frac{1}{\alpha}\left((C-1)a_2\beta_2\left\Vert\bx_i\right\Vert_2^2+a_1\beta_1\left\Vert\bx_i\right\Vert_i^2\right)a_1\bx_i\\&=\left(\alpha\beta_1-\left((C-1)a_2\beta_2+a_1\beta_1\right)a_1\left\Vert\bx_i\right\Vert_2^2\right)\bx_i,
\end{align*}
which implies that
\begin{equation}\label{eq:ansatz_implication_1}
    0=\alpha\beta_1-\left((C-1)a_1a_2\beta_2+a_1^2\beta_1\right)\left\Vert\bx_i\right\Vert_2^2.
\end{equation}
Similarly, when $c\neq y_i$, from \cref{eq:kkt_3} we deduce that
\begin{equation}\label{eq:ansatz_implication_2}
    0=\alpha\beta_2-\left((C-1)a_2^2\beta_2+a_1a_2\beta_1\right)\left\Vert\bx_i\right\Vert_2^2.
\end{equation}
Furthermore, from \cref{eq:kkt_2} we get 
\begin{equation}\label{eq:ansatz_implication_3}
    \alpha=\left\Vert\bx_i\right\Vert_2^2\left(a_1^2+(C-1)a_2^2\right).
\end{equation}
Provided $\beta_1$ and $\beta_2$ are non-zero,
\begin{equation}\label{eq:non_zero_solution}
    \begin{cases}a_1=\frac{\beta_1\sqrt{\alpha}}{\left\Vert\bx_i\right\Vert_2\sqrt{\beta_1^2+(C-1)\beta_2^2}}\\a_2=-\frac{\beta_2\sqrt{\alpha}}{\left\Vert\bx_i\right\Vert_2^2\sqrt{\beta_1^2+(C-1)\beta_2^2}}\end{cases}
\end{equation}
demonstrates that the systems of \cref{eq:ansatz_implication_1,eq:ansatz_implication_2,eq:ansatz_implication_3} form a consistent system of equations that admit a unique solution. If $\beta_2$ equals zero, then 
\begin{equation}\label{eq:zero_solution}
    \begin{cases}a_1=\frac{\sqrt{\alpha}}{\left\Vert\bx_i\right\Vert_2}\\a_2=0\end{cases}
\end{equation}
demonstrates that the systems of \cref{eq:ansatz_implication_1,eq:ansatz_implication_2,eq:ansatz_implication_3} form a consistent system of equations that admit a unique solution. Therefore, $\bJ_{\bx_i}$ as constructed in \cref{eq:ansatz} minimizes the constrained optimization. 

In particular, $\bJ_{\bx_i}$ is Jacobian aligned with $\bc=\sum_{c=1}^Ca_1\mathbb{I}_{\left\{c=y_i\right\}}+a_2\mathbb{I}_{\left\{c\neq y_i\right\}}$.
Moreover, we can recover the offset term as $\bb_{\bx_i}=\bc$. \qed

\qed

\subsection{Normal Aligned Classifiers are Robust}

\begin{lemma}\label{lem:distance_to_boundary}
    Let $\bW\in\R^d\setminus\{0\}$ and consider the binary decision boundary $\left\{\boldsymbol{z}:\bW^{\top}\boldsymbol{z}+\alpha=0\right\}$. For a point $\bx$ with $\bW^{\top}\bx+\alpha\neq0$, we have $$\min\left(\left\{\Vert\beps\Vert_2:\bW^{\top}(\bx+\beps)+\alpha=0\right\}\right)=\frac{\vert\bW^{\top}\bx+\alpha\vert}{\Vert\bW\Vert_2},$$
    achieved by $\beps =-\frac{\bW^T\bx+\alpha}{\Vert\bW\Vert_2^2}\bW.$
\end{lemma}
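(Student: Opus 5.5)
This is the classical point-to-hyperplane distance, and I will prove it as two inequalities: a lower bound from Cauchy--Schwarz and a matching upper bound from the explicit perturbation in the statement. Throughout, write $s:=\bW^{\top}\bx+\alpha$. By hypothesis $s\neq 0$, and since $\bW\neq 0$ we have $\Vert\bW\Vert_2>0$, so every quantity below is well defined.

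\emph{Lower bound.} Suppose $\beps$ is feasible, meaning $\bW^{\top}(\bx+\beps)+\alpha=0$. Expanding gives $\bW^{\top}\beps=-s$. Applying Cauchy--Schwarz,
\begin{equation*}
\vert s\vert=\vert\bW^{\top}\beps\vert\leq\Vert\bW\Vert_2\Vert\beps\Vert_2 .
\end{equation*}
Rearranging yields $\Vert\beps\Vert_2\geq\vert s\vert/\Vert\bW\Vert_2$. So every feasible perturbation has norm at least $\vert s\vert/\Vert\bW\Vert_2$, and the infimum of the set is bounded below by this value.

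\emph{Upper bound and attainment.} Take the candidate $\beps^{\star}=-\frac{s}{\Vert\bW\Vert_2^2}\bW$.
\begin{itemize}
\item It is feasible: $\bW^{\top}\beps^{\star}=-s$, so $\bW^{\top}(\bx+\beps^{\star})+\alpha=s-s=0$.
\item Its norm is $\Vert\beps^{\star}\Vert_2=\vert s\vert\,\Vert\bW\Vert_2/\Vert\bW\Vert_2^2=\vert s\vert/\Vert\bW\Vert_2$, which equals the lower bound.
\end{itemize}
Hence the infimum is attained at $\beps^{\star}$, so it is a genuine minimum and has the claimed value.

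There is no real obstacle here; the only point needing care is to note that the feasible set is nonempty (it contains $\beps^{\star}$), which justifies writing $\min$ rather than $\inf$. Uniqueness of the minimizer follows from the Cauchy--Schwarz equality case, which forces $\beps$ to be parallel to $\bW$, but uniqueness is not needed for the statement.
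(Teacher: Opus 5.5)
Your proof is correct and complete. The Cauchy--Schwarz lower bound for any feasible $\beps$, combined with checking that the stated $\beps$ is feasible and has norm exactly $\vert\bW^{\top}\bx+\alpha\vert/\Vert\bW\Vert_2$, is the standard point-to-hyperplane argument. The paper states this lemma without proof and treats it as that classical fact, so there is no competing route; your argument supplies what the paper takes for granted. As an aside, the hypothesis $\bW^{\top}\bx+\alpha\neq 0$ is not actually needed: when that quantity vanishes, $\beps=\boldsymbol{0}$ attains the formula trivially.
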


\begin{proposition}
\label{thm:maximal_local_robustness}
    Let $\bx\in\R^d$ and $\max_{c=1,\dots,C}\left(\frac{\left\vert\bb_{\bx}^{(y)}-\bb_{\bx}^{(c)}\right\vert}{\left\Vert\bJ_{\bx}^{(y)}-\bJ_{\bx}^{(c)}\right\Vert_2}\right)\leq\lambda$,
    then $\rho(\bx)\leq\left\Vert\bx\right\Vert_2+\lambda$.
\end{proposition}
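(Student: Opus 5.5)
We must show that every $g\in\mathcal{G}_{\bx,\lambda}$ satisfies $\rho(\bx;g)\leq\Vert\bx\Vert_2+\lambda$; taking the supremum over $g$ then bounds $\rho(\bx)$. Fix such a $g(\bz)=\bJ\bz+\bb$ with $\arg\max(g(\bx))=y$. The argmax changes as soon as some wrong class $c\neq y$ overtakes $y$. So $\rho(\bx;g)$ is at most the distance from $\bx$ to the pairwise boundary
\[
H_c=\{\bz:(\bJ^{(y)}-\bJ^{(c)})\bz+[\bb]_y-[\bb]_c=0\}
\]
for any single $c\neq y$. Ties are a measure-zero issue: crossing $H_c$ slightly produces a strict misclassification, so the infimum is unaffected.

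First, fix any $c\neq y$ with $\bW:=(\bJ^{(y)}-\bJ^{(c)})^\top\neq\boldsymbol{0}$. The ratio constraint in the definition of $\mathcal{G}_{\bx,\lambda}$ implicitly requires this, since otherwise the ratio is undefined or infinite. Set $\alpha:=[\bb]_y-[\bb]_c$. By \Cref{lem:distance_to_boundary},
\[
\rho(\bx;g)\leq\frac{\vert\bW^\top\bx+\alpha\vert}{\Vert\bW\Vert_2}.
\]

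Second, bound this by the triangle inequality and Cauchy--Schwarz:
\[
\frac{\vert\bW^\top\bx+\alpha\vert}{\Vert\bW\Vert_2}\leq\frac{\vert\bW^\top\bx\vert}{\Vert\bW\Vert_2}+\frac{\vert\alpha\vert}{\Vert\bW\Vert_2}\leq\Vert\bx\Vert_2+\lambda.
\]
The last step uses $\vert\bW^\top\bx\vert\leq\Vert\bW\Vert_2\Vert\bx\Vert_2$ and the hypothesis $\vert[\bb]_y-[\bb]_c\vert/\Vert\bJ^{(y)}-\bJ^{(c)}\Vert_2\leq\lambda$.

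Third, this holds for every $g\in\mathcal{G}_{\bx,\lambda}$, so taking the supremum gives $\rho(\bx)\leq\Vert\bx\Vert_2+\lambda$.

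The main obstacle is the first step. One must justify that $\rho(\bx;g)$ is bounded by the distance to a single pairwise hyperplane rather than to the true multiclass boundary. The point $\bx+\beps^\star$, with $\beps^\star$ from \Cref{lem:distance_to_boundary}, makes class $c$ tie with $y$. Any further perturbation $\beps^\star(1+\delta)$ yields $g_c>g_y$, so $y$ is no longer the argmax; letting $\delta\to0$ gives the bound. A degenerate $\bW=\boldsymbol{0}$ for every $c$ is excluded by the constraint.

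Equality is attained by normal-aligned $\bJ$, where $\bW\parallel\bx$ and the offset ratio equals $\lambda$ with the right sign. This is the content used in \Cref{thm:optimally_robust}, but it is not needed for the present bound.
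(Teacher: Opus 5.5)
Your proof is correct and follows essentially the paper's route: bound $\rho(\bx;g)$ by the distance to a pairwise decision hyperplane using \Cref{lem:distance_to_boundary}, bound that distance by $\Vert\bx\Vert_2+\lambda$, and take the supremum over admissible $g$. The differences are cosmetic. You apply Cauchy--Schwarz directly, where the paper decomposes $\bJ^{(y)}-\bJ^{(c)}$ into components parallel and orthogonal to $\bx$; the paper reuses that decomposition for the equality case in \Cref{thm:optimally_robust}. You also bound by a single class $c$ instead of the minimum over $c$, which is all the inequality needs, and you handle ties and degenerate Jacobians more carefully than the paper does.
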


\begin{proof}
    Consider any competing class $c\ne y$. The pairwise decision boundary between $y$ and $c$ is given by
    $$\left(\bJ_{\bx}^{(y)}-\bJ_{\bx}^{(c)}\right)^{\top}\boldsymbol{z}+\left(\bb_{\bx}^{(y)}-\bb_{\bx}^{(c)}\right)=0.$$
    Set $\bW_{y,c}:=\bJ_{\bx}^{(y)}-\bJ_{\bx}^{(c)}\in\R^d$, and write $\bW_{y,c}=\alpha\hat{\bx}+\beta\bu$ where $\hat{\bx}=\frac{\bx}{\Vert\bx\Vert_2}$, $\bu\perp\bx$, and $\alpha,\beta\in\R$. 
    Then, using \Cref{lem:distance_to_boundary}, the minimal perturbation that moves $\bx$ onto that boundary is 
    $$\frac{\left\vert\bW_{y,c}^{\top}\bx+\left(\bb_{\bx}^{(y)}-\bb_{\bx}^{(c)}\right)\right\vert}{\Vert\bW_{y,c}\Vert_2}=\frac{\left\vert\vert\alpha\vert\Vert\bx\Vert_2+\left(\bb_{\bx}^{(y)}-\bb_{\bx}^{(c)}\right)\right\vert}{\sqrt{\alpha^2+\beta^2}}\le\Vert\bx\Vert_2+\lambda.$$
    For fixed $\bJ$, the adversary will pick the minimizing $c$ (the nearest boundary). 
    Therefore, 
    \begin{equation}\label{eq:local_robustness_bound}
        \rho(\bJ;\bx)=\min_{c\neq y}\frac{\left\vert\bW_{y,c}^{\top}\bx\right\vert}{\Vert\bW_{y,c}\Vert_2}\le\Vert\bx\Vert_2+\lambda.
    \end{equation}
    This is true for every admissible $\bJ$. 
    Taking the maximum yields $\rho(\bx)\le\Vert\bx\Vert_2+\lambda$.
\end{proof}

\paragraph{Proof of \Cref{thm:optimally_robust}.}\label{proof:optimally_robust}

Equality in \cref{eq:local_robustness_bound} is only achieved when $\beta=0$ and $\bb_{\bx}^{(y)}-\bb_{\bx}^{(c)}=\lambda$ for each $c\neq y$.
Clearly, the Jacobian aligned solutions of \Cref{thm:optimal_jacobian} satisfy these conditions.

\qed

\subsection{The Theory of Centroids and Radii}

\begin{theorem}[\citealt{balestrieroGeometryDeepNetworks2019}]\label{thm:centroids_layer}
    The $l^\text{th}$ layer of a DN partitions its input space according to a power diagram with centroids $$\bmu^{(l)}_{\omega_{\varphi(\bx)}^{(l)}}=\left(\bA^{(l)}_{\omega_{\varphi(\bx)}^{(l)}}\right)^\top\mathbf{1},$$and radii $$r_{\omega_{\bx}^{(l)}}^{(l)}=\left\Vert\bmu^{(l)}_{\omega_{\varphi(\bx)}^{(l)}}\right\Vert_2^2+2\left(\bb_{\omega_{\varphi(\bx)}^{(l)}}\right)^\top\mathbf{1}.$$
\end{theorem}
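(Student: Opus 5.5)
The plan is to write the $l^\text{th}$ layer as a max-affine spline operator and show that the region it places an input in is picked out by a single argmax over a sum of affine functions. That argmax is then rewritten as a Laguerre-distance argmin by completing the square.

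\textbf{Step 1 (layer as max-affine spline).} Write the layer as $f^{(l)}(\bz)$ with $k^\text{th}$ coordinate $[f^{(l)}(\bz)]_k=\max_{r=1,\dots,R}\left(\langle \ba_{k,r},\bz\rangle+b_{k,r}\right)$. For a ReLU unit, $R=2$ with pieces $(\boldsymbol{0},0)$ and $(\bW^{(k)},\bb_k)$; max pooling and other piecewise-linear convex operations fit the same form.

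\textbf{Step 2 (regions as configurations).} A region $\omega_q$ of $\Omega^{(l)}$ is indexed by a configuration $q=(r_1,\dots,r_{d^{(l)}})$, one chosen piece per unit. On that region the layer is affine with map $\bA^{(l)}_q\bz+\bb^{(l)}_q$, where row $k$ of $\bA^{(l)}_q$ is $\ba_{k,r_k}^\top$ and $[\bb^{(l)}_q]_k=b_{k,r_k}$.

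\textbf{Step 3 (one argmax over configurations).} The units choose their pieces independently. Hence the maximum of the sum over all joint configurations equals the sum of the per-unit maxima:
\[
\max_{q}\mathbf{1}^\top\left(\bA^{(l)}_q\bz+\bb^{(l)}_q\right)=\sum_k\max_{r}\left(\langle \ba_{k,r},\bz\rangle+b_{k,r}\right).
\]
Consequently, $\bz\in\omega_q$ exactly when $q$ attains the left-hand maximum.

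\textbf{Step 4 (completing the square).} Put $\bmu_q:=(\bA^{(l)}_q)^\top\mathbf{1}$. Then $\mathbf{1}^\top(\bA^{(l)}_q\bz+\bb^{(l)}_q)=\langle\bmu_q,\bz\rangle+\bb_q^\top\mathbf{1}$. Maximizing this is the same as minimizing
\[
\Vert\bz\Vert_2^2-2\langle\bmu_q,\bz\rangle-2\bb_q^\top\mathbf{1}=\Vert\bz-\bmu_q\Vert_2^2-\left(\Vert\bmu_q\Vert_2^2+2\bb_q^\top\mathbf{1}\right),
\]
because adding the $q$-independent term $\Vert\bz\Vert_2^2$ does not change the minimizer. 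This is exactly \eqref{eq:power_diagram} with $r_q=\Vert\bmu_q\Vert_2^2+2\bb_q^\top\mathbf{1}$, which gives both formulas in the statement.

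\textbf{Main obstacle.} The algebra is routine; the care is needed in matching regions to Laguerre cells exactly. Different configurations can produce the same affine map, and in the ReLU case this happens whenever a unit's two pieces tie. Such configurations give identical centroid–radius pairs, so their cells coincide and should be identified. Some configurations may also give empty cells, which is harmless for a power diagram. Ties on boundaries have measure zero, so I would handle them by stating the result up to region boundaries, consistent with the paper's treatment of Jacobians on tile boundaries.
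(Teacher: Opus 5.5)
Your proposal is correct and is essentially the argument behind this result. The paper gives no proof of its own; it cites \citet{balestrieroGeometryDeepNetworks2019}, which writes the layer as a max-affine spline operator, decouples the joint argmax over configurations into per-unit argmaxes, and completes the square to obtain the Laguerre distance, exactly as in your Steps 3 and 4. One small imprecision in your closing remark, which does not affect the argument: a unit's two pieces tying at a point only puts that point on a region boundary. Distinct configurations give the same affine map only when the differing pieces are identical as affine functions, e.g.\ a ReLU unit with zero weights and zero bias.
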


\begin{theorem}[\citealt{balestrieroGeometryDeepNetworks2019}]\label{thm:centroids_network}
    The continuous piecewise operation of a DN from the input to the output of the $l^\text{th}$ layer partitions its input space according to a power diagram with centroids $$\bmu_{\omega_{\varphi(\bx)}^{(1\leftarrow\ell)}}^{(1\leftarrow l)}=\left(\bA^{(l-1)}_{\omega_{\varphi(\bx)}^{(l-1)}}\cdots\bA^{(1)}_{\omega_{\varphi(\bx)}^{(1)}}\right)^\top\bmu_{\omega_{\varphi(\bx)}^{(l)}}^{(l)}=:\left(\bA^{(1\leftarrow l-1)}_{\omega_{\varphi(\bx)}^{(1\leftarrow l-1)}}\right)^\top\bmu_{\omega_{\varphi(\bx)}^{(l)}}^{(l)}$$and radii$$r_{\omega_{\bx}^{(1\leftarrow l)}}^{(1\leftarrow l)}=\left\Vert\bmu^{(1\leftarrow l)}_{\omega_{\varphi(\bx)}^{(l\leftarrow l)}}\right\Vert_2^2+2\left(\bmu^{(l)}_{\omega_{\varphi(\bx)}^{(l)}}\right)^\top\bb^{(1\leftarrow l-1)}_{\omega_{\varphi(\bx)}^{(1\leftarrow l-1)}}+2\left(\bb^{(l)}_{\omega_{\varphi(\bx)}^{(l)}}\right)^\top\mathbf{1}$$
\end{theorem}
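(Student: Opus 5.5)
The plan is to prove the formula by pulling the layer-$l$ power diagram of \Cref{thm:centroids_layer} back through the affine map that the first $l-1$ layers implement on a fixed cell. This is done one cell of $\Omega^{(1\leftarrow l-1)}$ at a time, which is why the result is a power diagram \emph{subdivision} rather than a single global power diagram. The whole argument reduces to rewriting the Laguerre distance in terms of affine scores and then substituting.

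First I rewrite the Laguerre distance of \Cref{thm:centroids_layer} as an affine score. Fix a cell $\omega^{(1\leftarrow l-1)}$. On it, the map $f^{(1\leftarrow l-1)}$ is affine:
\begin{equation*}
\bz=\bA^{(1\leftarrow l-1)}\bx+\bb^{(1\leftarrow l-1)},
\end{equation*}
where I drop the region subscripts for readability. By \Cref{thm:centroids_layer}, layer $l$ assigns $\bz$ to the region $q$ minimising $\Vert\bz-\bmu^{(l)}_q\Vert_2^2-r^{(l)}_q$, with $r^{(l)}_q=\Vert\bmu^{(l)}_q\Vert_2^2+2(\bb^{(l)}_q)^\top\mathbf{1}$. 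Expanding the square and discarding $\Vert\bz\Vert_2^2$, which does not depend on $q$, this is the same as maximising
\begin{equation*}
2\langle\bz,\bmu^{(l)}_q\rangle+2(\bb^{(l)}_q)^\top\mathbf{1}.
\end{equation*}
So the layer-$l$ tile selection is an argmax of affine functions of $\bz$.

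Second, I substitute the affine expression for $\bz$. The score becomes
\begin{equation*}
2\langle\bx,(\bA^{(1\leftarrow l-1)})^\top\bmu^{(l)}_q\rangle+2(\bmu^{(l)}_q)^\top\bb^{(1\leftarrow l-1)}+2(\bb^{(l)}_q)^\top\mathbf{1}.
\end{equation*}
I then define $\bmu'_q:=(\bA^{(1\leftarrow l-1)})^\top\bmu^{(l)}_q$, which is the claimed centroid. Next I complete the square in $\bx$ by adding and subtracting $\Vert\bx\Vert_2^2$ and $\Vert\bmu'_q\Vert_2^2$; the added $\Vert\bx\Vert_2^2$ is again independent of $q$, so it does not change the argmax. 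The score then equals $-\Vert\bx-\bmu'_q\Vert_2^2+r'_q$ up to that $q$-independent term, where
\begin{equation*}
r'_q=\Vert\bmu'_q\Vert_2^2+2(\bmu^{(l)}_q)^\top\bb^{(1\leftarrow l-1)}+2(\bb^{(l)}_q)^\top\mathbf{1}.
\end{equation*}
This is exactly the stated radius. Hence, inside $\omega^{(1\leftarrow l-1)}$, the pulled-back partition is the power diagram with the pairs $(\bmu'_q,r'_q)$ intersected with that cell.

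Third, I assemble these cell-wise pieces into $\Omega^{(1\leftarrow l)}$. I would argue by induction on $l$: the base case $l=1$ is \Cref{thm:centroids_layer}, with $\bA^{(1\leftarrow 0)}=\bI$ and $\bb^{(1\leftarrow 0)}=\boldsymbol{0}$. The inductive step is the computation above applied cell by cell.

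I expect the main obstacle to be bookkeeping rather than algebra. The derivation only holds on each previous cell, because both $\bA^{(1\leftarrow l-1)}$ and $\bb^{(1\leftarrow l-1)}$ change between cells. This is precisely why the descriptors do not globally reconstruct $\Omega^{(1\leftarrow l)}$ via \eqref{eq:power_diagram}, as the paper already remarks. I will also check that the region index $\varphi(\bx)$ used for the layer-$l$ quantities is the one selected at the point $\bz=f^{(1\leftarrow l-1)}(\bx)$, so that the notation in the statement is consistent.
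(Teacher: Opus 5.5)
The paper does not prove this statement itself; it imports it from \citet{balestrieroGeometryDeepNetworks2019}, and your argument is correct and is essentially the proof given there: on a fixed cell of $\Omega^{(1\leftarrow l-1)}$ the first $l-1$ layers are affine, so you rewrite the layer-$l$ Laguerre argmin as an argmax of affine scores, substitute $\bz=\bA^{(1\leftarrow l-1)}\bx+\bb^{(1\leftarrow l-1)}$, and complete the square in $\bx$, which yields exactly the stated centroids $(\bA^{(1\leftarrow l-1)})^\top\bmu^{(l)}_q$ and radii. Your caveat that the power-diagram description holds only cell by cell (a power diagram subdivision rather than one global diagram) is also the correct reading of the statement.
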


\paragraph{Proof of \Cref{prop:pd_parameters}.}\label{proof:pd_parameters}
Using \Cref{thm:centroids_layer} and \Cref{thm:centroids_network}, it follows that 
\begin{align*}
    r_{\omega_{\varphi(\bx)}^{(1\leftarrow l)}}^{(1\leftarrow l)}&=\left\Vert\bmu^{(1\leftarrow l)}_{\omega_{\varphi(\bx)}^{(1\leftarrow l)}}\right\Vert_2^2+2\left(\bmu^{(l)}_{\omega_{\varphi(\bx)}^{(l)}}\right)^\top\bb^{(1\leftarrow l-1)}_{\omega_{\varphi(\bx)}^{(1\leftarrow l-1)}}+2\left(\bb^{(l)}_{\omega_{\varphi(\bx)}^{(l)}}\right)^\top\mathbf{1}\\&=\left\Vert\bmu^{(1\leftarrow l)}_{\omega_{\varphi(\bx)}^{(1\leftarrow l)}}\right\Vert_2^2+2\left(\left(\bA^{(l)}_{\omega_{\varphi(\bx)}^{(l)}}\right)^\top\mathbf{1}\right)^\top\bb^{(1\leftarrow l-1)}_{\omega_{\varphi(\bx)}^{(1\leftarrow l-1)}}+2\left(\bb^{(l)}_{\omega_{\varphi(\bx)}^{(l)}}\right)^\top\mathbf{1}\\&=\left\Vert\bmu^{(1\leftarrow l)}_{\omega_{\varphi(\bx)}^{(1\leftarrow l)}}\right\Vert_2^2+2\left(\bA^{(l)}_{\omega_{\varphi(\bx)}^{(l)}}\bb^{(1\leftarrow l-1)}_{\omega_{\varphi(\bx)}^{(l)}}+\bb^{(l)}_{\omega_{\varphi(\bx)}^{(l)}}\right)^\top\mathbf{1}\\&=\left\Vert\bmu^{(1\leftarrow l)}_{\omega_{\varphi(\bx)}^{(1\leftarrow l)}}\right\Vert_2^2+2\left(\bb^{(1\leftarrow l)}_{\omega_{\varphi(\bx)}^{(l)}}\right)^\top\mathbf{1}.
\end{align*}
Extending this to the $L^\text{th}$ yields the desired result.

Similarly, using \Cref{thm:centroids_layer} and \Cref{thm:centroids_network}, it follows that 
\begin{align*}
    \bmu_{\omega_{\varphi(\bx)}^{(1\leftarrow l)}}^{(1\leftarrow l)}&=\left(\bA^{(l-1)}_{\omega_{\varphi(\bx)}^{(l-1)}}\cdots\bA^{(1)}_{\omega_{\varphi(\bx)}^{(1)}}\right)^\top\bmu_{\omega_{\varphi(\bx)}^{(l)}}^{(l)}\\&=\left(\bA^{(l-1)}_{\omega_{\varphi(\bx)}^{(l-1)}}\cdots\bA^{(1)}_{\omega_{\varphi(\bx)}^{(1)}}\right)^\top\left(\bA^{(l)}_{\omega_{\varphi(\bx)}^{(l)}}\right)^\top\mathbf{1}\\&=\left(\bA^{(l)}_{\omega_{\varphi(\bx)}^{(l)}}\cdots\bA^{(1)}_{\omega_{\varphi(\bx)}^{(1)}}\right)^\top\mathbf{1}\\&=\left(\bA^{(1\leftarrow l)}_{\omega_{\varphi(\bx)}^{(1\leftarrow l)}}\right)^\top\mathbf{1}.
\end{align*}
Extending this to the $L^\text{th}$ yields the desired result. \qed

\subsection{The Dynamics of Centroid Alignment}

\paragraph{Proof of \Cref{prop:centroid_dynamics_general}.}\label{proof:centroid_dynamics_general}

From \Cref{lem:centroid_two_layer_network}, observe that
\begin{equation*}
    \partial_t\left(\bmu_{\varphi(\bx)}\right)
    =
    \left(
    \partial_t\left(\bW_2\right)\bQ_{\bx}\bW_1
    +
    \bW_2\bQ_{\bx}\partial_t\left(\bW_1\right)
    +
    \bW_2(\partial_t\bQ_{\bx})\bW_1
    \right)^\top\mathbf{1}.
\end{equation*}

Because $\sigma$ is the ReLU nonlinearity, we have $\sigma''(z)=0$ for all $z\neq 0$. 
Consequently, $\partial_t\bQ_{\bx}=0$ almost everywhere along the gradient flow, since $\bQ_{\bx}=\mathrm{diag}(\sigma'(\bW_1\bx))$ only changes when a pre-activation crosses zero. 
Ignoring these measure-zero events, the dynamics simplify to
\begin{equation*}
    \partial_t\left(\bmu_{\varphi(\bx)}\right)
    =
    \left(
    \partial_t\left(\bW_2\right)\bQ_{\bx}\bW_1
    +
    \bW_2\bQ_{\bx}\partial_t\left(\bW_1\right)
    \right)^\top\mathbf{1},
\end{equation*}
where 
\begin{equation*}
    \partial_t\left(\bW^{(j)}\right)=-\eta\nabla_{\bW^{(j)}}\mathcal{L}
\end{equation*}
for $j=1,2$. 
By the chain rule, one can show that 
\begin{equation*}
    \nabla_{\bW_1}\mathcal{L}=-\frac{1}{n}\sum_{i=1}^n\left(\bW_2\bQ_{\bx_i}\right)^\top\bm_{\bx_i}\bx_i^\top
\end{equation*}
and
\begin{equation*}
    \nabla_{\bW_2}\mathcal{L}=-\frac{1}{n}\sum_{i=1}^n\bm_{\bx_i}\sigma\left(\bW_1\bx_i\right)^\top.
\end{equation*}
Substituting these gradients into the dynamics equation yields
\begin{equation*}
    \partial_t\left(\bmu_{\varphi(\bx)}\right)=\frac{\eta}{n}\sum_{i=1}^n\left(\left(\bm_{\bx_i}^\top\bW_2\bQ_{\bx_i}\bQ_{\bx}\left(\bW_2\right)^\top\mathbf{1}\right)\bx_i+\left(\bW_1\right)^\top\bQ_{\bx}\sigma\left(\bW_1\bx_i\right)\bm_{\bx_i}^\top\mathbf{1}\right).
\end{equation*}
To find the dynamics of the projection, we compute $\partial_t\left(\left\langle\bx,\bmu_{\varphi(\bx)}\right\rangle\right)=\bx^\top\partial_t\left(\bmu_{\varphi(\bx)}\right)$. Distributing $\bx^\top$ into the second term gives $\bx^\top\left(\bW_1\right)^\top\bQ_{\bx}$. 

Because $\sigma$ is the ReLU function, it satisfies the identity $z\sigma^\prime(z) = \sigma(z)$. Consequently, $\bx^\top\left(\bW_1\right)^\top\bQ_{\bx} = \left(\bQ_{\bx}\bW_1\bx\right)^\top = \sigma\left(\bW_1\bx\right)^\top$. 

Applying this simplification, the result follows:
\begin{equation*}
    \partial_t\left(\left\langle\bx,\bmu_{\varphi(\bx)}\right\rangle\right)=\frac{\eta}{n}\sum_{i=1}^n\bm_{\bx_i}^\top\left[\left(\bW_2\bQ_{\bx_i}\bQ_{\bx}\left(\bW_2\right)^\top\right)\bx^\top\bx_i + \left(\sigma\left(\bW_1\bx\right)^\top\sigma\left(\bW_1\bx_i\right)\right)\right]\mathbf{1}.
\end{equation*}

\qed

\paragraph{Proof of \Cref{thm:centroid_alignment_feature_learning}.}\label{proof:centroid_alignment_feature_learning}

By assumption, $\partial_t\left(\bm_{\bx_i}\right)=0$.
Note from \Cref{eq:centroid_innerproduct_ntk} that,
\begin{equation*}
    \partial^2_t\left(\left\langle\bx,\bmu_{\varphi(\bx)}\right\rangle\right)=\frac{\eta}{n}\sum_{i=1}^n\bm_{\bx_i}^\top\partial_t\left(\boldsymbol{\Theta}\left(\bx,\bx_i\right)\right)\mathbf{1}+\frac{\eta}{n}\sum_{i=1}^n\partial_t\left(\bm_{\bx_i}\right)^\top\boldsymbol{\Theta}\left(\bx,\bx_i\right)\mathbf{1}
\end{equation*}
for $\bx\in\R^d$.
Thus, if $\partial^2_t\left(\left\langle\bx,\bmu_{\varphi(\bx)}\right\rangle\right)\neq0$ (i.e.,  an increasing rate of change of centroid inner product) then $\partial_t\left(\boldsymbol{\Theta}\left(\bx,\bx_i\right)\right)\neq0$ for some $\bx_i$ as $\bm_{\bx_i}\neq\boldsymbol{0}$. 
Hence, an increasing rate of change in the centroid inner product implies that the deep network is in the feature-learning regime. \qed

\subsection{Recursive Feature Machines}

\paragraph{Proof of \Cref{prop:feature_matrix_linear_combination}.}\label{proof:feature_matrix_linear_combination}

Suppose that for each $i\in\{1,\dots,n\}$ we have $\nabla f(\bx_i)=\sum_{j=1}^nc_j^i\bx_j$ for $c_j^i\in\R$.
Then,
\begin{align*}
    \bM&=\frac{1}{n}\sum_{i=1}^n\nabla f(\bx_i)\left(\nabla f(\bx_i)\right)^\top\\&=\frac{1}{n}\sum_{i=1}^n\left(\sum_{j=1}^nc_j^i\bx_j\right)\left(\sum_{j=1}^nc_j^i\bx_j\right)^\top\\&=\frac{1}{n}\sum_{i,j=1}^n\tilde{c}_{ij}\bx_i\bx_j^\top.
\end{align*}
Thus, 
\begin{equation*}
    f(\bx)=\sum_{i=1}^n\balpha_i\phi_M(\bx,\bx_i)=\sum_{i=1}^n\balpha_i\exp\left(-\gamma(\bx-\bx_i)^\top\bM(\bx-\bx_i)\right),
\end{equation*}
which implies that $\nabla f(\bx_i)=\bM\bv_i$ for some $\bv_i\in\R^d$, meaning $\nabla f(\bx_i)=\sum_{j=1}^n\tilde{c}_j^i\bx_j$ for some $\tilde{c}_j^i\in\R$.
Therefore, as before, it follows that 
\begin{equation*}
    \bM^\prime:=\frac{1}{n}\sum_{i=1}^n\nabla f(\bx_i)\left(\nabla f(\bx_i)\right)^\top=\sum_{i=1}^nc_i^\prime\bx_i\bx_i^\top.
\end{equation*}

\begin{figure}[ht]
    \centering
    \begin{subfigure}[b]{0.23\textwidth}
        \includegraphics[width=\textwidth]{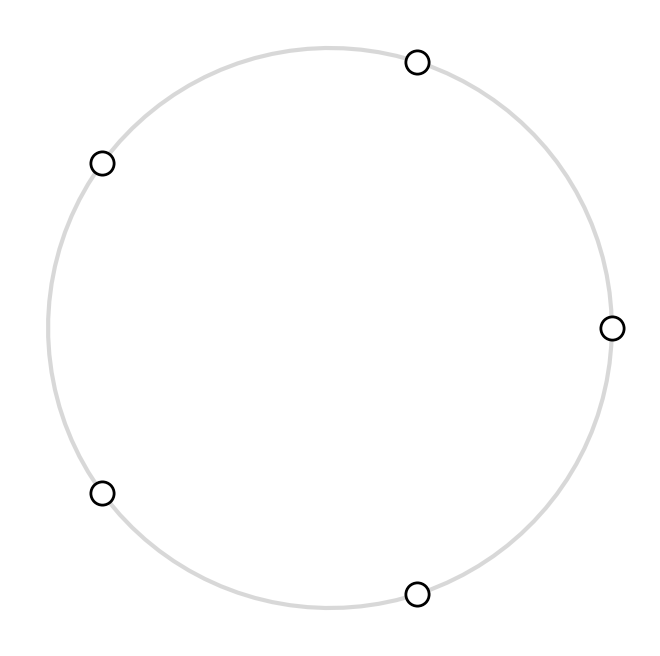}
        \caption*{\scriptsize 5 Sample Training Data}
    \end{subfigure}
    \begin{subfigure}[b]{0.23\textwidth}
        \includegraphics[width=\textwidth]{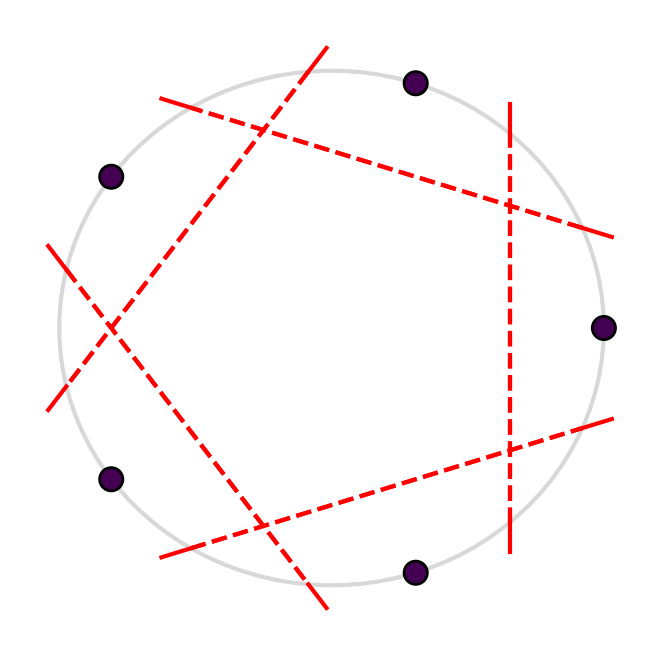}
        \caption*{\scriptsize Normal Aligned Deep Network}
    \end{subfigure}
    \hfill
    \begin{subfigure}[b]{0.23\textwidth}
        \includegraphics[width=\textwidth]{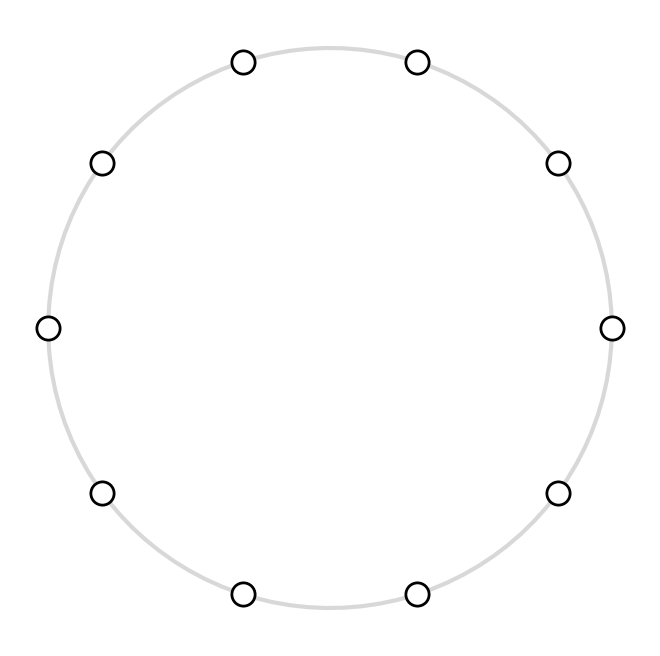}
        \caption*{\scriptsize 10 Sample Training Data}
    \end{subfigure}
    \begin{subfigure}[b]{0.23\textwidth}
        \includegraphics[width=\textwidth]{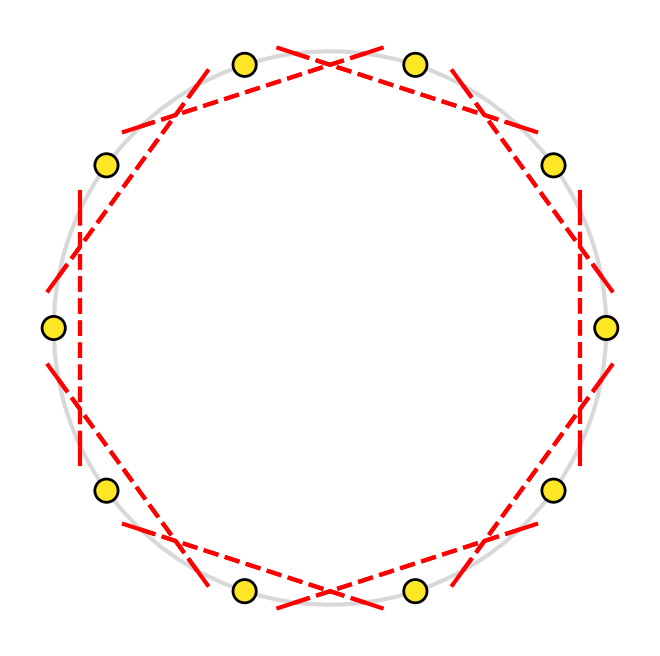}
        \caption*{\scriptsize Normal Aligned Deep Network}
    \end{subfigure}
    \caption{
    \textbf{A deep network with one hidden layer has the capacity to learn a normal-aligned solution for any training set.}
    As the density of the dataset size increases, the irregularity of the deep network — measured by weight norm — increases.
    In the first and second panels (respectively, third and fourth panels), we depict a training set of size 5 (respectively, 10) along with the level sets of the neurons of a one-hidden-layer normal-aligned deep network of width 5 (respectively, 10).
    In the second and fourth panels, the training points are colored (using the ``viridis'' scale) according to the norm of the last-layer weights required for the deep network's output to be 1.
    }
    \label{fig:construction}
\end{figure}

\begin{figure}[ht]
    \centering
    \begin{tikzpicture}
        \begin{axis}[
            width=0.6\textwidth,
            height=4.2cm,
            grid=major,
            grid style={solid, gray!25},
            xtick={0, 1, 2, 3},
            xticklabels={Layer 1, Layer 2, Layer 3, Layer 4},
            legend style={
                at={(0.5,0.95)},
                anchor=north,
                font=\scriptsize
            },
            ylabel={Centroid Alignment},
            xlabel={Module},
            ymin=0, ymax=1,
            tick label style={font=\small},
            label style={font=\small},
        ]
            \addplot[thick, color=blue, mark=*] table [x expr=\coordindex, y=ResNet18, col sep=comma] {data/ResNets.csv};
            \addlegendentry{ResNet18}
            
            \addplot[thick, color=red, mark=square*] table [x expr=\coordindex, y=ResNet50, col sep=comma] {data/ResNets.csv};
            \addlegendentry{ResNet50}
        \end{axis}
    \end{tikzpicture}
    \caption{
    \textbf{Centroid alignment increases for deep layers of deep networks.}
    Here we obtain robust ResNet18 and ResNet50 models trained on CIFAR10~\cite{krizhevskyLearningMultipleLayers2009}, and consider the centroid alignment of the map from the input space of intermediate layers to the output space.
    }
    \label{fig:resnet_alignment}
\end{figure}

\begin{figure}[ht]
    \centering
    \begin{tikzpicture}
    \begin{groupplot}[
        group style={
            group size=3 by 1,
            horizontal sep=1.2cm, 
        },
        width=0.36\textwidth,
        height=4.2cm,
        xlabel={Epochs},
        xmode=log,
        grid=major,
        tick label style={font=\footnotesize},
        label style={font=\footnotesize},
        title style={font=\bfseries\small},
        legend pos=south east, 
        legend style={font=\scriptsize}
    ]
    \nextgroupplot[
        ylabel={Test Accuracy},
        ymin=0, ymax=1.05
    ]
    \foreach \c/\clr in {6/blue, 8/red, 10/green!60!black} {
        \edef\addplotscmd{
            \noexpand\addplot[\clr, thick] table [
                x=epoch, y=test_acc_mean, col sep=comma,
                restrict expr to domain={\noexpand\thisrow{num_classes}}{\c:\c}
            ] {data/gaussian_stats.csv};
            \noexpand\addlegendentry{\c\ Classes}
        }
        \addplotscmd
    }
    \nextgroupplot[
        ylabel={Alignment},
    ]
    \foreach \c/\clr in {6/blue, 8/red, 10/green!60!black} {
        \edef\addplotscmd{
            \noexpand\addplot[\clr, thick] table [
                x=epoch, y=align_mean, col sep=comma,
                restrict expr to domain={\noexpand\thisrow{num_classes}}{\c:\c}
            ] {data/gaussian_stats.csv};
        }
        \addplotscmd
    }
    \nextgroupplot[
        ylabel={Effective Rank},
    ]
    \foreach \c/\clr in {6/blue, 8/red, 10/green!60!black} {
        \edef\addplotscmd{
            \noexpand\addplot[\clr, thick] table [
                x=epoch, y=stab_rank_mean, col sep=comma,
                restrict expr to domain={\noexpand\thisrow{num_classes}}{\c:\c}
            ] {data/gaussian_stats.csv};
        }
        \addplotscmd
    }
    \end{groupplot}
    \end{tikzpicture}
    \caption{
    \textbf{A Gaussian kernel logistic regression model exhibits normal alignment, validating \cref{thm:optimal_jacobian}.}
    Here we train a Gaussian kernel logistic regression model on a ten-dimensional classification problem with either six, eight, or ten classes.
    In the left panel, we monitor the model's test accuracy.
    In the middle panel, we monitor the model's normal alignment.
    In the right panel, we monitor the model's effective rank.
    For further experimental details, see \cref{exp:gaussian_log_reg}.
    }
    \label{fig:gaussian_log_reg}
\end{figure}

\begin{figure}[ht]
    \centering
    \begin{tikzpicture}
    
    \begin{axis}[
        name=panel1,
        width=5.0cm, height=5.0cm,
        xmode=log,
        xtick={0.1,1,10,100,1000},
        xticklabels={$0$,$1$,$10$,$100$,$1000$},
        xlabel={$\gamma$},
        ylabel={Accuracy (\%)},
        ymin=45, ymax=100,
        ytick={50,60,70,80,90,100},
        tick label style={font=\small},
        label style={font=\small},
        legend style={
            font=\tiny,
            at={(0.03,0.03)},
            anchor=south west,
            draw=gray!60,
            fill=white,
            fill opacity=0.9,
        },
        ]
        \addplot[color=blue, mark=*, mark size=2.2pt, line width=1.2pt] 
            table[x={Gamma},y={Final Clean Accuracy}] {\summarydata};
        \addlegendentry{Clean Accuracy}
         
        \addplot[color=red, mark=square*, mark size=2.0pt,line width=1.2pt]
            table[x={Gamma},y={Final PGD Accuracy}] {\summarydata};
        \addlegendentry{Robust Accuracy}
     
    \end{axis}

    \begin{axis}[
        name=panel2,
        at={(panel1.north east)},
        anchor=north west,
        xshift=1.4cm,
        width=5cm, height=5cm,
        xlabel={Epoch},
        ylabel={Jacobian norm},
        xmin=0, xmax=100,
        ymode=log,
        tick label style={font=\small},
        label style={font=\small},
    ]
     
    \addplot[color=vir1,    line width=1.2pt] table[x=epoch, y=jac_norm] {\histzero};
    \addplot[color=vir2,    line width=1.2pt] table[x=epoch, y=jac_norm] {\histone};
    \addplot[color=vir3,   line width=1.2pt] table[x=epoch, y=jac_norm] {\histten};
    \addplot[color=vir4,  line width=1.2pt] table[x=epoch, y=jac_norm] {\hstone};
    \addplot[color=vir5, line width=1.2pt] table[x=epoch, y=jac_norm] {\histthousand};
     
    \end{axis}
 
    % -------------------------------------------------------
    % PANEL 3 — Bias norm trajectory through training
    % -------------------------------------------------------
    \begin{axis}[
        name=panel3,
        at={(panel2.north east)},
        anchor=north west,
        xshift=1.2cm,
        width=5cm, height=5cm,
        xlabel={Epoch},
        ylabel={Offset norm},
        xmin=0, xmax=100,
        tick label style={font=\small},
        label style={font=\small},
        legend style={
            font=\scriptsize,
            at={(0.97,0.03)},
            anchor=south east,
            draw=gray!60,
            fill=white,
            fill opacity=0.9,
        },
    ]
     
    \addplot[color=vir1,    line width=1.2pt] table[x=epoch, y=bias_norm] {\histzero};
    \addlegendentry{$\gamma=0$}
    \addplot[color=vir2,    line width=1.2pt] table[x=epoch, y=bias_norm] {\histone};
    \addlegendentry{$\gamma=1$}
    \addplot[color=vir3,   line width=1.2pt] table[x=epoch, y=bias_norm] {\histten};
    \addlegendentry{$\gamma=10$}
    \addplot[color=vir4,  line width=1.2pt] table[x=epoch, y=bias_norm] {\hstone};
    \addlegendentry{$\gamma=100$}
    \addplot[color=vir5, line width=1.2pt] table[x=epoch, y=bias_norm] {\histthousand};
    \addlegendentry{$\gamma=1000$}
     
    \end{axis}
 
    \end{tikzpicture}
    \caption{\textbf{Optimal classifiers learn solutions with input-output Jacobians that are non-zero}.
    Here, we train a fully connected deep network on a subset of MNIST with 1000 examples across 100 epochs.
    During training, PGD attacks are applied to the batches, a weight decay of $0.0001$ is used, and a Frobenius norm penalty is applied to the loss function with weight $\gamma$.
    In the first panel, we report the model's accuracy on the test set (`Clean Accuracy') and on PGD-perturbed test samples (`Robust Accuracy') at the end of training.
    In the second and third panels, we record the average Jacobian and offset of the model on the training data, respectively.}
    \label{fig:gamma_analysis}
\end{figure}

\begin{table}[ht]
    \centering
    \caption{\small Here we state the criteria used to identify the grokked state of a DN for the experiments of \Cref{tab:acceleration}.}
    \label{tab:grokking_criteria}
    \vspace{0.5em}
    \begin{tabular}{cc}
    \toprule
        Setup & Criterion \\
        \midrule
        XOR & Test accuracy and adversarial accuracy greater than $95\%$ \\
        Sparse Parity & Test accuracy greater than $90\%$ \\
        MNIST - Cross Entropy & Test accuracy greater than $80\%$ \\
        MNIST - Squared Error & Test accuracy greater than $80\%$ \\
        Modular Addition & Test accuracy greater than $99\%$ \\
        \bottomrule
    \end{tabular}
\end{table}

\begin{table}[ht]
    \centering
    \caption{
    Here we detail the performance of RFAMs on the tasks of \citet{erickson2025tabarena} for which an $\alpha$ value less than one performed the best.
    }
    \begin{tabular}{lcccc}
         \toprule
         Task Name & Dataset Size & Number of Features & $\alpha$ & Performance Improvement \\
         \midrule
         Bank Customer Churn & $10000$ & $11$ & $0.0$ & $0.52\%$ \\
         Fitness Club & $1500$ & $7$ & $0.0$ & $2.78\%$ \\
         Give Me Some Credit & $150000$ & $11$ & $0.0$ & $11.53\%$ \\
         Website Phishing & $1353$ & $10$ & $0.0$ & $9.52\%$ \\
         Churn & $5000$ & $20$ & $0.001$ & $3.12\%$ \\
         Amazon Employee Access & $32769$ & $10$ & $0.001$ & $4.45\%$ \\
         Bank Marketing & $45211$ & $14$ & $0.1$ & $1.11\%$ \\
         APS Failure & $76000$ & $171$ & $0.1$ & $2.01\%$ \\
         Hiva Agnostic & $3845$ & $1618$ & $0.1$ & $6.20\%$ \\
         NATICUS Droid & $7491$ & $87$ & $0.1$ & $6.03\%$ \\
         SDSS17 & $78053$ & $12$ & $0.1$ & $7.22\%$ \\
         QSAR-TID-11 & $5742$ & $1025$ & $0.1$ & $0.97\%$ \\
         Seismic Bumps & $2584$ & $16$ & $0.1$ & $1.93\%$ \\
         Splice & $3190$ & $61$ & $0.1$ & $10.09\%$ \\
         Wine Quality & $6497$ & $13$ & $0.1$ & $0.04\%$ \\
         MIC & $1699$ & $13$ & $0.1$ & $5.80\%$ \\
         KddCup09 Appetency & $50000$ & $213$ & $0.1$ & $16.19\%$ \\
         \bottomrule
    \end{tabular}
    \label{tab:tabarena}
\end{table}

\begin{table}[ht]
    \centering
    \caption{
    Here we specify which hyper-parameters we considered when testing the different regularization strategies outlined in \Cref{sec:ablation}.
    In \textbf{bold} we identify which hyper-parameters were selected for the best-performance and subsequently considered in \Cref{fig:grokalign_ablation}.
    }
    \begin{tabular}{ccc}
        \toprule
        Regularization Strategy & Number of Projections & Weighting Coefficients \\
        \midrule
        $\mathcal{R}$ (\Cref{alg:input_approximation}) & $\{\mathbf{1},2,4\}$ & $\{0.0001,0.001,0.01,\mathbf{0.1},1.0\}$ \\
        $\mathcal{R}$ (\Cref{alg:output_approximation}) & $\{\mathbf{1},2,4\}$ & $\{0.0001,0.001,\mathbf{0.01},0.1\}$ \\
        $\mathcal{R}_{\perp}$ & $\{\mathbf{1},2,4\}$ & $\{0.0001,0.001,0.01,\mathbf{0.1},1.0\}$ \\
        $\mathcal{R}_{\text{Nuc}}$ & $\{\mathbf{1},2,4\}$ & $\{0.0001,0.001,\mathbf{0.01},0.1\}$ \\
        \bottomrule
    \end{tabular}
    \label{tab:hyperparameters}
\end{table}

\begin{figure}[ht]
    \centering
    \begin{tikzpicture}
        \begin{groupplot}[
            group style={
                group size=3 by 1,
                horizontal sep=1.2cm,
            },
            width=0.36\textwidth,
            height=4.2cm,
            xlabel={Epochs},
            xmode=log,
            grid=major,
            grid style={dashed, gray!30},
            enlarge x limits=false,
            legend style={
                legend columns=-1,
                at={(2.0,-0.6)},
                anchor=south
            },
            tick label style={font=\footnotesize},
            label style={font=\footnotesize},
            legend style={font=\scriptsize}
        ]

        \nextgroupplot[
            ylabel={Robust Accuracy}
        ]

        \addplot[red, thick] table [
            x=step,
            y=accuracy/adv,
            col sep=comma,
            discard if not={label}{GAi}
        ] {data/grokalign_ablation_stats.csv};
        \addlegendentry{$\mathcal{R}$ (\Cref{alg:input_approximation})}

        \addplot[blue, thick] table [
            x=step,
            y=accuracy/adv,
            col sep=comma,
            discard if not={label}{GA}
        ] {data/grokalign_ablation_stats.csv};
        \addlegendentry{$\mathcal{R}$ (\Cref{alg:output_approximation})}

        \addplot[green!60!black, thick] table [
            x=step,
            y=accuracy/adv,
            col sep=comma,
            discard if not={label}{GAiPerp}
        ] {data/grokalign_ablation_stats.csv};
        \addlegendentry{$\mathcal{R}_{\perp}$}

        \addplot[orange, thick] table [
            x=step,
            y=accuracy/adv,
            col sep=comma,
            discard if not={label}{NN}
        ] {data/grokalign_ablation_stats.csv};
        \addlegendentry{$\mathcal{R}_{\text{Nuc}}$}

        \addplot[purple, thick] table [
            x=step,
            y=accuracy/adv,
            col sep=comma,
            discard if not={label}{WD}
        ] {data/grokalign_ablation_stats.csv};
        \addlegendentry{Baseline}

        \nextgroupplot[
            ylabel={Normal Alignment}
        ]

        \addplot[blue, thick] table [x=step, y=jacobian_alignment, col sep=comma, discard if not={label}{GA}] {data/grokalign_ablation_stats.csv};
        \addplot[red, thick] table [x=step, y=jacobian_alignment, col sep=comma, discard if not={label}{GAi}] {data/grokalign_ablation_stats.csv};
        \addplot[green!60!black, thick] table [x=step, y=jacobian_alignment, col sep=comma, discard if not={label}{GAiPerp}] {data/grokalign_ablation_stats.csv};
        \addplot[orange, thick] table [x=step, y=jacobian_alignment, col sep=comma, discard if not={label}{NN}] {data/grokalign_ablation_stats.csv};
        \addplot[purple, thick] table [x=step, y=jacobian_alignment, col sep=comma, discard if not={label}{WD}] {data/grokalign_ablation_stats.csv};

        \nextgroupplot[
            ylabel={Effective Rank}
        ]

        \addplot[blue, thick] table [x=step, y=effective_local_rank, col sep=comma, discard if not={label}{GA}] {data/grokalign_ablation_stats.csv};
        \addplot[red, thick] table [x=step, y=effective_local_rank, col sep=comma, discard if not={label}{GAi}] {data/grokalign_ablation_stats.csv};
        \addplot[green!60!black, thick] table [x=step, y=effective_local_rank, col sep=comma, discard if not={label}{GAiPerp}] {data/grokalign_ablation_stats.csv};
        \addplot[orange, thick] table [x=step, y=effective_local_rank, col sep=comma, discard if not={label}{NN}] {data/grokalign_ablation_stats.csv};
        \addplot[purple, thick] table [x=step, y=effective_local_rank, col sep=comma, discard if not={label}{WD}] {data/grokalign_ablation_stats.csv};

        \end{groupplot}
    \end{tikzpicture}

    \caption{
    \textbf{GrokAlign is the most effective regularization strategy for inducing normal alignment in deep networks.}
    We compare the regularization strategies of \Cref{tab:hyperparameters}, along with a baseline that utilizes no regularization, at inducing normal alignment and reducing the effective ranks of a DN's Jacobians.
    The deep network is trained on MNIST.
    }
    \label{fig:grokalign_ablation}
\end{figure}

\end{document}